\documentclass[twoside,11pt]{article}

%

%
%
%
 \usepackage[preprint]{jmlr2e}

\usepackage{jmlr2e}
\usepackage{mathtools}
\usepackage{algorithm}
\usepackage{algorithmic}
\usepackage{xcolor}

\newtheorem{Assumption}{Assumption} 

\jmlrheading{, submitted}{}{}{July/20}{not yet}{meila00a}{Kaiyi Ji, Junjie Yang and Yingbin Liang}

\ShortHeadings{Theoretical Convergence of Multi-Step Model-Agnostic Meta-Learning}{Ji, Yang and Liang}
\firstpageno{1}

\begin{document}

\title{Theoretical Convergence of Multi-Step Model-Agnostic Meta-Learning}

\author{\name Kaiyi\ Ji \email ji.367@osu.edu \\
       \addr Department of Electrical and Computer Engineering\\
       The Ohio State University\\
      Columbus, OH 98195-4322, USA
       \AND
       \name Junjie\ Yang \email  yang.4972@osu.edu \\
       \addr Department of Electrical and Computer Engineering\\
       The Ohio State University\\
      Columbus, OH 98195-4322, USA
       \AND
       \name Yingbin\ Liang \email liang.889@osu.edu \\
       \addr Department of Electrical and Computer Engineering\\
       The Ohio State University\\
      Columbus, OH 98195-4322, USA}

\editor{}

\maketitle

\begin{abstract}
As a popular meta-learning approach, the model-agnostic meta-learning (MAML) algorithm has been widely used due to its  simplicity and effectiveness. However, the convergence of the general multi-step MAML still remains unexplored. In this paper, we develop a new theoretical framework to provide such convergence guarantee for two types of objective functions that are of interest in practice: (a) resampling case (e.g., reinforcement learning), where loss functions take the form in expectation and new data are sampled as the algorithm runs; and (b) finite-sum case (e.g., supervised learning), where loss functions take the finite-sum form with given samples. For both cases, we characterize the convergence rate and the computational complexity to attain an $\epsilon$-accurate solution for multi-step MAML in the general nonconvex setting. In particular, our results suggest that an inner-stage stepsize needs to be chosen inversely proportional to the number $N$ of inner-stage steps in order for $N$-step MAML to have guaranteed convergence.
From the technical perspective, we develop novel techniques to deal with the nested structure of the meta gradient for multi-step MAML, which can be of independent interest. 
\end{abstract}

\begin{keywords}
Computational complexity, convergence rate, finite-sum, meta-learning, multi-step MAML, nonconvex, resampling.
\end{keywords}

\section{Introduction}
Meta-learning or learning to learn~\citep{thrun2012learning,naik1992meta,bengio1991learning} is a powerful tool  for  quickly learning new tasks by using the prior experience from  related tasks. Recent works have empowered this idea with neural networks, and their proposed meta-learning algorithms have been shown to enable fast learning over unseen tasks using only a few samples 
by efficiently extracting the knowledge from a range of observed tasks~\citep{santoro2016meta,vinyals2016matching,finn2017model}.  Current meta-learning algorithms can be generally categorized into metric-learning based~\citep{koch2015siamese,snell2017prototypical},  model-based~\citep{vinyals2016matching,munkhdalai2017meta}, and optimization-based~\citep{finn2017model,nichol2018reptile,rajeswaran2019meta} approaches. Among them, optimization-based meta-learning is  a  simple and effective approach used in a wide range of domains including classification/regression~\citep{rajeswaran2019meta}, reinforcement learning~\citep{finn2017model}, robotics~\citep{al2018continuous}, federated learning~\citep{chen2018federated}, and imitation learning~\citep{finn2017one}.

Model-agnostic meta-learning (MAML)~\citep{finn2017model} is a popular optimization-based method, which is simple and compatible generally  with  models  trained with gradient descents. MAML consists of two nested stages, where the inner stage runs a few steps of (stochastic) gradient descent for each individual task, and the outer stage updates the meta parameter over all the sampled tasks. The goal of MAML is to find a good meta initialization $w^*$ based on the observed tasks such that for a new task, starting from this $w^*$, a few (stochastic) gradient steps suffice to find a good model parameter.  Such an algorithm has been demonstrated to have superior empirical performance~\citep{antoniou2019train,grant2018recasting,zintgraf2018caml,nichol2018first}. Recently, the theoretical convergence of MAML has also been studied. Specifically, \cite{finn2019online} extended MAML to the online setting, and analyzed the regret for the strongly convex objective function. 
\cite{fallah2020convergence} provided an analysis for one-step MAML for general nonconvex functions, where each inner stage takes  a single stochastic gradient descent (SGD) step. 

In practice, the MAML training often takes {\em multiple} SGD steps at the inner stage, for example in \cite{finn2017model,antoniou2019train} for supervised learning and in~\cite{finn2017model,fallah2020provably} for reinforcement learning, in order to attain a higher test accuracy (i.e., better generalization performance) even at a price of higher computational cost. However, the theoretical convergence of such {\em multi-step} MAML algorithms has not been established yet. 
In fact, several mathematical challenges will arise in the theoretical analysis if the inner stage of MAML takes multiple steps. First, the meta gradient of multi-step MAML has a nested and recursive structure, which requires the performance analysis of an optimization path over a nested structure. In addition, multi-step update also yields a complicated bias error in the Hessian estimation as well as the statistical correlation between the Hessian and gradient estimators, both of which cause further difficulty in the analysis of the meta gradient. {\em The main contribution of this paper lies in the development of a new theoretical framework for analyzing the general {\em multi-step} MAML with techniques for handling the above challenges. }

\subsection{Main Contributions}

We develop a new theoretical framework, under which we characterize the convergence rate and the computational complexity to attain an $\epsilon$-accurate solution for {\em multi-step} MAML in the general nonconvex setting. Specifically, for the resampling case where each iteration needs sampling of fresh data (e.g., in reinforcement learning), our analysis enables to decouple the Hessian approximation error from the gradient approximation error based on a novel bound on the distance between two different inner optimization paths, which facilitates the analysis of the overall convergence of MAML. For the finite-sum case where the objective function is based on pre-assigned samples (e.g., supervised learning), we develop novel techniques to handle the difference between two losses over the training and  test sets in the  analysis. 

Our analysis provides a guideline for choosing the inner-stage stepsize at the order of $\mathcal{O}(1/N)$ and shows that  
$N$-step MAML is guaranteed to converge with the gradient and Hessian computation complexites  growing only linearly with $N$, which is consistent with the empirical observations in~\citealt{antoniou2019train}.
In addition, for problems where Hessians are small, e.g., most classification/regression meta-learning problems~\citep{finn2017model}, we show that the inner stepsize $\alpha$ can be set  larger while still maintaining the convergence,
which explains the empirical findings for MAML training in~\citealt{finn2017model,rajeswaran2019meta}.

\subsection{Related Work}

There are generally three types of meta-learning algorithms, which include optimization-based \citep{finn2017model,nichol2018reptile,rajeswaran2019meta}, metric-learning based \citep{koch2015siamese,snell2017prototypical}, and model-based \citep{vinyals2016matching,munkhdalai2017meta} approaches. Since the focus of this paper is on optimization-based meta-learning, we next discuss this type of meta-learning algorithms in more detail.

\vspace{0.2cm}

\noindent{\bf Optimization-based meta-learning.} Optimization-based meta-learning approaches have been widely used due to its simplicity and efficiency~\citep{li2017meta,ravi2016optimization,finn2017model}. 
 As a pioneer along this line, MAML~\citep{finn2017model} aims to find an initialization such that gradient descent from it achieves fast adaptation. Many follow-up studies~\citep{grant2018recasting,finn2019online,jerfel2018online,finn2018meta,finn2018probabilistic,mi2019meta,liu2019taming,rothfuss2019promp,foerster2018dice,fallah2020convergence,raghu2020rapid, collins2020distribution} have extended MAML from different perspectives. 
 For example, \cite{finn2019online} provided a follow-the-meta-leader extension of MAML for online learning.  
 Alternatively to meta-initialization algorithms such as MAML, meta-regularization approaches aim to learn a good bias for a regularized empirical risk  
minimization problem for intra-task learning~\citep{alquier2017regret, denevi2018learning,denevi2018incremental,denevi2019learning,rajeswaran2019meta,balcan2019provable,zhou2019efficient}. 
\cite{balcan2019provable} formalized a connection between meta-initialization and meta-regularization from an online learning perspective.  
\cite{zhou2019efficient} proposed an efficient meta-learning approach based on a  minibatch proximal update.~\cite{raghu2020rapid} proposed an efficient variant of MAML named ANIL (Almost No Inner Loop) by adapting only a small subset (e.g., head) of neural network parameters in the inner loop. 

Various Hessian-free MAML algorithms have been proposed to avoid  the costly computation of second-order derivatives, which include but not limited to FOMAML \citep{finn2017model}, Reptile \citep{nichol2018reptile}, ES-MAML \citep{song2020simple}, HF-MAML \citep{fallah2020convergence}. In particular, FOMAML \citep{finn2017model} omits all second-order derivatives in its meta-gradient computation, HF-MAML \citep{fallah2020convergence} estimates the meta gradient in one-step MAML using Hessian-vector product approximation.  This paper focuses on the first MAML algorithms, but the techniques here can be extended to analyze the Hessian-free multi-step MAML.

\vspace{0.2cm}

\noindent{\bf Optimization theory for meta-learning.} Theoretical property of MAML was initially established in \cite{finn2018meta}, which showed that MAML is a universal learning algorithm approximator under certain conditions. Then {\em MAML-type algorithms} have been studied recently from the optimization perspective, where the convergence rate and computation complexity is typically characterized. \cite{finn2019online} analyzed online MAML for a strongly convex objective function under a bounded-gradient assumption. 
 \cite{fallah2020convergence} developed a convergence analysis for one-step MAML for a general nonconvex objective in the  resampling case. Our study here provides a new convergence analysis for {\em multi-step} MAML in the {\em nonconvex} setting for both the resampling  and  finite-sum cases. 
 
Since the initial version of this manuscript was posted in arXiv, there have been a few studies on multi-step MAML more recently. \cite{wang2020global2} studied the global optimality of MAML, and characterized the  optimality gap of the stationary points. \cite{wang2020global} established the global convergence of MAML with over-parameterized deep neural networks. \cite{kim2020multi} 
proposed an efficient extension of multi-step MAML by gradient reuse in the inner loop. 
\cite{ji2020convergence} analyzed the convergence and complexity performance of multi-step ANIL algorithm, which is an efficient simplification of MAML by adapting only partial parameters in the inner loop. 
We emphasize that the study here is the first along the line of studies on multi-step MAML.
 
 \vspace{0.1cm}
 
{\em Another type of meta-learning algorithms} has also been studied as a bi-level optimization problem. \cite{rajeswaran2019meta} proposed a meta-regularization variant of MAML named iMAML via bilevel optimization, and analyzed its convergence by assuming  that the regularized empirical risk minimization problem in the inner optimization stage is strongly convex.  
\cite{likhosherstov2020ufo} studied the convergence properties of a class of first-order bilevel optimization algorithms.  


\vspace{0.2cm}

\noindent{\bf Statistical theory for meta-learning.}  
\cite{zhou2019efficient} statistically demonstrated the importance of prior hypothesis in reducing the excess risk via a regularization approach.  
\cite{du2020few} studied few-shot learning from a representation learning perspective, and showed that representation learning can provide a sufficient rate improvement in both linear regression and learning neural networks. \cite{tripuraneni2020provable} studied a multi-task linear regression problem with shared low-dimensional representation, and proposed a sample-efficient algorithm with performance guarantee. \cite{arora2020provable} proposed a representation learning approach for imitation learning via bilevel optimization, and demonstrated the improved sample complexity brought by representation learning. 



\section{Problem Setup}
In this paper, we study the convergence of the multi-step MAML algorithm. We consider two types of objective functions that are commonly used in practice: (a) {\bf resampling case}~\citep{finn2017model,fallah2020convergence}, where loss functions take the form  in expectation and new data are sampled as the algorithm runs; and (b) {\bf finite-sum case}~\citep{antoniou2019train}, where loss functions take the finite-sum form with given samples. The resampling case occurs often in reinforcement learning where data are continuously sampled as the algorithm iterates, whereas the finite-sum case typically occurs in classification problems where the datasets are already sampled in advance. In Appendix~\ref{ggpopssasdasdax}, we provide examples for these two types of problems. 

\subsection{Resampling Case: Problem Setup and Multi-Step MAML}
Suppose a set $\mathcal{T} = \{\mathcal{T}_i, i\in \mathcal{I}\}$ of tasks are available for learning and tasks are sampled based on a probability distribution $p(\mathcal{T})$ over the task set. Assume that each task $\mathcal{T}_i$  is associated with a loss  $l_i(w): \mathbb{R}^d \rightarrow \mathbb{R}$ parameterized by $w$.

The goal of multi-step MAML is to find a good initial parameter $w^*$ such that after observing a new task, a few gradient descend steps starting from such a point $w^*$ can efficiently approach the optimizer (or a stationary point) of the corresponding loss function. Towards this end, multi-step MAML consists of two nested stages, where the inner stage consists of  {\em multiple} steps of (stochastic) gradient descent for each individual tasks, and the outer stage updates the meta parameter over all the sampled tasks. More specifically, at each inner stage, each $\mathcal{T}_i$ initializes at the meta parameter, i.e., $\widetilde w^i_0 := w$, and runs $N$ {\em gradient descent} steps as 
\begin{align}\label{gd_w}
&\widetilde w^i_{j+1} = \widetilde w^i_j - \alpha \nabla l_i(\widetilde w^i_j), \quad j = 0,1,..., N-1 .
\end{align}
Thus, the loss of task $\mathcal{T}_i$ after the $N$-step inner stage iteration is given by $l_i(\widetilde w^i_N)$, where $\widetilde w^i_N$ depends on the meta parameter $w$ through the iteration updates in \eqref{gd_w}, and can hence be written as $\widetilde w^i_N(w)$.  We further define $\mathcal{L}_i(w):=l_i(\widetilde w^i_N(w))$, and hence the overall meta objective is given by
\begin{align}\label{objective}
\min_{w\in\mathbb{R}^d} \mathcal{L}(w):= \mathbb{E}_{i\sim p(\mathcal{T})}[ \mathcal{L}_i(w)]  :=  \mathbb{E}_{i\sim p(\mathcal{T})} [l_i(\widetilde w^i_N(w))].
\end{align}
Then the outer stage of meta update is a gradient decent step to optimize the above objective function. Using the chain rule, we provide a simplified form (see Appendix~\ref{simplifeid} for its derivations) of  gradient $\nabla \mathcal{L}_i(w)$ by   
	\begin{align}\label{nablaF}
	\nabla \mathcal{L}_i(w) = \bigg[ \prod_{j=0}^{N-1}(I-\alpha \nabla^2 l_i(\widetilde w^i_{j}))\bigg]\nabla l_i(\widetilde w^i _{N}),
	\end{align}
	where $\widetilde w^i_{0} = w$ for all tasks. 
	Hence, the {\em full gradient descent} step of the outer stage for~\eqref{objective} can be written as  
\begin{align}\label{true_meta_gd}
w_{k+1}  = w_k - \beta_k \mathbb{E}_{i\sim p(\mathcal{T})}\bigg[\prod_{j=0}^{N-1}(I-\alpha \nabla^2 l_i(\widetilde w^i_{k,j}))\bigg]\nabla l_i(\widetilde w^i _{k,N}),
\end{align}
where the index $k$ is added to $\widetilde w^i_{j}$ in \eqref{nablaF} to denote that these parameters are at the $k^{th}$ iteration of the meta parameter $w$.

\begin{algorithm}[t]
	\caption{Multi-step MAML in the resampling case} 
	\label{alg:online}
	\begin{algorithmic}[1]
		\STATE {\bfseries Input:}  Initial parameter $w_0$, inner stepsize $\alpha>0$	
		\WHILE{not done}
		\STATE{Sample $B_k\subset \mathcal{I}$ of i.i.d. tasks by distribution $p(\mathcal{T})$}
		\FOR{all tasks $\mathcal{T}_i$ in $B_k$}
		\FOR{$j = 0, 1,...,N-1$}
		\STATE{Sample a training set $S^i_{k,j}$ 
			\\ Update { $w^i_{k, j+1} = w^i_{k, j} - \alpha \nabla l_i(w^i_{k,j}; S^i_{k,j})$}
		}
		\ENDFOR
		\ENDFOR
		\STATE{Sample $T^i_k$ and $D_{k,j}^i$ and compute {\small$\widehat G_i(w_k)$} through~\eqref{eq:metagrad_est}.}
 	\STATE{update  
$		w_{k+1}= w_k - \beta_k\frac{\sum_{i\in B_k}\widehat G_i(w_k) }{|B_k|}$.
			\\ Update $k \leftarrow k+1$}
		\ENDWHILE
	\end{algorithmic}
	\end{algorithm}

The inner- and outer-stage updates of MAML given in \eqref{gd_w} and \eqref{true_meta_gd} involve the gradient $\nabla l_i(\cdot)$ 
 and the Hessian $\nabla^2 l_i(\cdot)$ of the loss function $l_i(\cdot)$, which takes the form of the expectation over the distribution of data samples as given by 
\begin{align}\label{fiw}
l_i(\cdot) = \mathbb{E}_{\tau} l_i(\cdot\,; \tau),
\end{align} 
where $\tau$ represents the data sample. In practice, these two quantities based on the population loss function are estimated by samples. In specific, each task $\mathcal{T}_i$ samples a batch $\Omega$ of data under the current parameter $w$, and uses $\nabla l_i(\cdot\,; \Omega):= \frac{\sum_{\tau \in \Omega} \nabla l_i(\cdot\,; \tau)}{|\Omega|} $  and $\nabla^2 l_i(\cdot\,; \Omega):= \frac{\sum_{\tau \in \Omega} \nabla^2 l_i(\cdot\,; \tau)}{|\Omega|} $ as {\em unbiased} estimates of  the gradient $\nabla l_i(\cdot)$ and the Hessian $\nabla^2 l_i(\cdot)$, respectively. 

\vspace{0.01cm}

For practical multi-step MAML as shown in Algorithm~\ref{alg:online}, at the $k^{th}$ outer stage, we sample a set $ B_k$ of tasks. Then,  at the inner stage, each task $\mathcal{T}_i\in B_k$ samples a training set {\small$S_{k,j}^i$} for each iteration $j$ in the inner stage, uses  {\small$\nabla l_i(w^i_{k,j};S^i_{k,j})$} as an estimate of {\small$\nabla l_i(\widetilde w^i_{k,j})$} in \eqref{gd_w}, and runs a SGD update as
\begin{align}\label{es:up}
w^i_{k, j+1} = w^i_{k, j} - \alpha \nabla l_i(w^i_{k,j};S^i_{k,j}), \quad  j=0,..,N-1,
\end{align} 
where the initialization parameter $w^i_{k,0}=w_k$ for all $i \in B_k$. 

At the $k^{th}$ outer stage, we draw a  batch {\small$T^i_k$} and {\small$D_{k,j}^i$} of data samples  independent from each other and both independent from {\small$S^i_{k,j}$} and
use {\small$\nabla l_i(w_{k,N}^i;T^i_k)$} and  {\small$ \nabla^2 l_i(w_{k,j}^i; D_{k,j}^i)$} to estimate {\small$\nabla l_i(\widetilde w^i _{k,N})$} and {\small$\nabla^2 l_i(\widetilde w^i_{k,j})$} in~\eqref{true_meta_gd}, respectively.
Then, the meta parameter $w_{k+1}$ at the outer stage is updated by a SGD step as shown in line $10$ of Algorithm~\ref{alg:online}, 
where the estimated gradient $\widehat G_i(w_k)$  has a form of 
\begin{align}\label{eq:metagrad_est}
\widehat G_i(w_k) =\prod_{j=0}^{N-1}\big(I - \alpha \nabla^2 l_i\big(w_{k,j}^i;D_{k,j}^i\big)\big)\nabla l_i(w_{k,N}^i; T^i_k).
\end{align}
For simplicity, we suppose the sizes of { $S_{k,j}^i$, $D_{k,j}^i$} and  { $T_k^i$}  are $S$, $D$ and $T$ in this paper.

\subsection{Finite-Sum Case: Problem Setup and Multi-Step MAML}\label{app:finitesum}
In the finite-sum case, each task $\mathcal{T}_i$ is {\em pre-assigned} with a support/training sample set $S_i$ and a query/test sample set $T_i$. Differently from the resampling case, these sample sets are fixed and no additional fresh data are sampled as the algorithm runs. The goal here is to learn an initial parameter $w$ such that for each task $i$, after $N$ {\em gradient descent} steps on data from $S_i$ starting from this $w$, we can find a parameter $w_N$ that performs well on the test data set $T_i$. Thus, each task $\mathcal{T}_i$ is associated with two fixed loss functions { $l_{S_i}(w):= \frac{1}{|S_i|}\sum_{\tau \in S_i} l_i(w; \tau)$} and { $l_{T_i}(w):= \frac{1}{|T_i|}\sum_{\tau \in T_i} l_i(w; \tau)$} with a finite-sum structure, where $l_i(w; \tau)$ is the loss on a single sample point $\tau$ and a parameter $w$.  Then, the meta objective function takes the form of 
\begin{align}\label{objective2}
\min_{w\in\mathbb{R}^d} \mathcal{L}(w):= \mathbb{E}_{i\sim p(\mathcal{T})} [\mathcal{L}_i(w)] =  \mathbb{E}_{i\sim p(\mathcal{T})} [l_{T_i}(\widetilde w^i_N)], 
\end{align}
where $\widetilde w^i_N$ is obtained by 
\begin{align}\label{innerfinite}
&\widetilde w^i_{j+1} = \widetilde w^i_j - \alpha \nabla l_{S_i}(\widetilde w^i_j),  \quad j = 0, 1,..., N-1 \, \text{ with }\, \widetilde w^i_0 := w.
\end{align} 
Similarly to the resampling case, we define  the expected losses  $l_S(w)=\mathbb{E}_{i} l_{S_i}(w)$  and $l_T(w)=\mathbb{E}_{i} l_{T_i}(w)$, and the meta gradient step of the outer stage for \eqref{objective2} can be written as 
\begin{align}\label{fulll_updd}
w_{k+1}  = w_k - \beta_k \mathbb{E}_{i\sim p(\mathcal{T})}\prod_{j=0}^{N-1}(I-\alpha \nabla^2 l_{S_i}(\widetilde w^i_{k,j}))\nabla l_{T_i}(\widetilde w^i _{k,N}),
\end{align}
where the index $k$ is added to $\widetilde w^i_{j}$ in \eqref{innerfinite} to denote that these parameters are at the $k^{th}$ iteration of the meta parameter $w$.
 
\begin{algorithm}[t]
	\caption{Multi-step MAML in the finite-sum case} 
	\label{alg:offline}
	\begin{algorithmic}[1]
		\STATE {\bfseries Input:}  Initial parameter $w_0$, inner stepsize $\alpha>0$	
		\WHILE{not done}
		\STATE{Sample  $B_k\subset \mathcal{I}$ of i.i.d. tasks  by distribution $p(\mathcal{T})$}
		\FOR{all tasks $\mathcal{T}_i$ in $ B_k$}
		\FOR{$j = 0, 1,...,N-1$}
		\STATE{
			Update { $w^i_{k, j+1} = w^i_{k, j} - \alpha \nabla l_{S_i}\big(w^i_{k,j}\big)$}
		}
		\ENDFOR
		\ENDFOR
		\STATE{
			Update $w_{k+1}= w_k -\frac{\beta_k}{|B_k|} \sum_{i\in B_k}\widehat G_i(w_k)$
			\\ Update $k = k+1$}
		\ENDWHILE
	\end{algorithmic}
\end{algorithm}

As shown in Algorithm~\ref{alg:offline}, MAML in the finite-sum case has a nested structure similar to that in the resampling case except that it does not sample fresh data at each iteration. 
In the inner stage, MAML performs a sequence of {\em full gradient descent steps} (instead of stochastic gradient steps as in the resampling case) for each task $i \in B_k$ given  by 
\begin{align}\label{giniteoo}
w^i_{k, j+1} = w^i_{k, j} - \alpha \nabla l_{S_i}\big(w^i_{k,j}\big), \text{ for } j=0,....,N-1
\end{align}  
where $w_{k,0}^i = w_k$ for all $i\in B_k$. As a result, the parameter $w_{k,j}$ (which denotes the parameter due to the full gradient update) in the update step~\eqref{giniteoo} is equal to $\widetilde w_{k,j}$ in \eqref{fulll_updd} for all $j=0,...,N$. 


At the outer-stage iteration, the meta optimization of MAML performs a SGD step as shown in line 9 of Algorithm~\ref{alg:offline},  where $ \widehat G_i(w_k) $ is given by 
\begin{align}\label{offline:obj}
 \widehat G_i(w_k) &=\prod_{j=0}^{N-1}(I - \alpha \nabla^2 l_{S_i}(w_{k,j}^i))\nabla l_{T_i}(w_{k,N}^i).
\end{align}

Compared with the resampling case, the biggest difference for analyzing Algorithm~\ref{alg:offline} in the finite-sum case is that the losses $ l_{S_i}(\cdot)$ and $ l_{T_i}(\cdot)$ used in the inner and outer stages respectively are different from each other,  whereas in the resampling case, they both are equal to $l_i(\cdot)$ which takes the expectation over the corresponding samples. 
Thus, the convergence analysis for the finite-sum case  requires to develop different techniques.  For simplicity, we assume that the sizes of all $B_k$ are $B$.

\section{Convergence of Multi-Step MAML in Resampling Case} \label{theory:online}
In this section, we first make some basic assumptions for the meta loss functions in Section~\ref{sub:assum}, and then describe several challenges in analyzing the multi-step MAML in Section~\ref{sec:challenge}, and then present several  properties of the meta gradient in Section~\ref{se:opro}, and finally provide the convergence and complexity results for multi-step MAML in Section~\ref{main:online}. 
\subsection{Basic Assumptions}\label{sub:assum}
We adopt the following standard assumptions 
\citep{fallah2020convergence,rajeswaran2019meta}.
\begin{Assumption}\label{assum:smooth}
	The loss $l_i(\cdot)$ of task $\mathcal{T}_i$ given by~\eqref{fiw} satisfies
	\begin{enumerate}
		\item The loss $l_i(\cdot)$  is bounded below, i.e., { $ \inf_{w\in\mathbb{R}^d} l_i(w) > -\infty$}.  \label{item1}		

		\item $\nabla l_i(\cdot)$ is $L_i$-Lipschitz, i.e., for any $w,u\in\mathbb{R}^d$,     \label{item2}
$		\|\nabla l_i(w)-\nabla l_i(u)\| \leq L_i\|w-u\|$.
		\item  $\nabla^2 l_i(\cdot)$ is  $\rho_i$-Lipschitz, i.e., for any $w,u\in\mathbb{R}^d$,
$		\|\nabla^2 l_i(w)-\nabla^2 l_i(u)\| \leq \rho_i\|w-u\|$.
	\end{enumerate}
\end{Assumption}

By the definition of the objective function $\mathcal{L}(\cdot)$ in~\eqref{objective},  item~\ref{item1} of  Assumption~\ref{assum:smooth} implies that $\mathcal{L}(\cdot)$ is bounded below. In addition, 
item~\ref{item2}  implies {\small$\|\nabla^2 l_i(w)\|\leq L_i$} for any $w\in\mathbb{R}^d$. 

For notational convenience,   we take $L=\max_i L_i$ and $\rho = \max_{i} \rho_i$. 
The following assumptions impose the  bounded-variance conditions on $\nabla l_i(w)$, $\nabla l_i(w; \tau)$ and $\nabla^2 l_i(w; \tau)$.
\begin{Assumption}\label{a2}
	The stochastic gradient $\nabla l_i(\cdot)$ (with $i$ uniformly randomly chosen from set $\mathcal{I}$) has bounded variance, i.e., there exists a constant $\sigma>0$ such that, for any $w\in\mathbb{R}^d$,
	\begin{align*}
	\mathbb{E}_{i}\|\nabla l_i(w) - \nabla l(w)\|^2 \leq \sigma^2, 
	\end{align*}
	where the expected loss function $l(w):=\mathbb{E}_{i}l_i(w)$.
\end{Assumption}

\begin{Assumption}\label{a3}
	For any $w\in\mathbb{R}^d$ and $i\in\mathcal{I}$, there exist constants $\sigma_g, \sigma_H>0$ such that  
	\begin{align*}
	&\mathbb{E}_{\tau}\|\nabla l_i(w; \tau)- \nabla l_i(w)\|^2 \leq \sigma_g^2\;\text{ and }\;
\mathbb{E}_{\tau}\|\nabla^2 l_i(w;\tau)- \nabla^2 l_i(w)\|^2 \leq \sigma_H^2.
	\end{align*}
\end{Assumption}

Note that the above assumptions are  made only on individual loss functions $l_i(\cdot)$ rather than on the total loss $\mathcal{L}(\cdot)$, because some  conditions do not hold for $\mathcal{L}(\cdot)$, as shown later. 

\subsection{Challenges of Analyzing Multi-Step MAML}\label{sec:challenge}

Several new challenges arise when we analyze the convergence of {\em multi}-step MAML (with $N\ge 2$) compared to the one-step case (with $N=1$).

First, each iteration of  the meta parameter affects the overall objective function via a nested structure of $N$-step SGD optimization paths over all tasks. 
Hence, our analysis of the convergence of such a meta parameter 
needs to characterize the nested structure and the recursive updates. 

Second, the meta gradient estimator  {\small$\widehat G_i(w_k)$}  given in \eqref{eq:metagrad_est} involves {\small$\nabla^2 l_i(w_{k,j}^i;D_{k,j}^i)$} for $ j=1,...,N-1$, which are all {\em biased} estimators of {\small$\nabla^2 l_i(\widetilde w^i_{k,j})$} in terms of the randomness over $D_{k,j}^i$.
This is because $ w^i_{k,j}$ is  a stochastic estimator of $\widetilde w^i_{k,j}$ obtained via  random training sets $S_{k,t}^i, t=0,...,j-1$ along an $N$-step SGD optimization path in the inner stage. In fact, such a bias error occurs only for multi-step MAML with $N\geq 2$ (which equals zero for $N=1$), and requires additional efforts to handle. 

Third, both the Hessian term {\small$\nabla^2 l_i(w_{k,j}^i; D_{k,j}^i)$} for $j=2,...,N-1$ and the gradient term {\small$\nabla l_i(w_{k,N}^i; T^i_k)$} in the meta gradient estimator {\small$\widehat G_i(w_k)$} given in \eqref{eq:metagrad_est} depend on the sample sets  $S_{k,i}^i$ used for inner stage iteration to obtain $w_{k,N}^i$, and hence they are statistically {\em correlated} even conditioned on $w_k$. Such complication also occurs only for multi-step MAML with $N\geq 2$ and requires new treatment (the two terms are independent for $N=1$).



\subsection{Properties of Meta Gradient}\label{se:opro}
Differently from the conventional  gradient whose corresponding loss is evaluated directly at the current parameter $w$, the meta  gradient has a more complicated nested structure with respect to $w$, because its loss is evaluated at the final output of the inner optimization stage, which is $N$-step SGD updates.
As a result, analyzing the meta gradient is very different and more challenging compared to analyzing the conventional  gradient. In this subsection, we establish some important properties of the meta gradient which are useful for characterizing the convergence of multi-step MAML.  


Recall that  $\nabla \mathcal{L}(w) = \mathbb{E}_{i\sim p(\mathcal{T})} [\nabla \mathcal{L}_i(w)]$ with  $\nabla \mathcal{L}_i(w)$ given by~\eqref{nablaF}. The following proposition characterizes the Lipschitz property of the gradient $\nabla \mathcal{L}(\cdot)$. 
\begin{proposition}\label{th:lipshiz}
	 Suppose that  Assumptions~\ref{assum:smooth},~\ref{a2} and~\ref{a3} hold. For any $w,u\in\mathbb{R}^d$, we have 
	\begin{align*}
	\|\nabla \mathcal{L}(w) - \nabla \mathcal{L}(u)\|  \leq \big( (1+\alpha L) ^{2N}L + C_\mathcal{L} \mathbb{E}_{i}\|\nabla l_i(w)\|\big) \|w-u\|,
	\end{align*}
	where $C_\mathcal{L}$ is a positive constant  given by{
	\begin{align}\label{clcl}
	C_\mathcal{L}= \big(  (1+\alpha L) ^{N-1}\alpha \rho + \frac{\rho}{L}  (1+\alpha L) ^N ( (1+\alpha L) ^{N-1} -1) \big)  (1+\alpha L) ^N.
	\end{align}}
\end{proposition}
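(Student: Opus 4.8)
The plan is to exploit the product-times-vector structure of the per-task meta gradient. Writing $A^i_j(w) := I - \alpha\nabla^2 l_i(\widetilde w^i_j(w))$ and $P_i(w) := \prod_{j=0}^{N-1} A^i_j(w)$, so that $\nabla\mathcal{L}_i(w) = P_i(w)\nabla l_i(\widetilde w^i_N(w))$, I would first reduce the population statement to a per-task one: since $\nabla\mathcal{L}(w) = \mathbb{E}_i[\nabla\mathcal{L}_i(w)]$, the triangle inequality gives $\|\nabla\mathcal{L}(w) - \nabla\mathcal{L}(u)\| \le \mathbb{E}_i\|\nabla\mathcal{L}_i(w) - \nabla\mathcal{L}_i(u)\|$, so it suffices to bound each $\|\nabla\mathcal{L}_i(w) - \nabla\mathcal{L}_i(u)\|$ by the same expression with $\|\nabla l_i(w)\|$ in place of $\mathbb{E}_i\|\nabla l_i(w)\|$. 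I would then split
\begin{align*}
\nabla\mathcal{L}_i(w) - \nabla\mathcal{L}_i(u) = \big(P_i(w) - P_i(u)\big)\nabla l_i(\widetilde w^i_N(w)) + P_i(u)\big(\nabla l_i(\widetilde w^i_N(w)) - \nabla l_i(\widetilde w^i_N(u))\big),
\end{align*}
deliberately keeping $\nabla l_i(\widetilde w^i_N(w))$ (rather than the $u$-version) so that the surviving gradient-norm factor ends up evaluated at $w$, matching the claimed bound.

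Next I would assemble three elementary auxiliary estimates for the inner optimization path, each obtained by induction over the $N$ gradient-descent steps using Assumption~\ref{assum:smooth}. First, the path is Lipschitz in its initialization: $\|\widetilde w^i_j(w) - \widetilde w^i_j(u)\| \le (1+\alpha L)^j\|w-u\|$, which follows from $\widetilde w^i_{j+1}(w) - \widetilde w^i_{j+1}(u) = (\widetilde w^i_j(w)-\widetilde w^i_j(u)) - \alpha(\nabla l_i(\widetilde w^i_j(w)) - \nabla l_i(\widetilde w^i_j(u)))$ together with the $L$-Lipschitzness of $\nabla l_i$. Second, the gradient norm grows controllably along the path: $\|\nabla l_i(\widetilde w^i_j(w))\| \le (1+\alpha L)^j\|\nabla l_i(w)\|$, since one inner step moves the iterate by $\alpha\nabla l_i(\widetilde w^i_j)$ and hence inflates the gradient norm by at most a factor $(1+\alpha L)$. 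Third, the product matrix is bounded: $\|P_i(w)\| \le (1+\alpha L)^N$, using $\|\nabla^2 l_i\|\le L$. With the latter two estimates, the second term in the decomposition is controlled at once by $(1+\alpha L)^N \cdot L(1+\alpha L)^N\|w-u\| = (1+\alpha L)^{2N}L\|w-u\|$, which is exactly the first summand of the stated bound.

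The main obstacle is the first term, which demands a Lipschitz estimate for the product of Hessian-dependent matrices $P_i(w) - P_i(u)$. Here I would telescope the product,
\begin{align*}
P_i(w) - P_i(u) = \sum_{k=0}^{N-1}\Big(\prod_{j<k}A^i_j(w)\Big)\big(A^i_k(w) - A^i_k(u)\big)\Big(\prod_{j>k}A^i_j(u)\Big),
\end{align*}
bound each surviving block of factors by $(1+\alpha L)^{N-1}$, and bound the middle difference via $\|A^i_k(w) - A^i_k(u)\| = \alpha\|\nabla^2 l_i(\widetilde w^i_k(w)) - \nabla^2 l_i(\widetilde w^i_k(u))\| \le \alpha\rho(1+\alpha L)^k\|w-u\|$, using the $\rho$-Lipschitzness of $\nabla^2 l_i$ and the first auxiliary estimate. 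Summing the geometric series $\sum_{k=0}^{N-1}(1+\alpha L)^k = \tfrac{(1+\alpha L)^N-1}{\alpha L}$ yields $\|P_i(w)-P_i(u)\| \le \tfrac{\rho}{L}(1+\alpha L)^{N-1}((1+\alpha L)^N-1)\|w-u\|$, and multiplying by $\|\nabla l_i(\widetilde w^i_N(w))\| \le (1+\alpha L)^N\|\nabla l_i(w)\|$ gives the coefficient $\tfrac{\rho}{L}(1+\alpha L)^{N-1}((1+\alpha L)^N-1)(1+\alpha L)^N$ on $\|\nabla l_i(w)\|\,\|w-u\|$.

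It then remains only to reconcile this coefficient with $C_\mathcal{L}$ in \eqref{clcl}. Writing $x = 1+\alpha L$ and $\alpha L = x-1$, a short simplification shows $\tfrac{\rho}{L}\big(x^{N-1}\alpha L + x^N(x^{N-1}-1)\big)x^N = \tfrac{\rho}{L}\big(x^{2N-1}-x^{N-1}\big)x^N = \tfrac{\rho}{L}x^{N-1}(x^N-1)x^N$, so the two expressions agree. Adding the two summands and taking $\mathbb{E}_i$ produces precisely the claimed inequality. The only genuinely delicate point is keeping the geometric bookkeeping of the telescoping sum consistent with the index-$k$ dependence $(1+\alpha L)^k$ inherited from the path-Lipschitz estimate; the rest is routine induction and algebra.
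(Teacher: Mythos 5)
Your proposal is correct and follows essentially the same route as the paper: the same split of $\nabla\mathcal{L}_i(w)-\nabla\mathcal{L}_i(u)$ keeping the $w$-evaluated gradient in the product-difference term, the same three auxiliary estimates (path Lipschitzness in the initialization, gradient-norm growth $(1+\alpha L)^j$, and the product bound $(1+\alpha L)^N$), and the same control of the Hessian-product difference. The only cosmetic difference is that you bound $P_i(w)-P_i(u)$ by an explicit telescoping sum and geometric series, whereas the paper sets up a recursion $V(m)\le(1+\alpha L)V(m-1)+\alpha\rho(1+\alpha L)^{2m-2}\|w-u\|$ and unrolls it; your closed-form coefficient $\tfrac{\rho}{L}(1+\alpha L)^{N-1}((1+\alpha L)^N-1)(1+\alpha L)^N$ agrees exactly with $C_\mathcal{L}$ in \eqref{clcl}.
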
 
The proof of Proposition~\ref{th:lipshiz} handles the first challenge described in Section~\ref{sec:challenge}. More specifically,  
we bound the differences between $\widetilde w_j^i$ and $\widetilde u_j^i$ along two separate paths $(\widetilde w_j^i, j =0,....,N)$ and $(\widetilde u_j^i, j =0,....,N )$, and then connect these differences to the distance $\|w-u\|$. Proposition~\ref{th:lipshiz} shows that the objective $\mathcal{L}(\cdot)$ has a gradient-Lipschitz parameter $$L_w =  (1+\alpha L) ^{2N}L + C_\mathcal{L}\mathbb{E}_{i}\|\nabla l_i(w)\|,$$ 
which can  be unbounded due to the fact that $\nabla l_i(w)$ may be unbounded. 
Similarly to~\cite{fallah2020convergence}, we  use 
\begin{align}\label{hatlw}
\widehat L_{w_k} =   (1+\alpha L) ^{2N}L + \frac{ C_\mathcal{L}\sum_{i\in B_k^\prime}\|\nabla l_i(w_k; D_{L_k}^i)\|}{|B_k^\prime|}
\end{align}
to estimate $L_{w_k}$ at the meta parameter $w_k$, where we  {\em independently} sample the data sets $B_k^\prime$ and $D_{L_k}^i$. As will be shown in Theorem~\ref{th:mainonline}, we  set the meta stepsize $\beta_k$ to be inversely proportional to { $\widehat L_{w_k} $} to handle the  possibly unboundedness. In the experiments, we find that the gradients $\nabla l_i(w_k), k\geq 0$ are well bounded during the optimization process, and hence a constant outer-stage stepsize is sufficient in practice. 

We next characterize several estimation properties of  the meta gradient estimator  $\widehat G_i(w_k)$ in \eqref{eq:metagrad_est}. 
Here, we address the second and third challenges described in Section~\ref{sec:challenge}.  We first quantify how far  $w_{k,j}^i$ is away from $\widetilde w_{k,j}^i$, and then provide upper bounds on the first- and second-moment distances between $w_{k,j}^i$ and $\widetilde w_{k,j}^i$ for all { $j= 0,..., N$} as below. 
\begin{proposition}\label{le:distance}
	Suppose that  Assumptions~\ref{assum:smooth},~\ref{a2} and~\ref{a3} hold. Then, for any  $j=0,..., N$ and $i \in B_k$, we have
\begin{itemize}

	\item {\bf First-moment :} $\mathbb{E}(\|w_{k,j}^i - \widetilde w_{k,j}^i \| \, | w_k) \leq \big( (1+\alpha L) ^j -1 \big) \frac{\sigma_g }{L \sqrt{S}}$.

\item  {\bf Second-moment:} $\mathbb{E}(\|w_{k,j}^i - \widetilde w_{k,j}^i \|^2\, | w_k) \leq \big((1+\alpha L +2\alpha^2 L^2)^j -1 \big) \frac{\alpha \sigma_g ^2}{(1+\alpha L)L   S}$.
	\end{itemize}
\end{proposition}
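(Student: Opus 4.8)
The plan is to track the per-step error $e_j := w_{k,j}^i - \widetilde w_{k,j}^i$ between the stochastic inner path \eqref{es:up} and the exact population inner path \eqref{gd_w}, both of which are initialized at the common point $w_k$, so that $e_0 = 0$. Subtracting the two updates gives the one-step recursion $e_{j+1} = e_j - \alpha\big(\nabla l_i(w_{k,j}^i; S_{k,j}^i) - \nabla l_i(\widetilde w_{k,j}^i)\big)$, and I would split the bracket into a \emph{drift} term $g_j := \nabla l_i(w_{k,j}^i) - \nabla l_i(\widetilde w_{k,j}^i)$ and a zero-mean \emph{noise} term $\xi_j := \nabla l_i(w_{k,j}^i; S_{k,j}^i) - \nabla l_i(w_{k,j}^i)$. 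By the $L$-smoothness in Assumption~\ref{assum:smooth} the drift obeys $\|g_j\| \le L\|e_j\|$, while Assumption~\ref{a3} with mini-batch size $S$ controls the noise through $\mathbb{E}(\|\xi_j\|^2 \mid w_{k,j}^i) \le \sigma_g^2/S$; crucially, conditioned on the inner randomness $S_{k,0}^i,\dots,S_{k,j-1}^i$ up to step $j$, the fresh draw $S_{k,j}^i$ makes $\xi_j$ mean-zero and independent of both $e_j$ and $g_j$.

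For the \textbf{first moment}, I would take norms pathwise and apply the triangle inequality, $\|e_{j+1}\| \le \|e_j - \alpha g_j\| + \alpha\|\xi_j\| \le (1+\alpha L)\|e_j\| + \alpha\|\xi_j\|$, then take the conditional expectation given $w_k$ and use Jensen's inequality $\mathbb{E}\|\xi_j\| \le (\mathbb{E}\|\xi_j\|^2)^{1/2} \le \sigma_g/\sqrt{S}$. With $a_j := \mathbb{E}(\|e_j\| \mid w_k)$ this produces the scalar recursion $a_{j+1} \le (1+\alpha L)\,a_j + \alpha\sigma_g/\sqrt{S}$ with $a_0 = 0$, whose closed form is exactly the stated geometric sum $a_j \le \big((1+\alpha L)^j - 1\big)\sigma_g/(L\sqrt{S})$.

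For the \textbf{second moment}, the idea is to abandon the lossy pathwise triangle inequality and instead expand the square and exploit the martingale structure. Writing $e_{j+1} = (e_j - \alpha g_j) - \alpha\xi_j$ and taking the conditional expectation over $S_{k,j}^i$ given the earlier inner randomness, the cross term $\langle e_j - \alpha g_j,\, \xi_j\rangle$ vanishes because $\xi_j$ is mean-zero and independent of $e_j - \alpha g_j$, leaving $\mathbb{E}(\|e_{j+1}\|^2 \mid \cdot) = \|e_j - \alpha g_j\|^2 + \alpha^2\,\mathbb{E}(\|\xi_j\|^2 \mid \cdot)$. Expanding $\|e_j - \alpha g_j\|^2 = \|e_j\|^2 - 2\alpha\langle e_j, g_j\rangle + \alpha^2\|g_j\|^2$ and bounding the inner product and quadratic term via smoothness, together with $\mathbb{E}\|\xi_j\|^2 \le \sigma_g^2/S$, yields a linear recursion $b_{j+1} \le q\,b_j + c$ for $b_j := \mathbb{E}(\|e_j\|^2 \mid w_k)$ with $b_0 = 0$, which solves to the claimed geometric bound. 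I expect the second-moment step to be the main obstacle, for two reasons. First, I must be scrupulous about the conditioning so that the cross term genuinely cancels and the noise contributions accumulate as an independent sum, rather than silently picking up statistical correlations across the $N$ inner iterations (the very difficulty flagged in Section~\ref{sec:challenge}). Second, the precise constants matter: the plain expansion naturally produces a base of order $(1+\alpha L)^2$, and matching the stated base $1+\alpha L+2\alpha^2 L^2$ together with the additive term that collapses the geometric series to $\alpha\sigma_g^2/\big((1+\alpha L)LS\big)$ requires a careful and tight accounting of the $\langle e_j,g_j\rangle$ and $\|g_j\|^2$ contributions. Verifying that these bookkeeping steps land exactly on the asserted closed form is where the real work lies.
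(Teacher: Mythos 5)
Your first-moment argument is exactly the paper's: the same triangle-inequality recursion $\mathbb{E}\|e_{j}\|\le(1+\alpha L)\mathbb{E}\|e_{j-1}\|+\alpha\sigma_g/\sqrt{S}$ with $e_0=0$, telescoped into the geometric sum. For the second moment your route is genuinely different from, and slightly sharper than, the paper's. You peel off the zero-mean noise $\xi_j$ and kill the cross term $\langle e_j-\alpha g_j,\xi_j\rangle$ exactly by conditioning on $\bar S^i_j$, which yields the recursion $b_{j+1}\le(1+\alpha L)^2 b_j+\alpha^2\sigma_g^2/S$ and hence the bound $\big((1+\alpha L)^{2j}-1\big)\tfrac{\alpha\sigma_g^2}{(2+\alpha L)LS}$. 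The paper instead keeps $\nabla l_i(w^i_{k,m-1};S^i_{k,m-1})-\nabla l_i(\widetilde w^i_{k,m-1})$ as a single block: it uses unbiasedness only to replace the cross term with its deterministic counterpart (which it then bounds by Cauchy--Schwarz, discarding the sign), and applies $\|a+b\|^2\le 2\|a\|^2+2\|b\|^2$ to the squared term. That is precisely where the paper's base $1+2\alpha L+2\alpha^2L^2$ and increment $2\alpha^2\sigma_g^2/S$ come from. Your version trades the Young-inequality factor of $2$ for the exact martingale cancellation, and since $(1+\alpha L)^2\le 1+2\alpha L+2\alpha^2L^2$ and $2+\alpha L\ge 1+\alpha L$, your bound implies the one the paper's proof actually derives.

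One caution about your stated worry over "matching the stated base $1+\alpha L+2\alpha^2L^2$": do not spend effort trying to land on that exact expression. The paper's own proof derives the base $1+2\alpha L+2\alpha^2L^2$ (and that is the constant reused downstream in $C_{\text{\normalfont squ}_2}$ of \eqref{para:seq12}), so the proposition statement appears to carry a typo dropping the factor $2$ on $\alpha L$. Your bound dominates the corrected statement; it does not literally sit below the typo'd one for all $j$ (your geometric base $(1+\alpha L)^2$ exceeds $1+\alpha L+2\alpha^2L^2$ when $\alpha L<1$), but that is an artifact of the misprint rather than a gap in your argument.
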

Proposition~\ref{le:distance} shows that we can effectively upper-bound the point-wise distance between two paths  by choosing $\alpha$ and $S$ properly. Based on Proposition~\ref{le:distance}, we provide an upper bound on the first-moment estimation error of { $\widehat G_i(w_k)$}. 
\begin{proposition}\label{th:first-est} 
	Suppose   Assumptions~\ref{assum:smooth},~\ref{a2} and~\ref{a3} hold, and define constants 
	\begin{align}\label{ppolp}
  C_{{\text{\normalfont err}}_1} =  (1+\alpha L)^{2N} \sigma_g, \;\;C_{{\text{\normalfont err}}_2}  = \frac{ (1+\alpha L) ^{4N}\rho \sigma_g}{\big (2-  (1+\alpha L)^{2N}\big )L^2} .
	\end{align}
	Let $e_k := \mathbb{E}[\widehat G_i(w_k)] - \nabla \mathcal{L}(w_k) $
	 be the estimation error. If  the inner stepsize $\alpha < (2^{\frac{1}{2N}} - 1)/L$, then conditioning on $w_k$, we have 
	\begin{align}\label{es:original}
	\|e_k\| \leq \frac{C_{{\text{\normalfont err}}_1} }{\sqrt{S}} + \frac{C_{{\text{\normalfont err}}_2}  }{\sqrt{S}} (\|\nabla \mathcal{L}(w_k)\| + \sigma).
	\end{align} 
\end{proposition}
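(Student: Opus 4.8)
The plan is to exploit the independence of the outer-stage samples $D^i_{k,j}$ and $T^i_k$ from the inner-stage samples $S^i_{k,j}$ (and from one another) to collapse the estimation error into a pure \emph{path-mismatch bias}, and then control that bias by telescoping together with Proposition~\ref{le:distance}. Concretely, I would first condition on the entire inner SGD path $(w^i_{k,0},\dots,w^i_{k,N})$, i.e.\ on $\{S^i_{k,j}\}$, and on the task $i$. Given the path, the matrices $I-\alpha\nabla^2 l_i(w^i_{k,j};D^i_{k,j})$ and the vector $\nabla l_i(w^i_{k,N};T^i_k)$ are mutually independent and unbiased, so the expectation of their product factorizes into the product of expectations:
\begin{align*}
\mathbb{E}\big[\widehat G_i(w_k)\,\big|\,\text{path},i,w_k\big]=\prod_{j=0}^{N-1}\big(I-\alpha\nabla^2 l_i(w^i_{k,j})\big)\,\nabla l_i(w^i_{k,N}).
\end{align*}
This single observation disposes of both the Hessian \emph{bias} and the Hessian--gradient \emph{correlation} challenges of Section~\ref{sec:challenge}, since the only discrepancy that survives is that the population meta-gradient is evaluated along the SGD path $(w^i_{k,j})$ rather than along the exact GD path $(\widetilde w^i_{k,j})$ defining $\nabla\mathcal{L}_i(w_k)$ in~\eqref{nablaF}.

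Next I would telescope this per-task discrepancy. Writing $A_j=I-\alpha\nabla^2 l_i(w^i_{k,j})$ and $\widetilde A_j=I-\alpha\nabla^2 l_i(\widetilde w^i_{k,j})$, the difference splits as
\begin{align*}
\Big(\prod_{j=0}^{N-1}A_j\Big)\big(\nabla l_i(w^i_{k,N})-\nabla l_i(\widetilde w^i_{k,N})\big)+\Big(\prod_{j=0}^{N-1}A_j-\prod_{j=0}^{N-1}\widetilde A_j\Big)\nabla l_i(\widetilde w^i_{k,N}).
\end{align*}
By Assumption~\ref{assum:smooth} each factor has operator norm at most $(1+\alpha L)$, so $\|\prod_j A_j\|\le(1+\alpha L)^N$; the gradient mismatch is bounded by $L\|w^i_{k,N}-\widetilde w^i_{k,N}\|$, and the standard product-difference identity gives $\|\prod_j A_j-\prod_j\widetilde A_j\|\le(1+\alpha L)^{N-1}\alpha\rho\sum_{m=0}^{N-1}\|w^i_{k,m}-\widetilde w^i_{k,m}\|$. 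Taking the expectation over the inner randomness and invoking the \emph{first-moment} bound of Proposition~\ref{le:distance}, namely $\mathbb{E}(\|w^i_{k,m}-\widetilde w^i_{k,m}\|\mid w_k)\le((1+\alpha L)^m-1)\frac{\sigma_g}{L\sqrt S}$, turns both pieces into explicit geometric sums in $(1+\alpha L)$ divided by $\sqrt S$. The first term already produces the $C_{\mathrm{err}_1}/\sqrt S$ contribution.

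For the second term I still carry the factor $\|\nabla l_i(\widetilde w^i_{k,N})\|$. Here I would use that $\alpha<(2^{1/(2N)}-1)/L$ forces $\alpha L<1$, so every $I-\alpha\nabla^2 l_i(\widetilde w^i_{k,j})$ is invertible with inverse-norm at most $(1-\alpha L)^{-1}$; inverting the product in~\eqref{nablaF} yields $\|\nabla l_i(\widetilde w^i_{k,N})\|\le(1-\alpha L)^{-N}\|\nabla\mathcal{L}_i(w_k)\|$, and the same smoothness argument relates $\|\nabla l_i(\widetilde w^i_{k,N})\|$ to $\|\nabla l_i(w_k)\|$. Averaging over the task $i$ and applying Assumption~\ref{a2} (so that $\mathbb{E}_i\|\nabla l_i(w_k)\|\le\|\nabla l(w_k)\|+\sigma$) finally replaces the per-task gradient magnitude by $\|\nabla\mathcal{L}(w_k)\|+\sigma$. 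The condition $(1+\alpha L)^{2N}<2$ keeps the denominator $2-(1+\alpha L)^{2N}$ positive and lets me upper-bound $(1-\alpha L)^{-2N}\le(2-(1+\alpha L)^{2N})^{-1}$, collapsing the geometric sums into the stated $C_{\mathrm{err}_2}$.

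I expect the main obstacle to be this last step: the decoupling and telescoping are conceptually clean, but turning the raw geometric sums into exactly the closed-form constants $C_{\mathrm{err}_1},C_{\mathrm{err}_2}$ of~\eqref{ppolp} — and in particular converting the task-averaged individual-gradient norm $\mathbb{E}_i\|\nabla l_i(\cdot)\|$ into the meta-gradient norm $\|\nabla\mathcal{L}(w_k)\|$ plus the variance proxy $\sigma$ via the Hessian-product invertibility — is where the bookkeeping is delicate and where the stepsize restriction $\alpha<(2^{1/(2N)}-1)/L$ is genuinely used.
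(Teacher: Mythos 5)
Your first three steps reproduce the paper's argument essentially verbatim: conditioning on the inner SGD path so that the independence of $D^i_{k,j}$ and $T^i_k$ factorizes $\mathbb{E}[\widehat G_i(w_k)]$ into $\prod_{j}(I-\alpha\nabla^2 l_i(w^i_{k,j}))\nabla l_i(w^i_{k,N})$, splitting the residual bias into a gradient-mismatch term and a Hessian-product mismatch term, and telescoping the latter against the first-moment bound of Proposition~\ref{le:distance} --- this is exactly the paper's computation of $e_k$ and its recursion for the quantity $R(N)$, and it correctly produces the $C_{\mathrm{err}_1}/\sqrt{S}$ contribution.

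The gap is in your final step, the conversion of $\mathbb{E}_i\|\nabla l_i(w_k)\|$ into $\tfrac{1}{2-(1+\alpha L)^{2N}}(\|\nabla\mathcal{L}(w_k)\|+\sigma)$. Inverting the product in~\eqref{nablaF} gives only the \emph{per-task} inequality $\|\nabla l_i(\widetilde w^i_{k,N})\|\le(1-\alpha L)^{-N}\|\nabla\mathcal{L}_i(w_k)\|$, so after averaging over $i$ you are left with $\mathbb{E}_i\|\nabla\mathcal{L}_i(w_k)\|$ rather than $\|\nabla\mathcal{L}(w_k)\|$; Jensen's inequality points the wrong way, no bounded-variance condition is assumed for the per-task meta-gradients $\nabla\mathcal{L}_i$ (the paper explicitly notes that its assumptions are imposed on $l_i$ rather than on $\mathcal{L}$ precisely because such conditions may fail for $\mathcal{L}$), and you cannot invert through the expectation over $i$ since each task carries a different Hessian product. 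The paper instead closes this step with a self-bounding inequality at the averaged level (Lemma~\ref{le:lL}): it first proves the per-task estimate $\|\nabla l_i(w)-\nabla\mathcal{L}_i(w)\|\le C_l\|\nabla l_i(w)\|$ with $C_l=(1+\alpha L)^{2N}-1$ via the mean value theorem and Lemma~\ref{le:prd} (Lemma~\ref{le:fF}), then writes $\|\nabla l(w)\|\le\|\nabla\mathcal{L}(w)\|+C_l\,\mathbb{E}_i\|\nabla l_i(w)\|\le\|\nabla\mathcal{L}(w)\|+C_l(\|\nabla l(w)\|+\sigma)$ using Assumption~\ref{a2}, and solves for $\|\nabla l(w)\|$; this is where the denominator $1-C_l=2-(1+\alpha L)^{2N}$ and hence the stepsize restriction genuinely enter. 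You correctly sensed that the restriction is used at exactly this point, but the invertibility mechanism you propose does not by itself deliver the averaged-level bound; you need Lemma~\ref{le:fF} and Lemma~\ref{le:lL}, or an equivalent comparison between $\nabla l_i$ and $\nabla\mathcal{L}_i$, to finish.
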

Note that 
the estimation error for the multi-step case shown in Proposition~\ref{th:first-est} involves a term $\mathcal{O}\big(\frac{\|\nabla \mathcal{L}(w_k)\|}{\sqrt{S}}\big)$, which cannot be avoided due to the Hessian approximation error caused by the randomness over the samples sets $S_{k,j}^i$. Somewhat interestingly, our later analysis shows that this term does not affect the final convergence rate if we choose the size $S$ properly. 
The following proposition provides an upper-bound on the second moment  of the meta gradient estimator $\widehat G_i(w_k)$.
\begin{proposition}\label{th:second} 
	Suppose that Assumptions \ref{assum:smooth}, \ref{a2} and \ref{a3} hold. Define  constants 
 \begin{align}\label{para:seq12}
	C_{\text{\normalfont squ}_1} &= 3\Big(  \frac{\alpha^2 \sigma_H^2}{D} + (1+\alpha L)^2  \Big)^N \sigma_g^2, 	\;\;
	C_{\text{\normalfont squ}_3} =\frac{2C_{\text{\normalfont squ}_1}  (1+\alpha L)^{2N}}{(2- (1+\alpha L)^{2N})^2\sigma_g^2}, \nonumber
	\\       C_{\text{\normalfont squ}_2}  &= C_{\text{\normalfont squ}_1} \big((1+2\alpha L +2\alpha^2L^2)^N-1\big)	\alpha L  (1+\alpha L)^{-1}.
	\end{align}
	If  the inner stepsize $\alpha < (2^{\frac{1}{2N}} - 1)/L$, then conditioning on $w_k$, we have 
	\begin{align}\label{bigmans}
	\mathbb{E}\|\widehat G_i(w_k)\|^2 \leq& \frac{	C_{\text{\normalfont squ}_1}}{T} + \frac{	C_{\text{\normalfont squ}_2} }{S} +	C_{\text{\normalfont squ}_3} \left(  \|\nabla \mathcal{L}(w_k)\|^2 +\sigma^2   \right).
	\end{align}
\end{proposition}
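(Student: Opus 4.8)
The plan is to peel off the three independent sources of randomness in $\widehat G_i(w_k)$ one layer at a time, always conditioning on $w_k$. Writing $\widehat G_i(w_k)=\big[\prod_{j=0}^{N-1}(I-\alpha\widehat H_{k,j}^i)\big]\widehat g$ with $\widehat H_{k,j}^i:=\nabla^2 l_i(w_{k,j}^i;D_{k,j}^i)$ and $\widehat g:=\nabla l_i(w_{k,N}^i;T_k^i)$, I would first condition on the task index $i$, the inner iterates $\{w_{k,j}^i\}$ (equivalently the inner samples $S_{k,j}^i$), and on $T_k^i$, so that $\widehat g$ is frozen and only the mutually independent Hessian batches $D_{k,j}^i$ remain random. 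The key observation is that each $\widehat H_{k,j}^i$ is an \emph{unbiased} estimate of $\nabla^2 l_i(w_{k,j}^i)$ with $\mathbb{E}\|\widehat H_{k,j}^i-\nabla^2 l_i(w_{k,j}^i)\|^2\le \sigma_H^2/D$ and $\|\nabla^2 l_i\|\le L$. Processing the matrix product from right to left on the vector $\widehat g$ and expanding the squared norm at each step, the cross term vanishes by unbiasedness, giving the clean per-step contraction
\begin{align*}
\mathbb{E}\big[\|(I-\alpha\widehat H_{k,j}^i)u\|^2\,\big|\,u\big]\le \Big((1+\alpha L)^2+\tfrac{\alpha^2\sigma_H^2}{D}\Big)\|u\|^2 .
\end{align*}
Telescoping this recursion over the $N$ factors yields $\mathbb{E}[\|\widehat G_i(w_k)\|^2\mid i,\{w_{k,j}^i\},T_k^i]\le \big((1+\alpha L)^2+\alpha^2\sigma_H^2/D\big)^N\|\widehat g\|^2$, which is exactly the factor $C_{\text{squ}_1}/(3\sigma_g^2)$ appearing in every constant.

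Next I would bound $\|\widehat g\|^2$ by splitting it into three pieces and applying $\|a+b+c\|^2\le 3(\|a\|^2+\|b\|^2+\|c\|^2)$ (the source of the factor $3$ in $C_{\text{squ}_1}$):
\begin{align*}
\widehat g=\nabla l_i(\widetilde w_{k,N}^i)+\big(\nabla l_i(w_{k,N}^i)-\nabla l_i(\widetilde w_{k,N}^i)\big)+\big(\widehat g-\nabla l_i(w_{k,N}^i)\big).
\end{align*}
Taking expectation over the remaining randomness, the third term contributes $\sigma_g^2/T$ by the bounded-variance Assumption~\ref{a3} with batch size $T$ (yielding the $C_{\text{squ}_1}/T$ term); the second term is controlled by the $L$-Lipschitzness of $\nabla l_i$ together with the second-moment path bound of Proposition~\ref{le:distance} at $j=N$, i.e. $L^2\,\mathbb{E}\|w_{k,N}^i-\widetilde w_{k,N}^i\|^2$, which after multiplication by $3\big((1+\alpha L)^2+\alpha^2\sigma_H^2/D\big)^N$ reproduces exactly $C_{\text{squ}_2}/S$.

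The remaining and most delicate term is $\mathbb{E}_i\|\nabla l_i(\widetilde w_{k,N}^i)\|^2$, which must be converted into $\|\nabla\mathcal{L}(w_k)\|^2+\sigma^2$. Here I would use the exact relation $\nabla\mathcal{L}_i(w_k)=P_i\,\nabla l_i(\widetilde w_{k,N}^i)$ with $P_i:=\prod_{j=0}^{N-1}(I-\alpha\nabla^2 l_i(\widetilde w_{k,j}^i))$, rewrite it as $\nabla l_i(\widetilde w_{k,N}^i)=\nabla\mathcal{L}_i(w_k)+(I-P_i)\nabla l_i(\widetilde w_{k,N}^i)$, and invert it through the contraction estimate $\|I-P_i\|\le (1+\alpha L)^{2N}-1$. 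This is precisely where the stepsize restriction $\alpha<(2^{1/2N}-1)/L$ enters: it guarantees $(1+\alpha L)^{2N}<2$, so $\|I-P_i\|<1$ and the Neumann-type bound $\|\nabla l_i(\widetilde w_{k,N}^i)\|\le \|\nabla\mathcal{L}_i(w_k)\|/(2-(1+\alpha L)^{2N})$ holds, giving the denominator $(2-(1+\alpha L)^{2N})^2$ of $C_{\text{squ}_3}$ after squaring and averaging over $i$. Finally I would pass from $\mathbb{E}_i\|\nabla\mathcal{L}_i(w_k)\|^2$ to $\|\nabla\mathcal{L}(w_k)\|^2+\sigma^2$ through a bias--variance decomposition combined with Assumption~\ref{a2}, picking up the factor $(1+\alpha L)^{2N}$ in the numerator. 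I expect this last conversion to be the main obstacle: unlike a standard stochastic gradient, $\nabla\mathcal{L}_i$ carries both a task-dependent Hessian-product matrix $P_i$ and a gradient evaluated at the task-dependent inner output $\widetilde w_{k,N}^i$, so controlling its across-task variance by the single-point variance bound of Assumption~\ref{a2} requires carefully transporting the variance back to the common meta-point $w_k$ along the inner gradient-descent path.
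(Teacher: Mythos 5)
Your first three steps reproduce the paper's argument almost verbatim: conditioning on $w_k$, the inner samples, and the task, you factor the second moment into a product term over the independent Hessian mini-batches (each contributing $\operatorname{Var}+\text{bias}^2\le \alpha^2\sigma_H^2/D+(1+\alpha L)^2$) times the conditional second moment of $\nabla l_i(w_{k,N}^i;T_k^i)$, and you split the latter into the same three pieces with the same factor of $3$, handling the middle piece with the second-moment bound of Proposition~\ref{le:distance}. Up to that point the proposal is correct and identical in structure to the paper's proof.

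The gap is in the final conversion of $\mathbb{E}_i\|\nabla l_i(\widetilde w_{k,N}^i)\|^2$ into $C_{\text{\normalfont squ}_3}(\|\nabla\mathcal{L}(w_k)\|^2+\sigma^2)$, and you correctly sense that this is the delicate part but do not resolve it. Your route inverts the per-task identity $\nabla\mathcal{L}_i(w_k)=P_i\nabla l_i(\widetilde w_{k,N}^i)$ to get $\|\nabla l_i(\widetilde w_{k,N}^i)\|\le\|\nabla\mathcal{L}_i(w_k)\|/(2-(1+\alpha L)^{2N})$, which then requires bounding $\mathbb{E}_i\|\nabla\mathcal{L}_i(w_k)\|^2$ by $\|\nabla\mathcal{L}(w_k)\|^2+\sigma^2$ up to constants. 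But Assumption~\ref{a2} controls the across-task variance of $\nabla l_i(w_k)$, not of $\nabla\mathcal{L}_i(w_k)$; the latter carries the task-dependent matrix $P_i$ and the task-dependent inner iterate, so its variance is not directly available, and any attempt to bound it sends you back to $\mathbb{E}_i\|\nabla l_i(w_k)\|^2$ anyway. The paper orders the steps differently to avoid this: it first transports the gradient back to the common point per task via Lemma~\ref{le:jiw}, $\|\nabla l_i(\widetilde w_{k,N}^i)\|^2\le(1+\alpha L)^{2N}\|\nabla l_i(w_k)\|^2$ (this is where the factor $(1+\alpha L)^{2N}$ in $C_{\text{\normalfont squ}_3}$ comes from), then applies the bias--variance decomposition $\mathbb{E}_i\|\nabla l_i(w_k)\|^2\le\|\nabla l(w_k)\|^2+\sigma^2$ where Assumption~\ref{a2} actually applies, and only then converts the \emph{aggregate} quantity $\|\nabla l(w_k)\|$ to $\|\nabla\mathcal{L}(w_k)\|$ via Lemma~\ref{le:lL}, which gives $\|\nabla l(w_k)\|\le\frac{1}{1-C_l}\|\nabla\mathcal{L}(w_k)\|+\frac{C_l}{1-C_l}\sigma$ with $1-C_l=2-(1+\alpha L)^{2N}$; the stepsize restriction enters there, exactly as in your Neumann-type bound but at the aggregate rather than the per-task level. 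Squaring and using $\frac{2x^2}{(1-x)^2}+1\le\frac{2}{(1-x)^2}$ yields the stated $C_{\text{\normalfont squ}_3}$. To complete your proof you should reorder the last step along these lines rather than routing through $\nabla\mathcal{L}_i$.
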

By choosing set sizes $D,T,S$ and the inner stepsize $\alpha$ properly, the factor $C_{\text{\normalfont squ}_3}$ in the second-moment error bound in \eqref{bigmans} can be made at a constant level and the first two error terms $\frac{C_{\text{\normalfont squ}_1}}{T} $ and $ \frac{C_{\text{\normalfont squ}_2} }{S}$ can be made sufficiently small so that the variance of the meta gradient estimator can be well controlled in the convergence analysis, as shown later. 
\subsection{Main Convergence Result}\label{main:online}
By using the properties of the meta gradient established in Section~\ref{se:opro}, we provide the convergence rate for multi-step MAML of Algorithm~\ref{alg:online} in the following theorem. 
\begin{theorem}\label{th:mainonline}  
	Suppose that  Assumptions~\ref{assum:smooth},~\ref{a2} and~\ref{a3} hold. 
	Set the meta stepsize $\beta_k = \frac{1}{C_\beta \widehat L_{w_k}} $ with  $\widehat L_{w_k}$  given by~\eqref{hatlw}, where  $|B_k^\prime| > \frac{4C^2_{\mathcal{L}}\sigma^2}{3(1+\alpha L)^{4N}L^2}$ and $|D_{L_k}^i| > \frac{64\sigma^2_g C_\mathcal{L}^2}{(1+\alpha L)^{4N}L^2}$ for all $i \in B_k^\prime$.  Define  $\chi =  \frac{(2- (1+\alpha L)^{2N}) (1+\alpha L)^{2N}L}{C_{\mathcal{L}}} + \sigma $ and 
	\begin{align}\label{para:com}
	\xi =& \frac{6}{C_\beta L}\big( \frac{1}{5} +\frac{2}{C_\beta}   \big)\big( C^2_{{\text{\normalfont err}}_1} +C^2_{{\text{\normalfont err}}_2}\sigma^2\big), \quad \phi = \frac{2}{C_\beta^2 L} \Big(  \frac{C_{\text{\normalfont squ}_1}}{T}  +  \frac{C_{\text{\normalfont squ}_2}}{S} + C_{\text{\normalfont squ}_3} \sigma^2 \Big) \nonumber
	\\\theta = &  \frac{2\big(2- (1+\alpha L)^{2N}\big)}{C_\beta C_{\mathcal{L}}}\Big(   \frac{1}{5} - \big(\frac{3}{5} + \frac{6}{C_\beta}\big)\frac{C^2_{{\text{\normalfont err}}_2} }{S} - \frac{C_{\text{\normalfont squ}_3}}{C_\beta B} - \frac{2}{C_\beta}\Big)  
	\end{align}
    where $C_{{\text{\normalfont err}}_1},C_{{\text{\normalfont err}}_2}$ are given in \eqref{ppolp} and $C_{\text{\normalfont squ}_1},C_{\text{\normalfont squ}_2},C_{\text{\normalfont squ}_3} $ are given in \eqref{para:seq12}. 
	Choose the inner stepsize $\alpha < (2^{\frac{1}{2N}} - 1)/L$, and choose $C_\beta, S$ and $B$ such that $\theta >0$. 
	Then, Algorithm~\ref{alg:online} finds a solution $w_{\zeta}$ such that 
	\begin{align}\label{c:result}
	\mathbb{E}\|\nabla \mathcal{L}(w_\zeta) \|  \leq &\frac{\Delta}{\theta }\frac{1}{K} +    \frac{\xi}{\theta}\frac{1}{S} +  \frac{\phi }{\theta}\frac{1}{B}  + \sqrt{\frac{\chi}{2} }\sqrt{\frac{\Delta}{\theta }\frac{1}{K}
	+    \frac{\xi}{\theta}\frac{1}{S} +  \frac{\phi }{\theta}\frac{1}{B}},
	\end{align}
	where $\Delta = \mathcal{L}(w_0) - \mathcal{L}^*$ with $\mathcal{L}^*=\inf_{w\in\mathbb{R}^d} \mathcal{L}(w)$.
\end{theorem}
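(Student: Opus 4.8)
The plan is to carry out a nonconvex first-order descent analysis for the outer SGD recursion $w_{k+1}=w_k-\beta_k\widehat G(w_k)$, where $\widehat G(w_k):=\frac{1}{|B_k|}\sum_{i\in B_k}\widehat G_i(w_k)$, assembling Propositions~\ref{th:lipshiz}--\ref{th:second} to control the three nonstandard features of multi-step MAML: the \emph{unbounded} gradient-Lipschitz parameter $L_{w_k}=(1+\alpha L)^{2N}L+C_\mathcal{L}\mathbb{E}_i\|\nabla l_i(w_k)\|$, the \emph{bias} $e_k$ of the meta-gradient estimator, and its \emph{second moment}. Because the smoothness parameter is unbounded, the outer stepsize $\beta_k=1/(C_\beta\widehat L_{w_k})$ is itself random and gradient-dependent; the heart of the argument is to run the descent step with this estimated stepsize and then extract a first-power gradient bound.

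First I would write the descent inequality. Applying the smoothness bound of Proposition~\ref{th:lipshiz} along $[w_k,w_{k+1}]$ gives
\begin{align*}
\mathcal{L}(w_{k+1})\le \mathcal{L}(w_k)-\beta_k\langle\nabla\mathcal{L}(w_k),\widehat G(w_k)\rangle+\tfrac{L_{w_k}}{2}\beta_k^2\|\widehat G(w_k)\|^2 .
\end{align*}
The lower bounds imposed on $|B_k^\prime|$ and $|D_{L_k}^i|$ are precisely what make $\widehat L_{w_k}$ concentrate around $L_{w_k}$ tightly enough (with $B_k^\prime,D_{L_k}^i$ drawn independently of all other samples) that $\widehat L_{w_k}$ may legitimately play the role of $L_{w_k}$ here; I would establish this concentration first and substitute $\beta_k=1/(C_\beta\widehat L_{w_k})$. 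Taking the conditional expectation given $w_k$, the cross term becomes $\|\nabla\mathcal{L}(w_k)\|^2+\langle\nabla\mathcal{L}(w_k),e_k\rangle$ with $e_k=\mathbb{E}[\widehat G_i(w_k)]-\nabla\mathcal{L}(w_k)$; I would bound the inner product by Young's inequality and Proposition~\ref{th:first-est}, and bound $\mathbb{E}\|\widehat G(w_k)\|^2=\|\mathbb{E}\widehat G_i(w_k)\|^2+\frac{1}{B}\mathrm{Var}(\widehat G_i(w_k))$ (the summands being i.i.d.\ over $i\in B_k$) by Proposition~\ref{th:second}. Collecting terms produces a one-step bound of the schematic form
\begin{align*}
\mathbb{E}[\mathcal{L}(w_{k+1})\mid w_k]\le \mathcal{L}(w_k)-\frac{c}{\widehat L_{w_k}}\|\nabla\mathcal{L}(w_k)\|^2+\frac{1}{\widehat L_{w_k}}\Big(\frac{\xi'}{S}+\frac{\phi'}{B}+\frac{\phi''}{T}\Big),
\end{align*}
where positivity of the effective coefficient $c$ is exactly the requirement $\theta>0$, and the constraint $\alpha<(2^{1/2N}-1)/L$ keeps $2-(1+\alpha L)^{2N}>0$ so that the constants $C_{\mathrm{err}_2},C_{\mathrm{squ}_3}$ stay finite.

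The crucial structural step is to relate $\widehat L_{w_k}$ back to $\|\nabla\mathcal{L}(w_k)\|$. Using Assumption~\ref{a2} (so $\mathbb{E}_i\|\nabla l_i(w_k)\|\le\|\nabla l(w_k)\|+\sigma$) together with the inner-path displacement bound $\|\nabla\mathcal{L}(w_k)-\nabla l(w_k)\|\le\big((1+\alpha L)^{2N}-1\big)\mathbb{E}_i\|\nabla l_i(w_k)\|$, I would solve for $\mathbb{E}_i\|\nabla l_i(w_k)\|$ to obtain the gradient-domination-type inequality
\begin{align*}
\mathbb{E}_i\|\nabla l_i(w_k)\|\le\frac{\|\nabla\mathcal{L}(w_k)\|+\sigma}{2-(1+\alpha L)^{2N}},\qquad\text{hence}\qquad \widehat L_{w_k}\le\frac{C_\mathcal{L}}{2-(1+\alpha L)^{2N}}\big(\|\nabla\mathcal{L}(w_k)\|+\chi\big),
\end{align*}
which is exactly where the step-size condition is used in an essential way and where $\chi$ enters. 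For the error terms I would instead use the trivial lower bound $\widehat L_{w_k}\ge(1+\alpha L)^{2N}L$, so that $1/\widehat L_{w_k}$ is bounded by a constant; the main descent term is lower bounded, via the displayed upper bound on $\widehat L_{w_k}$, by $\frac{2-(1+\alpha L)^{2N}}{C_\mathcal{L}}\,g(\|\nabla\mathcal{L}(w_k)\|)$ with $g(a):=\frac{a^2}{a+\chi}$. Taking total expectation, summing over $k=0,\dots,K-1$, and telescoping the $\mathcal{L}$-values against $\Delta=\mathcal{L}(w_0)-\mathcal{L}^*$ yields $\frac1K\sum_k\mathbb{E}\,g(\|\nabla\mathcal{L}(w_k)\|)\le\delta$ with $\delta=\frac{\Delta}{\theta K}+\frac{\xi}{\theta S}+\frac{\phi}{\theta B}$.

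Finally I would convert this into the advertised first-power bound. Since $g(a)=\frac{a^2}{a+\chi}$ is convex on $[0,\infty)$, Jensen gives $g(a_k)\le\mathbb{E}\,g(\|\nabla\mathcal{L}(w_k)\|)$ with $a_k:=\mathbb{E}\|\nabla\mathcal{L}(w_k)\|$, so $\frac1K\sum_k g(a_k)\le\delta$. Solving the quadratic $a^2=u(a+\chi)$ shows $a\le u+\sqrt{\chi u}$ for $u=g(a)$, and Cauchy--Schwarz gives $\frac1K\sum_k\sqrt{g(a_k)}\le\sqrt{\delta}$, whence $\frac1K\sum_k a_k\le\delta+\sqrt{\chi}\sqrt{\delta}$; choosing $\zeta$ uniformly from $\{0,\dots,K-1\}$ identifies $\mathbb{E}\|\nabla\mathcal{L}(w_\zeta)\|$ with $\frac1K\sum_k a_k$ and delivers~\eqref{c:result} (the precise factor $\sqrt{\chi/2}$ coming from the explicit constants). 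I expect the main obstacle to be the outer-loop handling of the random, gradient-dependent stepsize: making the descent lemma rigorous with the \emph{estimated} smoothness $\widehat L_{w_k}$ (the concentration argument underlying the batch-size conditions) and then threading the gradient-domination inequality through the telescoping so that a $\sum_k\|\nabla\mathcal{L}(w_k)\|^2/\widehat L_{w_k}$ bound becomes a clean $\mathcal{O}(1/K+1/S+1/B)$ bound on $\mathbb{E}\|\nabla\mathcal{L}(w_\zeta)\|$. The bias and correlation difficulties of Section~\ref{sec:challenge} are, by this stage, already absorbed into Propositions~\ref{th:first-est} and~\ref{th:second}.
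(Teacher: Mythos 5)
Your proposal follows essentially the same route as the paper's proof: a descent lemma with the random stepsize $\beta_k=1/(C_\beta\widehat L_{w_k})$ whose first and second moments are controlled via the batch-size conditions on $|B_k^\prime|$ and $|D_{L_k}^i|$ (the paper's Lemma~\ref{le:xiaodege}), Propositions~\ref{th:first-est} and~\ref{th:second} for the bias and second moment, the gradient-domination bound $\mathbb{E}_i\|\nabla l_i(w_k)\|\le(\|\nabla\mathcal{L}(w_k)\|+\sigma)/(2-(1+\alpha L)^{2N})$ (Lemma~\ref{le:lL}) to turn $L_{w_k}$ into $\chi+\|\nabla\mathcal{L}(w_k)\|$, and telescoping followed by Jensen on the convex map $a\mapsto a^2/(\chi+a)$ and a quadratic inversion. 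Your final conversion (pointwise inversion plus Cauchy--Schwarz rather than choosing $\zeta$ first and applying Jensen once) is an immaterial variant, so the proposal is correct and matches the paper's argument.
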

The proof of Theorem~\ref{th:mainonline} (see Section~\ref{prop:meta_grad} for details) consists of four main steps: step $1$ of bounding an iterative meta update by the meta-gradient smoothness established by Proposition~\ref{th:lipshiz}; step $2$ of characterizing first-moment estimation error of the meta-gradient estimator $\widehat G_i(w_k)$ by Proposition~\ref{th:first-est}; step $3$ of characterizing second-moment estimation error of the meta-gradient estimator $\widehat G_i(w_k)$ by Proposition~\ref{th:second}; and step $4$ of combining steps 1-3, and telescoping to yield the convergence. 

In Theorem~\ref{th:mainonline}, the convergence rate given by  \eqref{c:result}  mainly contains three parts:
the first term $\frac{\Delta}{\theta }\frac{1}{K}$ indicates that the meta parameter converges sublinearly with the number $K$ of meta iterations, 
 the second term $ \frac{\xi}{\theta}\frac{1}{S}$ captures the estimation error of $\nabla l_i(w^i_{k,j};S^i_{k,j})$ for approximating the full gradient $\nabla l_i(w^i_{k,j})$ which can be made sufficiently small by choosing a large sample size $S$,   
 and the third term $\frac{\phi }{\theta}\frac{1}{B}$ captures the estimation error and variance of the stochastic meta gradient, 
 which can be made small by choosing large $B,T$ and $D$
  (note that $\phi$ is proportional to both $\frac{1}{T}$ and $\frac{1}{D}$).

Our analysis reveals several insights for the convergence of multi-step MAML as follows.
(a) To guarantee convergence, we require $\alpha L< 2^{\frac{1}{2N}} - 1$ (e.g., $\alpha=\Theta(\frac{1}{NL})$). Hence, if the number $N$ of inner gradient steps is large and $L$ is not small (e.g., for some RL problems), we need to choose a small inner stepsize  $\alpha$ so that  the last output of the inner stage has a {\em strong dependence} on the initialization (i.e., meta parameter), as also shown and explained in  \cite{rajeswaran2019meta}. (b) For problems with small Hessians such as many classification/regression problems~\citep{finn2017model}, $L$ (which is an upper bound on the spectral norm of Hessian matrices) is small, and hence we can choose a larger $\alpha$. This explains the empirical findings in~\cite{finn2017model,antoniou2019train}.

We next specify the selection of parameters  to simplify the convergence result in Theorem~\ref{th:mainonline} and derive the  complexity of Algorithm~\ref{alg:online} for finding an $\epsilon$-accurate  stationary point. 

\begin{corollary}\label{co:online}
	Under the setting of Theorem~\ref{th:mainonline}, choose $\alpha = \frac{1}{8NL}, C_\beta = 100$ and let batch sizes $S\geq \frac{15\rho^2\sigma_g^2}{L^4}$ and $D\geq \sigma_H^2 L^2$. Then we have 
	\begin{align*}
	\mathbb{E}\|\nabla \mathcal{L}(w_\zeta) \|  \leq &\mathcal{O} \Big(  \frac{1}{K} + \frac{\sigma_g^2(\sigma^2+1)}{S} + \frac{\sigma_g^2 +\sigma^2}{B} +\frac{\sigma^2_g}{TB} 
	\\&+ \sqrt{\sigma +1}\sqrt{\frac{1}{K} + \frac{\sigma_g^2(\sigma^2+1)}{S} + \frac{\sigma_g^2 +\sigma^2}{B}+\frac{\sigma^2_g}{TB}}\Big).
	\end{align*}
To achieve $\mathbb{E}\|\nabla \mathcal{L}(w_\zeta) \|<\epsilon$, Algorithm~\ref{alg:online} requires at most  $\mathcal{O}\big(\frac{1}{\epsilon^2}\big)$ iterations, and $\mathcal{O}(\frac{N}{\epsilon^4}+\frac{1}{\epsilon^{2}})$ gradient computations and $\mathcal{O}\big(\frac{N}{\epsilon^{2}}\big)$ Hessian computations per meta iteration. 
\end{corollary}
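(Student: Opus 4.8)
The plan is to specialize the general bound \eqref{c:result} of Theorem~\ref{th:mainonline} to the prescribed parameters and then count per-iteration work. First I would verify admissibility of the inner stepsize: since $\alpha L=\frac{1}{8N}$ and $2^{1/(2N)}-1\ge\frac{\ln 2}{2N}>\frac{1}{8N}$ (using $2^x\ge 1+x\ln 2$), the requirement $\alpha<(2^{1/(2N)}-1)/L$ holds for every $N\ge 1$. The same choice yields the uniform (in $N$) envelopes $(1+\alpha L)^N=(1+\frac{1}{8N})^N\le e^{1/8}$ and $(1+\alpha L)^{2N}\le e^{1/4}<2$, whence $2-(1+\alpha L)^{2N}\ge 2-e^{1/4}>0$ is bounded away from zero independently of $N$. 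These three envelopes are exactly what let me treat every power-of-$(1+\alpha L)$ factor appearing in $C_\mathcal{L}$ of \eqref{clcl}, in $C_{\text{err}_1},C_{\text{err}_2}$ of \eqref{ppolp}, and in $C_{\text{squ}_1},C_{\text{squ}_2},C_{\text{squ}_3}$ of \eqref{para:seq12} as a $\Theta(1)$ constant.

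Next I would substitute $\alpha=\frac{1}{8NL}$ and $C_\beta=100$ into these definitions and read off orders: $C_\mathcal{L}=\Theta(\rho/L)$, $C_{\text{err}_1}=\Theta(\sigma_g)$, $C_{\text{err}_2}=\Theta(\rho\sigma_g/L^2)$, $C_{\text{squ}_3}=\Theta(1)$ and $C_{\text{squ}_2}=\Theta(\sigma_g^2/N)$. The only nontrivial estimate is $C_{\text{squ}_1}$, where I would invoke the lower bound $D\ge\sigma_H^2 L^2$ to obtain $\frac{\alpha^2\sigma_H^2}{D}\le\frac{1}{64N^2L^4}$, so the base $\frac{\alpha^2\sigma_H^2}{D}+(1+\alpha L)^2=1+\frac{1}{4N}+\mathcal{O}(1/N^2)$ and its $N$-th power stays bounded by a constant multiple of $e^{1/4}$; hence $C_{\text{squ}_1}=\Theta(\sigma_g^2)$. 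Feeding these into \eqref{para:com} and dividing by $\theta$ then gives $\xi/\theta=\Theta(\sigma_g^2(1+\sigma^2))$, $\phi/\theta=\Theta(\frac{\sigma_g^2}{T}+\frac{\sigma_g^2}{NS}+\sigma^2)$, and $\chi=\Theta(1+\sigma)$ after absorbing the $\rho,L$ factors into the hidden constants.

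The delicate step is confirming $\theta>0$ and, more importantly, that $\theta$ is bounded below by a positive constant. Here I would plug $C_\beta=100$ into \eqref{para:com}, so $\frac{2}{C_\beta}=0.02$ while the leading $\frac15$ dominates, and use $S\ge\frac{15\rho^2\sigma_g^2}{L^4}$ to force $\frac{C_{\text{err}_2}^2}{S}\le\frac{1}{15}\cdot\Theta(1)$, making the bracket $\frac15-(\frac35+\frac{6}{C_\beta})\frac{C_{\text{err}_2}^2}{S}-\frac{C_{\text{squ}_3}}{C_\beta B}-\frac{2}{C_\beta}$ at least a fixed positive constant once $B$ exceeds a constant threshold. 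Combined with $C_\mathcal{L}=\Theta(\rho/L)$ and $2-(1+\alpha L)^{2N}=\Theta(1)$, this yields $\theta=\Theta(L/\rho)>0$. I emphasize that the numbers $C_\beta=100$, the factor $15$ in the bound on $S$, and $D\ge\sigma_H^2 L^2$ are tuned precisely so this numerology closes, so verifying these inequalities is the main obstacle. Inserting the orders of $\Delta$, $\xi/\theta$, $\phi/\theta$ and $\chi$ into \eqref{c:result} produces the stated simplified rate, with $\frac{\sigma_g^2}{NSB}$ absorbed into $\frac{\sigma_g^2(\sigma^2+1)}{S}$ and $\frac{\sigma_g^2}{TB}\le\frac{\sigma_g^2}{B}$.

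Finally, for the complexity I would make each additive term in the rate $\mathcal{O}(\epsilon^2)$, so that the dominant $\sqrt{\sigma+1}\sqrt{\cdot}$ term is $\mathcal{O}(\epsilon)$: take $K=\Theta(1/\epsilon^2)$, $S=\Theta(\sigma_g^2(1+\sigma^2)/\epsilon^2)$, $B=\Theta((\sigma_g^2+\sigma^2)/\epsilon^2)$, and keep $T,D=\Theta(1)$, whose only constraints $D\ge\sigma_H^2 L^2$ and $T\ge 1$ are $\epsilon$-independent and for which $\frac{\sigma_g^2}{TB}\le\frac{\sigma_g^2}{B}$ is already controlled; the $\epsilon$-dependent choice of $S$ dominates its constant lower bound. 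This gives the $\mathcal{O}(1/\epsilon^2)$ iteration count. Counting work in one meta iteration of Algorithm~\ref{alg:online}: the inner loop costs $|B_k|\cdot N\cdot S=\Theta(N/\epsilon^4)$ gradient evaluations, the outer gradient $|B_k|\cdot T=\Theta(1/\epsilon^2)$, and the Lipschitz estimate \eqref{hatlw} only $\Theta(1)$; summing yields $\mathcal{O}(N/\epsilon^4+1/\epsilon^2)$ gradient computations, while the $N$ Hessian terms in \eqref{eq:metagrad_est} cost $|B_k|\cdot N\cdot D=\Theta(N/\epsilon^2)$ Hessian computations, as claimed.
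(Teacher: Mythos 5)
Your proposal is correct and follows essentially the same route as the paper's proof: bound $(1+\alpha L)^N$ and $(1+\alpha L)^{2N}$ by absolute constants via $\alpha=\frac{1}{8NL}$, turn all the constants in \eqref{ppolp}, \eqref{para:seq12} and \eqref{clcl} into explicit orders, verify $\theta>0$ with the given $C_\beta$, $S$, $D$, substitute into \eqref{c:result}, and then pick $K,S,B$ of order $\epsilon^{-2}$ and count $B(NS+T)$ gradients and $BND$ Hessians per iteration. The only differences are presentational (you argue in $\Theta(\cdot)$ orders where the paper tracks explicit numerical constants, and you fix $T=\Theta(1)$ where the paper constrains the product $TB$), and you consistently use the hypothesis $D\ge\sigma_H^2L^2$ as stated rather than the $D\ge\sigma_H^2/L^2$ appearing in the paper's proof, which changes nothing at the claimed level of generality.
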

\vspace{-0.15cm}


Differently from the conventional SGD that requires a gradient complexity of { $\mathcal{O}(\frac{1}{\epsilon^{4}})$}, MAML requires a higher gradient complexity by a factor of { $\mathcal{O}(\frac{1}{\epsilon^{2}})$}, which is unavoidable because MAML requires  { $\mathcal{O}(\frac{1}{\epsilon^{2}})$}  tasks to achieve an $\epsilon$-accurate meta point, whereas SGD runs only over one task.  

 Corollary~\ref{co:online} shows that 
given a properly chosen inner stepsize, e.g., $\alpha = \Theta(\frac{1}{NL})$, MAML is guaranteed to converge 
 with both the gradient and the Hessian computation complexities growing only {\em linearly} with $N$. These results explain some empirical findings for MAML training in~\cite{rajeswaran2019meta}.  
The above results can also be obtained by using a larger stepsize such as  $\alpha = \Theta(c^{\frac{1}{N}}-1)/L> \Theta\big(\frac{1}{NL}\big)$with a certain constant $c>1$. 

\section{Convergence of Multi-Step MAML in Finite-Sum Case}\label{theory:offline}
In this section, we provide several properties of the meta gradient for the finite-sum case, and then analyze the convergence and complexity of Algorithm~\ref{alg:offline}.
\subsection{Basic Assumptions} 
We state several standard assumptions  for the analysis in the finite-sum case.  
\begin{Assumption}\label{assum:smoothoff}
	For each task $\mathcal{T}_i$, the loss functions $l_{S_i}(\cdot)$ and  $l_{T_i}(\cdot)$
	in~\eqref{objective2} satisfy
	\begin{enumerate}
		\item $l_{S_i}(\cdot), l_{T_i}(\cdot)$ are bounded below,  i.e., $ \inf_{w\in\mathbb{R}^d} l_{S_i}(w) > -\infty$ and $\inf_{w\in\mathbb{R}^d} l_{T_i}(w) > -\infty$.
		\item Gradients $\nabla l_{S_i}(\cdot)$ and  $\nabla l_{T_i}(\cdot)$ are $L$-Lipschitz continuous, i.e.,  for any $w,u\in\mathbb{R}^d$
		\begin{align*}
		\|\nabla l_{S_i}(w)-\nabla l_{S_i}(u)\| \leq L\|w-u\| \text{ and }
		\|\nabla l_{T_i}(w)-\nabla l_{T_i}(u)\| \leq L\|w-u\|.
		\end{align*}
		\item  Hessians $\nabla^2 l_{S_i}(\cdot)$ and  $\nabla^2 l_{T_i}(\cdot)$ are  $\rho$-Lipschitz continuous, i.e., for any $w,u\in\mathbb{R}^d$
		\begin{align*}
		\|\nabla^2l_{S_i}(w)-\nabla^2l_{S_i}(u)\| \leq \rho\|w-u\| \text{ and }	\|\nabla^2l_{T_i}(w)-\nabla^2l_{T_i}(u)\| \leq \rho\|w-u\|.
		\end{align*}
	\end{enumerate}
\end{Assumption}
The following assumption provides  two conditions   $\nabla l_{S_i}(\cdot)$ and  $\nabla l_{T_i}(\cdot)$. 
\begin{Assumption}\label{assum:vaoff}
	For all $w\in\mathbb{R}^d$, gradients $\nabla l_{S_i}(w)$ and  $\nabla l_{T_i}(w)$ satisfy
	\begin{enumerate}
		\item $\nabla l_{T_i}(\cdot)$ has a bounded variance, i.e., there exists a constant $\sigma>0$ such that 
		\begin{align*}
		\mathbb{E}_{i}\|\nabla  l_{T_i}(w) - \nabla  l_{T}(w)\|^2 \leq \sigma^2,
		\end{align*}
		where $\nabla l_{T}(\cdot) =\mathbb{E}_{i} \left [ \nabla l_{T_i}(\cdot)\right]$. 
		\item For each $i\in \mathcal{I}$, there exists a constant $b_i>0$ such that $\|\nabla l_{S_i}(w)-\nabla l_{T_i}(w)\| \leq b_i.$
	\end{enumerate}
\end{Assumption}
Instead of  imposing a  bounded variance condition on the stochastic gradient $\nabla l_{S_i}(w)$, we alternatively assume the difference $\|\nabla l_{S_i}(w)-\nabla  l_{T_i}(w)\|$ to be upper-bounded by  a constant, which is more reasonable because sample sets $S_i$ and $T_i$ are often sampled from the same distribution and share certain statistical similarity. We note that the second condition also implies $\|\nabla l_{S_i}(w)\| \leq \| \nabla l_{T_i}(w)\|+ b_i$, which  is weaker than the bounded gradient assumption made in papers such as~\cite{finn2019online}. 
It is worthwhile mentioning that the second condition can be relaxed to $\|\nabla l_{S_i}(w)\| \leq c_i\| \nabla l_{T_i}(w)\|+ b_i$ for a constant $c_i>0$. Without the loss of generality, we consider $c_i=1$ for simplicity.

\subsection{Properties of Meta Gradient}\label{mainsec:off}
We develop several important properties of the meta gradient. 
The following proposition characterizes a Lipschitz property of the gradient  of the objective function   $$\nabla \mathcal{L}(w) =  \mathbb{E}_{i\sim p(\mathcal{T})} \prod_{j=0}^{N-1}(I - \alpha \nabla^2 l_{S_i}(\widetilde w_{j}^i))\nabla l_{T_i}(\widetilde w_{N}^i),$$ where the weights $ \widetilde w_{j}^i, i\in \mathcal{I}, j=0,..., N$ are given by the gradient descent steps in~\eqref{innerfinite}. 
\begin{proposition}\label{finite:lip} 
	Suppose that Assumptions~\ref{assum:smoothoff} and~\ref{assum:vaoff} hold.  Then, for any $w,u\in\mathbb{R}^d$, we have 
	\begin{align*}
	\|\nabla \mathcal{L}(w) - \nabla \mathcal{L}(u)\| \leq L_{w}\|w-u\|,\; L_{w} = (1+\alpha L)^{2N}L + C_bb + C_{\mathcal{L}} \mathbb{E}_{i}\|\nabla l_{T_i}(w)\|
	\end{align*}
	where $b=\mathbb{E}_{i} [b_i]$ and $C_b, C_{\mathcal{L}} >0$ are constants given by  
	\begin{small}
	\begin{align}\label{cl1ss}
	C_b= \big( \alpha \rho + \frac{\rho}{L} (1+\alpha L)^{N-1}  \big)(1+\alpha L)^{2N}, \;C_{\mathcal{L}} &= \big( \alpha \rho + \frac{\rho}{L}  (1+\alpha L)^{N-1} \big) (1+\alpha L)^{2N}.
	\end{align}
	\end{small}
\end{proposition}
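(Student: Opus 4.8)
The plan is to mirror the structure of the resampling-case bound in Proposition~\ref{th:lipshiz}, but to treat carefully the feature that is new to the finite-sum setting: the inner trajectory is driven by $\nabla l_{S_i}$ while the outer meta-gradient is evaluated through $\nabla l_{T_i}$. I would write $\nabla\mathcal{L}_i(w)=P_i(w)\,g_i(w)$, where $P_i(w)=\prod_{j=0}^{N-1}(I-\alpha\nabla^2 l_{S_i}(\widetilde w_j^i))$ and $g_i(w)=\nabla l_{T_i}(\widetilde w_N^i)$, with $\widetilde w_j^i$ the inner iterates from $\widetilde w_0^i=w$ in~\eqref{innerfinite}, and $\widetilde u_j^i$ defined analogously from $u$. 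Since $\nabla l_{S_i}$ is $L$-Lipschitz (Assumption~\ref{assum:smoothoff}), the inner map $v\mapsto v-\alpha\nabla l_{S_i}(v)$ is $(1+\alpha L)$-Lipschitz, so I would first establish the path-distance bound $\|\widetilde w_j^i-\widetilde u_j^i\|\le(1+\alpha L)^j\|w-u\|$ by induction on $j$.

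Next I would split
\begin{align*}
\nabla\mathcal{L}_i(w)-\nabla\mathcal{L}_i(u)=\big(P_i(w)-P_i(u)\big)g_i(w)+P_i(u)\big(g_i(w)-g_i(u)\big),
\end{align*}
a decomposition chosen deliberately so that the product-difference factor is paired with $g_i(w)$ rather than $g_i(u)$, which keeps the gradient-magnitude term anchored at $w$ and avoids spurious $\|w-u\|^2$ contributions. The second summand is the ``smoothness'' term: using $\|P_i(u)\|\le(1+\alpha L)^N$ (each factor has norm at most $1+\alpha L$ since $\|\nabla^2 l_{S_i}\|\le L$), the $L$-Lipschitzness of $\nabla l_{T_i}$, and the path-distance bound, it is at most $(1+\alpha L)^{2N}L\|w-u\|$, which is exactly the first term of $L_w$.

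For the remaining summand I would bound two factors. For $\|P_i(w)-P_i(u)\|$ I would use the telescoping product identity, writing $\prod_jA_j^w-\prod_jA_j^u=\sum_t(\prod_{j<t}A_j^w)(A_t^w-A_t^u)(\prod_{j>t}A_j^u)$ with $A_t^w=I-\alpha\nabla^2 l_{S_i}(\widetilde w_t^i)$ and $A_t^u=I-\alpha\nabla^2 l_{S_i}(\widetilde u_t^i)$; bounding the flanking products by powers of $1+\alpha L$ and each $\|A_t^w-A_t^u\|\le\alpha\rho(1+\alpha L)^t\|w-u\|$ via the $\rho$-Lipschitzness of $\nabla^2 l_{S_i}$ yields $\|P_i(w)-P_i(u)\|\le C_P\|w-u\|$ with $C_P$ of order $\rho$. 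The genuinely new step—and the main obstacle—is controlling $\|g_i(w)\|=\|\nabla l_{T_i}(\widetilde w_N^i)\|$, i.e.\ the \emph{test} gradient evaluated along a trajectory generated by the \emph{training} gradient. Setting $h_j=\|\nabla l_{T_i}(\widetilde w_j^i)\|$ and using $\|\widetilde w_{j+1}^i-\widetilde w_j^i\|=\alpha\|\nabla l_{S_i}(\widetilde w_j^i)\|$ together with the cross-loss bound $\|\nabla l_{S_i}(v)\|\le\|\nabla l_{T_i}(v)\|+b_i$ from Assumption~\ref{assum:vaoff}, I would derive the recursion $h_{j+1}\le(1+\alpha L)h_j+\alpha L b_i$, whose solution gives $\|g_i(w)\|\le(1+\alpha L)^N\|\nabla l_{T_i}(w)\|+b_i\big((1+\alpha L)^N-1\big)$.

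Finally I would assemble the pieces: $\|(P_i(w)-P_i(u))g_i(w)\|\le C_P\|w-u\|\cdot\|g_i(w)\|$ produces one contribution proportional to $b_i$ and one proportional to $\|\nabla l_{T_i}(w)\|$. Taking $\mathbb{E}_{i\sim p(\mathcal{T})}$, using $\|\nabla\mathcal{L}(w)-\nabla\mathcal{L}(u)\|\le\mathbb{E}_i\|\nabla\mathcal{L}_i(w)-\nabla\mathcal{L}_i(u)\|$ by convexity of the norm, and writing $b=\mathbb{E}_i[b_i]$, I would collect the $b$-coefficient into $C_b$ and the $\|\nabla l_{T_i}(w)\|$-coefficient into $C_{\mathcal{L}}$ as in~\eqref{cl1ss}. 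The main difficulty throughout is precisely the $S_i$-versus-$T_i$ mismatch: it is what forces the recursive gradient-growth estimate and the appearance of the additive $C_b b$ term, neither of which is present in the resampling analysis where a single loss $l_i$ drives both stages. Careful tracking of the $(1+\alpha L)$ powers through the telescoping sum and the recursion is what delivers the stated constants.
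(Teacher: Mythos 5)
Your proposal is correct and follows essentially the same route as the paper: the same decomposition $(P_i(w)-P_i(u))g_i(w)+P_i(u)(g_i(w)-g_i(u))$, the same $(1+\alpha L)^j$ path-distance bound, the same telescoping bound on the product difference, and the same use of $\|\nabla l_{S_i}\|\le\|\nabla l_{T_i}\|+b_i$ to control the test gradient along the training trajectory. The only cosmetic difference is that you derive $\|\nabla l_{T_i}(\widetilde w_N^i)\|\le(1+\alpha L)^N\|\nabla l_{T_i}(w)\|+((1+\alpha L)^N-1)b_i$ by a one-step recursion on $h_j$, whereas the paper obtains the identical bound via the mean value theorem together with Lemma~\ref{finite:gbd}, telescoping back to $w$ before invoking Assumption~\ref{assum:vaoff}.
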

Proposition~\ref{finite:lip} shows that $\nabla \mathcal{L}(w)$ has a Lipschitz parameter $L_{w}$. Similarly to~\eqref{hatlw}, we use the following construction 
\begin{align}\label{hlwkoff}
\hat L_{w_k} =(1+\alpha L)^{2N}L + C_b b +  \frac{C_\mathcal{L}}{|B_k^\prime|}\sum_{i \in B_k^\prime}\|\nabla l_{T_i}(w_k)\|,
\end{align}
at the $k^{th}$ outer-stage iteration to approximate $L_{w_k}$, where $B_k^\prime \subset \mathcal{I}$ is  chosen independently from $B_k$. 
It can be verified that the gradient estimator $\widehat G_i(w_k)$ given in~\eqref{offline:obj} is an {\em unbiased} estimate of  $\nabla \mathcal{L}(w_k)$. Thus, our next step is to upper-bound the second moment of $\widehat G_i(w_k)$.
\begin{proposition}\label{finite:seconderr} 
	Suppose Assumptions~\ref{assum:smoothoff} and~\ref{assum:vaoff} are hold, and define constants 
\begin{small}	
\begin{align}\label{wocaopp}
A_{\text{\normalfont squ}_1} = \frac{4(1+\alpha L)^{4N}}{(2-(1+\alpha L)^{2N})^2}, \;\;A_{\text{\normalfont squ}_2} = \frac{4(1+\alpha L)^{8N}}{(2-(1+\alpha L)^{2N})^2}(\sigma+b)^2 + 2(1+\alpha)^{4N}(\sigma^2+\widetilde b),
	\end{align}
	\end{small}
	\hspace{-0.15cm}where  $\widetilde b =\mathbb{E}_{i\sim p(\mathcal{T})}[b_i^2]$. Then, if $\alpha < (2^{\frac{1}{2N}} - 1)/L$, then conditioning on $w_k$, we have 
	\begin{small}
	\begin{align*}
	\mathbb{E}\|\widehat G_i(w_k)\|^2 \leq A_{\text{\normalfont squ}_1} \|\nabla \mathcal{L}(w_k)\|^2 + A_{\text{\normalfont squ}_2}.
	\end{align*}
	\end{small}
\end{proposition}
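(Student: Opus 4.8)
The plan is to exploit the fact that in the finite-sum case the inner loop \eqref{giniteoo} is \emph{full-batch} gradient descent, so that $w_{k,j}^i = \widetilde w_{k,j}^i$ and the estimator \eqref{offline:obj} satisfies $\widehat G_i(w_k) = \nabla \mathcal{L}_i(w_k)$ exactly, with the only randomness coming from the task index $i\sim p(\mathcal{T})$; in particular $\mathbb{E}[\widehat G_i(w_k)\mid w_k] = \nabla\mathcal{L}(w_k)$. Writing $P_i = \prod_{j=0}^{N-1}(I-\alpha\nabla^2 l_{S_i}(w_{k,j}^i))$ and $g_i = \nabla l_{T_i}(w_{k,N}^i)$, so that $\widehat G_i(w_k) = P_i g_i$, I would first use $\|I - \alpha \nabla^2 l_{S_i}(\cdot)\| \le 1 + \alpha L$ (from the $L$-smoothness in Assumption~\ref{assum:smoothoff}) to obtain $\|P_i\| \le (1+\alpha L)^{N}$, which for bookkeeping may be replaced by the uniform bound $(1+\alpha L)^{2N}$ consistent with the stepsize restriction. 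This reduces the problem to upper-bounding $\mathbb{E}_i\|g_i\|^2 = \mathbb{E}_i\|\nabla l_{T_i}(w_{k,N}^i)\|^2$.

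Second, I would control the two sources of cross-task fluctuation in $g_i$. The inner-loop drift $\|w_{k,N}^i - w_k\| = \alpha\|\sum_{j} \nabla l_{S_i}(w_{k,j}^i)\|$ is bounded by a geometric factor $\tfrac{(1+\alpha L)^N - 1}{L}\,\|\nabla l_{S_i}(w_k)\|$ using $L$-smoothness along the path, and then $\|\nabla l_{S_i}(w_k)\| \le \|\nabla l_{T_i}(w_k)\| + b_i$ by the second part of Assumption~\ref{assum:vaoff}; applying $L$-smoothness of $l_{T_i}$ converts this into a bound on $\|g_i - \nabla l_{T_i}(w_k)\|$. The remaining fluctuation $\|\nabla l_{T_i}(w_k) - \nabla l_T(w_k)\|$ has second moment at most $\sigma^2$ by the first part of Assumption~\ref{assum:vaoff}. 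Splitting via $\|a+b\|^2 \le 2\|a\|^2 + 2\|b\|^2$ then expresses $\mathbb{E}_i\|g_i\|^2$ through a ``clean'' averaged-gradient term plus additive constants proportional to $\sigma^2$, $b^2$ and $\widetilde b$, which will populate the second summand of $A_{\text{squ}_2}$.

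Third, and this is the crux, I would relate the averaged gradient back to the meta gradient $\nabla\mathcal{L}(w_k) = \mathbb{E}_i[P_i g_i]$ by a self-bounding argument. Writing $g_i = \widehat G_i(w_k) + (I - P_i) g_i$ with $\|I - P_i\| \le (1+\alpha L)^{2N} - 1$ and averaging gives $\mathbb{E}_i[g_i] = \nabla\mathcal{L}(w_k) + \mathbb{E}_i[(P_i - I)g_i]$, where $\|\mathbb{E}_i[(P_i-I)g_i]\| \le ((1+\alpha L)^{2N}-1)\,\mathbb{E}_i\|g_i\|$. Combined with the variance decomposition $\mathbb{E}_i\|g_i\|^2 = \|\mathbb{E}_i g_i\|^2 + \mathbb{E}_i\|g_i - \mathbb{E}_i g_i\|^2$ and the inequality $\mathbb{E}_i\|g_i\| \le (\mathbb{E}_i\|g_i\|^2)^{1/2}$, this produces a quadratic inequality in $x = (\mathbb{E}_i\|g_i\|^2)^{1/2}$ whose leading coefficient $1 - ((1+\alpha L)^{2N}-1)^2$ is positive precisely because the stepsize condition $\alpha < (2^{1/2N}-1)/L$ forces $(1+\alpha L)^{2N} < 2$, i.e. $\|I-P_i\| < 1$. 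Solving the quadratic yields $\mathbb{E}_i\|g_i\|^2 \le \tfrac{2\|\nabla\mathcal{L}(w_k)\|^2}{(2-(1+\alpha L)^{2N})^2} + (\text{variance terms})$, and multiplying back by the product-norm factor reassembles $A_{\text{squ}_1}\|\nabla\mathcal{L}(w_k)\|^2 + A_{\text{squ}_2}$ after collecting constants.

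The main obstacle is exactly this last self-referential step: because $g_i = \nabla l_{T_i}(w_{k,N}^i)$ sits both on the left-hand side and, through the Hessian product $P_i$, inside the meta gradient on the right, one cannot directly dominate $\mathbb{E}_i\|g_i\|^2$ by $\|\nabla\mathcal{L}(w_k)\|^2$; the coupling can only be severed by the contraction $\|I-P_i\|<1$, which is what makes the stepsize restriction indispensable and what generates the factor $(2-(1+\alpha L)^{2N})^{-2}$ in both $A_{\text{squ}_1}$ and $A_{\text{squ}_2}$. A secondary difficulty is that the inner-loop drift forces the train loss $l_{S_i}$ into a bound on a test-gradient quantity, so the train--test discrepancy $b_i$ must be threaded carefully through Assumption~\ref{assum:vaoff} rather than appealing to a bounded-variance condition on $\nabla l_{S_i}$.
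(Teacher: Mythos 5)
Your proposal is correct in its essentials and rests on the same three ingredients as the paper's proof: the operator-norm bound $\|P_i\|\le(1+\alpha L)^N$, control of the inner-loop drift through $L$-smoothness together with $\|\nabla l_{S_i}(w)\|\le\|\nabla l_{T_i}(w)\|+b_i$, and a self-bounding argument whose contraction is exactly what produces the factor $(2-(1+\alpha L)^{2N})^{-2}$. The organization differs in one respect: the paper performs the self-bounding at the level of \emph{first} moments and at the \emph{base} point --- Lemma~\ref{tiis} bounds $\|\nabla l_{T_i}(w)-\nabla\mathcal{L}_i(w)\|$ per task via the mean value theorem and Lemma~\ref{le:prd}, and Lemma~\ref{twc1c2} then rearranges $\|\nabla l_T(w)\|\le\|\nabla\mathcal{L}(w)\|+((1+\alpha L)^{2N}-1)\|\nabla l_T(w)\|+C_2$ --- after which the second moment is assembled by squaring the mean-value identity \eqref{lowni} and invoking Assumption~\ref{assum:vaoff} at the common point $w_k$. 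You instead run the contraction directly on $x=(\mathbb{E}_i\|g_i\|^2)^{1/2}$ with $g_i=\nabla l_{T_i}(w_{k,N}^i)$ evaluated at the \emph{adapted} points, via the quadratic inequality $x^2\le\bigl(\|\nabla\mathcal{L}(w_k)\|+((1+\alpha L)^{2N}-1)x\bigr)^2+V$. This is workable and yields the same leading factor, but the one step you gloss over is the variance term $V=\mathbb{E}_i\|g_i-\mathbb{E}_i g_i\|^2$: because the points $w_{k,N}^i$ differ across tasks, $V$ is \emph{not} bounded by constants alone --- the drift $\|\nabla l_{T_i}(w_{k,N}^i)-\nabla l_{T_i}(w_k)\|\le((1+\alpha L)^N-1)(\|\nabla l_{T_i}(w_k)\|+b_i)$ injects a piece proportional to $\mathbb{E}_i\|\nabla l_{T_i}(w_k)\|^2$, which is gradient-dependent and must itself be folded back into the quadratic rather than absorbed into $A_{\text{\normalfont squ}_2}$. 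The paper sidesteps this by pulling everything back to $w_k$ first, so that the variance assumption applies at a single common point; if you apply the drift bound and the $(a+b)^2\le 2a^2+2b^2$ split \emph{before} the self-bounding step (as your second paragraph in fact suggests), your argument collapses to the paper's and reproduces the stated constants $A_{\text{\normalfont squ}_1}$ and $A_{\text{\normalfont squ}_2}$.
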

Based on the above properties, we next characterize the convergence  of multi-step MAML. 
\subsection{Main Convergence Results}\label{main:offline}
In this subsection, we provide the convergence and complexity  analysis for Algorithm~\ref{alg:offline} based on the properties established in the previous subsection. 
\begin{theorem}\label{mainth:offline} 
	Let Assumptions~\ref{assum:smoothoff} and~\ref{assum:vaoff} hold, and apply Algorithm~\ref{alg:offline} to solve the objective function~\eqref{objective2}. 
	Choose the meta stepsize $\beta_k = \frac{1}{C_\beta \widehat L_{w_k}} $ with  $\widehat L_{w_k}$  given by \eqref{hlwkoff}, where $C_\beta>0$ is a constant and the batch size $|B_k^\prime| $ satisfies $|B_k^\prime| \geq \frac{2C^2_\mathcal{L}\sigma^2}{( C_b b + (1+\alpha L)^{2N} L)^2}$.
	Define constants 
	\begin{small}
	\begin{align}\label{offline:constants}
	\xi = &\frac{2-(1+\alpha L)^{2N}}{C_\mathcal{L}} (1+\alpha L)^{2N}L+\frac{ \big(2-(1+\alpha L)^{2N}\big) C_b b }{C_\mathcal{L}} +(1+\alpha L)^{3N}b, \nonumber
	\\ 
	\theta = &	\frac{2-(1+\alpha L)^{2N}}{C_\mathcal{L}} \Big(  \frac{1}{C_\beta}  - \frac{1}{C_\beta^2}\Big( \frac{A_{\text{\normalfont squ}_1}}{B}+1\Big)\Big), \; \phi = \frac{A_{\text{\normalfont squ}_2}}{LC_\beta^2}
	\end{align}
	\end{small}
	\hspace{-0.15cm}where $C_b,C_{\mathcal{L}}, A_{\text{\normalfont squ}_1}$ and $A_{\text{\normalfont squ}_1}$ are given by \eqref{cl1ss} and \eqref{wocaopp}. Choose  $\alpha < (2^{\frac{1}{2N}} - 1)/L$, and choose $C_\beta$ and $B$ such that $\theta >0$. Then,  Algorithm~\ref{alg:offline} attains a solution $w_{\zeta}$  such that 
	\begin{align}\label{iopnn}
	\mathbb{E}\|\nabla \mathcal{L}(w_\zeta)\| \leq \frac{\Delta}{2\theta K} +\frac{\phi}{2\theta B} + \sqrt{ \xi \Big(\frac{\Delta}{\theta K} +\frac{\phi}{\theta B}\Big) + \Big(\frac{\Delta}{2\theta K} +\frac{\phi}{2\theta B}\Big)^2 }.
	\end{align}
\end{theorem}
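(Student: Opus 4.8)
The plan is to run the standard descent-lemma argument for nonconvex stochastic optimization, but adapted to the data-dependent meta smoothness $L_{w_k}$ of Proposition~\ref{finite:lip} and the data-dependent stepsize $\beta_k = 1/(C_\beta\widehat L_{w_k})$ with $\widehat L_{w_k}$ from \eqref{hlwkoff}. First I would invoke the gradient-Lipschitz bound of Proposition~\ref{finite:lip} to write the one-step inequality
$$\mathcal{L}(w_{k+1}) \le \mathcal{L}(w_k) - \beta_k\langle\nabla\mathcal{L}(w_k), \bar G_k\rangle + \tfrac{L_{w_k}}{2}\beta_k^2\|\bar G_k\|^2,$$
where $\bar G_k = \frac{1}{|B_k|}\sum_{i\in B_k}\widehat G_i(w_k)$ is the mini-batch meta gradient in \eqref{offline:obj}. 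Since the inner loop uses full (not stochastic) gradients, $\widehat G_i(w_k)=\nabla\mathcal{L}_i(w_k)$ exactly, so the only randomness in $\bar G_k$ is the task sampling and $\bar G_k$ is unbiased for $\nabla\mathcal{L}(w_k)$. Because $B_k^\prime$ (hence $\widehat L_{w_k}$ and $\beta_k$) is drawn independently of $B_k$, conditioning on $w_k$ decouples the cross term into $\mathbb{E}[\beta_k]\,\|\nabla\mathcal{L}(w_k)\|^2$, while Proposition~\ref{finite:seconderr} together with i.i.d.\ averaging over $B_k$ controls the quadratic term by $(1+A_{\mathrm{squ}_1}/B)\|\nabla\mathcal{L}(w_k)\|^2 + A_{\mathrm{squ}_2}/B$.

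The first nontrivial piece is the adaptive stepsize. I would use that $\widehat L_{w_k}$ is an unbiased estimator of $L_{w_k}$ whose only fluctuating part is the empirical average of $\|\nabla l_{T_i}(w_k)\|$; the variance bound in Assumption~\ref{assum:vaoff} gives $\mathrm{Var}(\widehat L_{w_k}) \le C_\mathcal{L}^2\sigma^2/|B_k^\prime|$, so the batch-size condition forces this below half the square of the deterministic floor $A_0 := C_b b + (1+\alpha L)^{2N}L \le \widehat L_{w_k}$. Combining $\widehat L_{w_k}\ge A_0\ge L$ with this relative-variance control, I would establish $\mathbb{E}[\widehat L_{w_k}^{-1}]\ge 1/L_{w_k}$ (Jensen) and, with a constant uniform in $L_{w_k}$, $\mathbb{E}[\widehat L_{w_k}^{-2}]\le 2/L_{w_k}^2$; then $\mathbb{E}[L_{w_k}\beta_k^2]\le 2/(C_\beta^2 L_{w_k})$ and the constant part of the second moment collapses (via $L_{w_k}\ge L$) into the $L$-free noise level $\phi/B$ of \eqref{offline:constants}. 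Taking expectations yields
$$\mathbb{E}[\mathcal{L}(w_{k+1})\mid w_k] \le \mathcal{L}(w_k) - \Big(\tfrac{1}{C_\beta} - \tfrac{1}{C_\beta^2}\big(\tfrac{A_{\mathrm{squ}_1}}{B}+1\big)\Big)\frac{\|\nabla\mathcal{L}(w_k)\|^2}{L_{w_k}} + \frac{\phi}{B}.$$

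The finite-sum-specific work is to bound $L_{w_k}=A_0+C_\mathcal{L}\mathbb{E}_i\|\nabla l_{T_i}(w_k)\|$ linearly in $\|\nabla\mathcal{L}(w_k)\|$. I would prove the lower bound $\|\nabla\mathcal{L}(w_k)\| \ge (2-(1+\alpha L)^{2N})\,\mathbb{E}_i\|\nabla l_{T_i}(w_k)\| - (1+\alpha L)^{3N}b$ by decomposing $\nabla\mathcal{L}(w_k)=\mathbb{E}_i[\prod_{j}(I-\alpha\nabla^2 l_{S_i}(\widetilde w_j^i))\nabla l_{T_i}(\widetilde w_N^i)]$, using $\|\prod_j(\cdot)-I\|\le (1+\alpha L)^N-1$, the Lipschitz displacement of $\widetilde w_N^i$ from $w_k$ along the inner path, and the train/test gap $\|\nabla l_{S_i}-\nabla l_{T_i}\|\le b_i$ of Assumption~\ref{assum:vaoff} — this is exactly where the discrepancy $b$ enters. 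This gives $L_{w_k}\le a + c\|\nabla\mathcal{L}(w_k)\|$ with $c=C_\mathcal{L}/(2-(1+\alpha L)^{2N})$ and $a=\xi c$, hence $\|\nabla\mathcal{L}\|^2/L_{w_k}\ge g(\|\nabla\mathcal{L}\|)$ for the convex increasing map $g(x)=x^2/(a+cx)$. Telescoping the per-step descent over $k=0,\dots,K-1$ against $\Delta=\mathcal{L}(w_0)-\mathcal{L}^*$ bounds $\frac1K\sum_k\mathbb{E}[g(\|\nabla\mathcal{L}(w_k)\|)]$; applying Jensen to $g$ and drawing $w_\zeta$ uniformly from the iterates gives $g(\mathbb{E}\|\nabla\mathcal{L}(w_\zeta)\|)\le \frac{\Delta}{\theta K}+\frac{\phi}{\theta B}$ after $c$ and the descent coefficient recombine into $\theta$, and solving this quadratic inequality in $\mathbb{E}\|\nabla\mathcal{L}(w_\zeta)\|$ yields its larger root, which is precisely \eqref{iopnn}.

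I expect the main obstacle to be the joint handling of the adaptive stepsize and the linearly growing meta smoothness: because $L_{w_k}$ is unbounded and is only accessed through $\widehat L_{w_k}$, the clean descent coefficient $\theta$ and the $L$-free noise $\phi$ hinge on $\mathbb{E}[\widehat L_{w_k}^{-2}]\le 2/L_{w_k}^2$ holding with a constant \emph{uniform} in $L_{w_k}$, so that large gradients cannot inflate the noise — which is exactly what the relative-variance control from the $|B_k^\prime|$ condition buys. A secondary, finite-sum-specific difficulty is making the gradient-norm lower bound sharp enough that $2-(1+\alpha L)^{2N}$ stays positive (forcing $\alpha < (2^{1/2N}-1)/L$) while correctly bookkeeping the bias $b$ through both $\xi$ and the second moment.
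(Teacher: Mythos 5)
Your proposal follows essentially the same route as the paper's proof: the descent lemma with the data-dependent smoothness $L_{w_k}$ from Proposition~\ref{finite:lip}, the first/second-moment bounds $\mathbb{E}[\hat L_{w_k}^{-1}]\ge 1/L_{w_k}$ and $\mathbb{E}[\hat L_{w_k}^{-2}]\le 2/L_{w_k}^2$ obtained from the relative-variance control enforced by the $|B_k^\prime|$ condition (the paper's Lemma~\ref{le:betak}), the second-moment bound of Proposition~\ref{finite:seconderr} with i.i.d.\ averaging over $B_k$, the linear-in-$\|\nabla\mathcal{L}(w_k)\|$ upper bound on $L_{w_k}$ via the train/test gap $b_i$ (the paper's Lemmas~\ref{tiis} and~\ref{twc1c2}), and finally Jensen applied to the convex map $x\mapsto x^2/(a+cx)$ followed by telescoping and solving the quadratic. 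The only quibble is that your claimed lower bound on $\|\nabla\mathcal{L}(w_k)\|$ in terms of $\mathbb{E}_i\|\nabla l_{T_i}(w_k)\|$ should pick up an additional $\sigma$-dependent term when converting between the norm of the expectation and the expectation of the norm, but the paper's own constant $\xi$ drops this same term, so your bookkeeping matches the stated result.
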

The parameters $\theta, \phi$ and $\xi$ in Theorem~\ref{mainth:offline} take complicate forms. 
The following corollary specifies the parameters $C_\beta, \alpha$ in Theorem~\ref{mainth:offline}, and 
provides a simplified result for Algorithm~\ref{alg:offline}.
\begin{corollary}\label{co:mainoffline}
	Under the same setting of Theorem~\ref{mainth:offline}, choose  $\alpha = \frac{1}{8NL}, C_\beta = 80$. We have
	\begin{align*}
	\mathbb{E}\|\nabla \mathcal{L}(w_\zeta)\|  \leq  \mathcal{O}\Big(  \frac{1}{K} +\frac{\sigma^2}{B} +\sqrt{\frac{1}{K} +\frac{\sigma^2}{B} }  \Big).
	\end{align*}
In addition, suppose the  batch size $B$ further satisfies $B\geq C_B\sigma^2\epsilon^{-2}$, where $C_B$ is a sufficiently large constant. Then, to achieve an $\epsilon$-approximate stationary point,  Algorithm~\ref{alg:offline} requires at most $K=\mathcal{O}(\epsilon^{-2})$ iterations, and a total number $\mathcal{O}\big((T+NS)\epsilon^{-2}\big)$ of gradient computations and a number $\mathcal{O}\big(NS\epsilon^{-2}\big)$ of Hessian computations per iteration, where $T$ and $S$ correspond to the sample sizes of the pre-assigned sets $T_i,i\in\mathcal{I}$ and $S_i,i\in\mathcal{I}$.
\end{corollary}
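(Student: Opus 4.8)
The plan is to prove Corollary~\ref{co:mainoffline} in two stages: first specialize every constant of Theorem~\ref{mainth:offline} under $\alpha=\frac{1}{8NL}$ and $C_\beta=80$ so as to collapse the bound~\eqref{iopnn} into the advertised $\mathcal{O}(\cdot)$ form, and then convert that rate into the iteration and per-iteration computation counts. The observation that drives the first stage is that $\alpha=\frac{1}{8NL}$ makes $\alpha L=\frac{1}{8N}$, so every factor $(1+\alpha L)^{cN}=(1+\frac{1}{8N})^{cN}$ appearing in \eqref{cl1ss}, \eqref{wocaopp} and \eqref{offline:constants} (with $c\in\{2,3,4,8\}$ a fixed exponent) lies in $[1,e^{c/8}]$ \emph{uniformly in} $N$. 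In particular $(1+\alpha L)^{2N}\le e^{1/4}<2$, so $2-(1+\alpha L)^{2N}$ is bounded below by the positive constant $2-e^{1/4}$; and the admissibility requirement $\alpha<(2^{1/(2N)}-1)/L$ holds because $2^{1/(2N)}-1\ge \frac{\ln 2}{2N}>\frac{1}{8N}=\alpha L$. I would conclude from this that $C_b,C_\mathcal{L}$ in \eqref{cl1ss} are $\Theta(\rho/L)$ and that $A_{\text{\normalfont squ}_1},A_{\text{\normalfont squ}_2}$ in \eqref{wocaopp} are $\Theta(1)$ in $N,K,B$, with the non-$\sigma^2$ part of $A_{\text{\normalfont squ}_2}$ (the fixed constants $b,\widetilde b$) absorbed into the $\mathcal{O}$ notation.

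Next I would substitute these into $\theta,\xi,\phi$ of \eqref{offline:constants}. With $C_\beta=80$ the bracket $\frac{1}{C_\beta}-\frac{1}{C_\beta^2}\big(\frac{A_{\text{\normalfont squ}_1}}{B}+1\big)=\frac{1}{80}-\frac{1}{6400}\big(\frac{A_{\text{\normalfont squ}_1}}{B}+1\big)$ stays bounded below by a positive constant since $A_{\text{\normalfont squ}_1}$ is a bounded constant, so $\theta=\Theta(1)>0$; likewise $\xi=\Theta(1)$ and $\phi/\theta=\Theta(\sigma^2)$. Plugging these into \eqref{iopnn} and splitting the square root via $\sqrt{a+b^2}\le\sqrt a+|b|$, the four terms collapse to $\mathcal{O}\big(\frac{1}{K}+\frac{\sigma^2}{B}+\sqrt{\frac{1}{K}+\frac{\sigma^2}{B}}\big)$, which is the first claim of the corollary.

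For the complexity, the square-root term dominates for small arguments, so forcing the right-hand side below $\epsilon$ reduces to $\frac{1}{K}+\frac{\sigma^2}{B}=\mathcal{O}(\epsilon^2)$: taking $B\ge C_B\sigma^2\epsilon^{-2}$ annihilates the batch term and $K=\mathcal{O}(\epsilon^{-2})$ annihilates the iteration term, giving $\mathbb{E}\|\nabla\mathcal{L}(w_\zeta)\|<\epsilon$. The per-iteration cost is then read off Algorithm~\ref{alg:offline}: for each of the $B=\Theta(\epsilon^{-2})$ sampled tasks, the inner loop \eqref{giniteoo} runs $N$ full-gradient steps, each evaluating $\nabla l_{S_i}$ over the $S$ training samples ($NS$ gradient evaluations), while the meta gradient \eqref{offline:obj} uses one query gradient over $T$ samples and $N$ Hessians over $S$ samples; summing over the $B$ tasks yields $\mathcal{O}\big((NS+T)\epsilon^{-2}\big)$ gradient and $\mathcal{O}\big(NS\epsilon^{-2}\big)$ Hessian computations per iteration.

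The only delicate part, I expect, is the uniform-in-$N$ bookkeeping of the first stage: I must confirm that every exponential factor $(1+\alpha L)^{cN}$ remains an absolute constant and, crucially, that both $2-(1+\alpha L)^{2N}$ and the $C_\beta$-dependent bracket in $\theta$ stay bounded away from zero, so that dividing by $\theta$ in \eqref{iopnn} does not silently reintroduce $N$-, $K$- or $B$-dependence into the hidden constants. Once that uniform boundedness is pinned down, the $\sqrt{a+b}\le\sqrt a+\sqrt b$ splitting and the operation counting are routine.
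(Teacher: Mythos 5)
Your proposal is correct and follows essentially the same route as the paper's proof: bound every factor $(1+\tfrac{1}{8N})^{cN}$ by $e^{c/8}$ uniformly in $N$, deduce that $C_b,C_\mathcal{L}=\Theta(\rho/L)$ and $A_{\text{\normalfont squ}_1},A_{\text{\normalfont squ}_2}=\Theta(1)$ so that $\theta$ is bounded below and $\xi,\phi$ above, substitute into \eqref{iopnn}, and then count $B(T+NS)$ gradient and $BNS$ Hessian evaluations per iteration with $B=\Theta(\sigma^2\epsilon^{-2})$ and $K=\mathcal{O}(\epsilon^{-2})$. The only cosmetic difference is that the paper carries explicit numerical constants (e.g.\ $A_{\text{\normalfont squ}_1}<32$, $\theta\ge L/(200\rho)$) where you argue asymptotically, and your explicit check that $\alpha L=\tfrac{1}{8N}<2^{1/(2N)}-1$ is a point the paper leaves implicit.
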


\section{Proofs of Main Results}
In this section, we provide the proofs the main results for MAML in the resampling case and the finite-sum case, respectively. This section is organized as follows. 

For the resampling case, Section~\ref{prop:meta_grad} provides the proofs for the convergence properties of multi-step MAML in the {\em resampling case}, which include Propositions~\ref{th:lipshiz},~\ref{le:distance}, \ref{th:first-est}, \ref{th:second} on the properties of meta gradient, and Theorem~\ref{th:mainonline} and Corollary~\ref{co:online} on the convergence and complexity performance of multi-step MAML. 
The proofs of these results require several technical lemmas, which we relegate to the Appendix~\ref{aux:lemma}.  

Next, for the finite-sum case, Section~\ref{proof:meta_finite} provides the proofs for the convergence properties of multi-step MAML in the {\em finite-sum case}, which include 
 Propositions~\ref{finite:lip},~\ref{finite:seconderr} on the properties of meta gradient, and Theorem~\ref{mainth:offline} and  Corollary~\ref{co:mainoffline} on the convergence and complexity of multi-step MAML. The proofs of these results rely on several technical lemmas, which we relegate to the Appendix~\ref{aux:lemma_finite}.



\subsection{Proofs for Section~\ref{theory:online}: Convergence of Multi-Step MAML in Resampling Case}\label{prop:meta_grad}

To simplify notations, we let $\bar S^i_j$ and  $\bar D^i_j$ denote the randomness over $S_{k,m}^i, D_{k,m}^i,m=0,...,j-1$ and  let $\bar S_j$ and $\bar D_j$ denote all randomness over $\bar S^i_j, \bar D^i_j, i\in \mathcal{I}$, respectively.

\vspace{0.2cm}
\subsection*{Proof of Proposition~\ref{th:lipshiz}}
	First recall that $\nabla \mathcal{L}_i(w) =   \prod_{j=0}^{N-1}(I-\alpha \nabla^2 l_i(\widetilde w^i_{j}))\nabla l_i( \widetilde w^i _{N})$. Then, we have 
	\begin{align}\label{delff}
	\|\nabla \mathcal{L}_i(w) - \nabla \mathcal{L}_i(u)\| 
	 \leq&  \Big\| \prod_{j=0}^{N-1}(I-\alpha \nabla^2 l_i(\widetilde w^i_{j})) -  \prod_{j=0}^{N-1}(I-\alpha \nabla^2 l_i(\widetilde u^i_{j}))\Big\|\big\|\nabla l_i( \widetilde w^i _{N})\big\|   \nonumber
	\\ &+ (1+\alpha L)^N \|\nabla l_i( \widetilde w^i _{N})  - \nabla l_i( \widetilde u^i _{N}) \| \nonumber
	\\ \overset{(i)}\leq&  \Big\| \prod_{j=0}^{N-1}(I-\alpha \nabla^2 l_i(\widetilde w^i_{j})) -  \prod_{j=0}^{N-1}(I-\alpha \nabla^2 l_i(\widetilde u^i_{j}))\Big\|(1+\alpha L)^N\big\|\nabla l_i( w)\big\|   \nonumber
	\\ &+ (1+\alpha L)^N L \|\widetilde w^i _{N} - \widetilde u^i _{N} \| \nonumber
	\\ \overset{(ii)}\leq&  \underbrace{\Big\| \prod_{j=0}^{N-1}(I-\alpha \nabla^2 l_i(\widetilde w^i_{j})) -  \prod_{j=0}^{N-1}(I-\alpha \nabla^2 l_i(\widetilde u^i_{j}))\Big\|}_{V(N)}(1+\alpha L)^N\big\|\nabla l_i( w)\big\|   \nonumber
	\\ &+ (1+\alpha L)^{2N} L \|w-u\|,
	\end{align}
	where (i) follows from Lemma~\ref{le:jiw}, and (ii) follows from Lemma~\ref{d_u_w}.
	We next upper-bound the term $V(N)$ in the above inequality. 
	Specifically, define a more general quantity $V(m)$ by replacing $N$ in $V(N)$ with $m$.
	 Then, we have 
	\begin{align}\label{youyitian}
	V(m) 
	\leq& \Big\| \prod_{j=0}^{m-2}(I-\alpha \nabla^2 l_i(\widetilde w^i_{j})) \Big\|\big\|\alpha \nabla^2 l_i(\widetilde w_{m-1}^i) -\alpha \nabla^2 l_i(\widetilde u_{m-1}^i)  \big\| \nonumber
	\\&+ \Big\|\prod_{j=0}^{m-2}(I-\alpha \nabla^2 l_i(\widetilde w^i_{j})) - \prod_{j=0}^{m-2}(I-\alpha \nabla^2 l_i(\widetilde u^i_{j}))\Big\| \big\|I-\alpha\nabla^2 l_i(\widetilde u_{m-1}^i) \big\| \nonumber
	\\\leq& (1+\alpha L)^{m-1} \big\|\alpha \nabla^2 l_i(\widetilde w_{m-1}^i) -\alpha \nabla^2 l_i(\widetilde u_{m-1}^i)  \big\| \nonumber
	\\&+ (1+\alpha L)\Big\|\prod_{j=0}^{m-2}(I-\alpha \nabla^2 l_i(\widetilde w^i_{j})) - \prod_{j=0}^{m-2}(I-\alpha \nabla^2 l_i(\widetilde u^i_{j}))\Big\| \nonumber
	\\\leq & (1+\alpha L)^{m-1} \alpha \rho \|\widetilde w_{m-1}^i - \widetilde u_{m-1}^i\| + (1+\alpha L)V(m-1) \nonumber
	\\\leq & (1+\alpha L)^{m-1} \alpha \rho (1+\alpha L)^{m-1} \|w-u\| + (1+\alpha L)V(m-1). 
	\end{align}
	Telescoping~\eqref{youyitian} over $m$ from $1$ to $N$ and noting $V(1) \leq \alpha \rho\|w-u\|$, we have 
	\begin{align}\label{vbbn}
	V(N)&\leq (1+\alpha L)^{N-1}V(1) + \sum_{m=0}^{N-2}\alpha \rho (1+\alpha L)^{2(N-m)-2}\|w-u\|(1+\alpha L)^m \nonumber
	\\& =  (1+\alpha L)^{N-1}\alpha \rho \|w-u\| +\alpha \rho (1+\alpha L)^N\sum_{m=0}^{N-2} (1+\alpha L)^{m}\|w-u\| \nonumber
	\\& \leq \left( (1+\alpha L)^{N-1}\alpha \rho + \frac{\rho}{L} (1+\alpha L)^N ((1+\alpha L)^{N-1}-1)\right) \|w-u\|.
	\end{align}
	Recalling the definition of $C_\mathcal{L}$ and 
	 Combining~\eqref{delff},~\eqref{vbbn}, we have 
	\begin{align*}
	\|\nabla \mathcal{L}_i(w) - \nabla \mathcal{L}_i(u)\| \leq \big( C_\mathcal{L} \|\nabla l_i(w)\| + (1+\alpha L)^{2N}L  \big) \|w-u\|.
	\end{align*}
	Based on the above inequality, we have
	\begin{align*}
	\|\nabla \mathcal{L}(w) - \nabla \mathcal{L}(u)\| &= \|\mathbb{E}_{i\sim p(\mathcal{T})}(\nabla \mathcal{L}_i(w) - \nabla \mathcal{L}_i(u))\| \nonumber
	\\&\leq  \mathbb{E}_{i\sim p(\mathcal{T})}\|(\nabla \mathcal{L}_i(w) - \nabla \mathcal{L}_i(u))\|
	\\&\leq  \big( C_\mathcal{L}  \mathbb{E}_{i\sim p(\mathcal{T})}\|\nabla l_i(w)\| + (1+\alpha L)^{2N}L  \big) \|w-u\|,
	\end{align*}
	which finishes the proof. 

\subsection*{Proof of Proposition~\ref{le:distance}}	
	We first prove the first-moment bound. 
	Conditioning on $w_k$, we have 
	\begin{align}
	\mathbb{E}_{\bar S^i_m}\|w_{k,m}^i - \widetilde w_{k,m}^i\| \overset{(i)}=& \mathbb{E}_{\bar S^i_m}\big\|w_{k,m-1}^i - \alpha \nabla l_i(w_{k,m-1}^i; S_{k,m-1}^i) - (\widetilde w_{k,m-1}^i - \alpha \nabla l_i(\widetilde w_{k,m-1}^i) )\big\|  \nonumber
	\\ \leq& \mathbb{E}_{\bar S^i_m} \|w_{k,m-1}^i - \widetilde w_{k,m-1}^i\| + \alpha \mathbb{E}_{\bar S^i_m}\big\|\nabla l_i(w_{k,m-1}^i; S_{k,m-1}^i) - \nabla l_i(w_{k,m-1}^i)\big\|\nonumber
	\\ &+ \alpha \mathbb{E}_{\bar S^i_m}\big\| \nabla l_i(w_{k,m-1}^i) - \nabla l_i(\widetilde w_{k,m-1}^i) \big\| \nonumber
	\\ \leq & \alpha \mathbb{E}_{\bar S^i_{m-2}} \Big(      \mathbb{E}_{S_{k,m-1}^i} \big(\|\nabla l_i(w_{k,m-1}^i; S_{k,m-1}^i) - \nabla l_i(w_{k,m-1}^i)\big\| \,\Big | \bar S^i_{m-2}\big)\Big)    \nonumber
	\\ & + (1+\alpha L)  \mathbb{E}_{\bar S^i_{m-1}} \|w_{k,m-1}^i - \widetilde w_{k,m-1}^i\| \nonumber
	\\\overset{(ii)}\leq & (1+\alpha L)  \mathbb{E}_{\bar S^i_{m-1}} \|w_{k,m-1}^i - \widetilde w_{k,m-1}^i\|  +  \alpha\frac{\sigma_g}{\sqrt{S}},\nonumber
	\end{align}
	where (i) follows from~\eqref{gd_w} and~\eqref{es:up}, and (ii) follows from Assumption~\ref{a3}. 
	Telescoping the above inequality over $m$ from $1$ to $j$ and using the fact that $w_{k,0}^i = \widetilde w_{k,0}^i = w_k$, we have 
	\begin{align*}
	\mathbb{E}_{\bar S^i_j}\|w_{k,j}^i - \widetilde w_{k,j}^i\|  \leq ((1+\alpha L)^j-1) \frac{\sigma_g}{L\sqrt{S}},
	\end{align*}
	which finishes the proof of the first-moment bound. 
	We next begin to prove the second-moment bound. 
	Conditioning on $w_k$, we have 
	\begin{align*}
	\mathbb{E}_{\bar S^i_m}&\|w_{k,m}^i - \widetilde w_{k,m}^i\|^2 
	\\ = &  \mathbb{E}_{\bar S^i_{m-1}}\|w_{k,m-1}^i - \widetilde w_{k,m-1}^i\|^2  + 
	\alpha^2\mathbb{E}_{\bar S^i_m}\|\nabla l_i(w_{k,m-1}^i; S_{k,m-1}^i)- \nabla l_i(\widetilde w_{k,m-1}^i)\|^2
	\\ & -2\alpha\mathbb{E}_{\bar S^i_{m-1}}\left(\mathbb{E}_{S_{k,m-1}^i} \langle w_{k,m-1}^i - \widetilde w_{k,m-1}^i, \nabla l_i(w_{k,m-1}^i; S_{k,m-1}^i)- \nabla l_i(\widetilde w_{k,m-1}^i)\rangle \big | \bar S^i_{m-1}\right)
	\\ \overset{(i)}\leq &  \mathbb{E}_{\bar S^i_{m-1}}\|w_{k,m-1}^i - \widetilde w_{k,m-1}^i\|^2  -2\alpha\mathbb{E}_{\bar S^i_{m-1}} \langle w_{k,m-1}^i - \widetilde w_{k,m-1}^i, \nabla l_i(w_{k,m-1}^i)- \nabla l_i(\widetilde w_{k,m-1}^i)\rangle
	\\ & +
	\alpha^2\mathbb{E}_{\bar S^i_m}\left(  2\|\nabla l_i(w_{k,m-1}^i; S_{k,m-1}^i)- \nabla l_i( w_{k,m-1}^i)\|^2 + 2\|\nabla l_i( w_{k,m-1}^i)- \nabla l_i(\widetilde w_{k,m-1}^i)\|^2 \right)
	\\ \overset{(ii)}\leq &  \mathbb{E}_{\bar S^i_{m-1}}\|w_{k,m-1}^i - \widetilde w_{k,m-1}^i\|^2  +2\alpha\mathbb{E}_{\bar S^i_{m-1}} \| w_{k,m-1}^i - \widetilde w_{k,m-1}^i\|\|\nabla l_i(w_{k,m-1}^i)- \nabla l_i(\widetilde w_{k,m-1}^i))\|
	\\ & +
	\alpha^2\mathbb{E}_{\bar S^i_m}\left(  2\|\nabla l_i(w_{k,m-1}^i; S_{k,m-1}^i)- \nabla l_i( w_{k,m-1}^i)\|^2 + 2\|\nabla l_i( w_{k,m-1}^i)- \nabla l_i(\widetilde w_{k,m-1}^i)\|^2 \right)
	\\ \leq &  \mathbb{E}_{\bar S^i_{m-1}}\|w_{k,m-1}^i - \widetilde w_{k,m-1}^i\|^2  +2\alpha L\mathbb{E}_{\bar S^i_{m-1}} \| w_{k,m-1}^i - \widetilde w_{k,m-1}^i\|^2
	\\ & +
	2\alpha^2\mathbb{E}_{\bar S^i_{m-1}}\Big(  \frac{\sigma_g^2}{S}+ L^2\|w_{k,m-1}^i-  \widetilde w_{k,m-1}^i\|^2 \Big) 
	\\ \leq& \big(1+2\alpha L+2\alpha^2 L^2\big)  \mathbb{E}_{\bar S^i_{m-1}}\|w_{k,m-1}^i - \widetilde w_{k,m-1}^i\|^2   + \frac{2\alpha^2\sigma_g^2}{S},
	\end{align*}
	where (i) follows from $\mathbb{E}_{S_{k,m-1}^i}  \nabla l_i(w_{k,m-1}^i; S_{k,m-1}^i)= \nabla l_i(w_{k,m-1}^i)$ and (ii) follows from the inequality that $-\langle a,b\rangle\leq \|a\|\|b\|$ for any vectors $a,b$. 
 	Noting that $w_{k,0}^i= \widetilde w_{k,0}^i = w_k$ and telescoping the above inequality over $m$ from $1$ to $j$, we obtain
	\begin{align*}
	\mathbb{E}_{\bar S^i_j}\|w_{k,j}^i - \widetilde w_{k,j}^i\|^2 \leq \left( (1+2\alpha L + 2\alpha^2L^2) ^j -1 \right)\frac{\alpha \sigma_g^2}{L(1+\alpha L) S}.
	\end{align*}
	Then,taking the expectation over $w_k$ in the above inequality finishes the proof. 

\subsection*{Proof of Proposition~\ref{th:first-est}}
	Recall the definition that {$$\widehat G_i(w_k)=  \prod_{j=0}^{N-1}(I - \alpha \nabla^2 l_i(w_{k,j}^i; D_{k,j}^i))\nabla l_i(w_{k,N}^i; T^i_k).$$}
	Then, conditioning on $w_k$, we have
	\begin{align}\label{gmeans}
	\mathbb{E} \widehat G_i(w_k) =& \mathbb{E}_{\bar S_N, i\sim p(\mathcal{T})} \mathbb{E}_{\bar D_N}\Big(  \prod_{j=0}^{N-1}  \big(I - \alpha \nabla^2 l_i(w_{k,j}^i; D_{k,j}^i)\big) \mathbb{E}_{T_k^i} \nabla l_i(w_{k,N}^i;  T_k^i) \big | \bar S_N, i  \Big) \nonumber
	\\ = & \mathbb{E}_{\bar S_N, i\sim p(\mathcal{T})}   \prod_{j=0}^{N-1}  \mathbb{E}_{D_{k,j}^i}\big(I - \alpha \nabla^2 l_i(w_{k,j}^i; D_{k,j}^i)\big |\bar  S_N, i \big)  \nabla l_i(w_{k,N}^i)     \nonumber
	\\ = &  \mathbb{E}_{\bar S_N, i\sim p(\mathcal{T})}  \prod_{j=0}^{N-1}  \big(I - \alpha \nabla^2 l_i(w_{k,j}^i)\big)  \nabla l_i(w_{k,N}^i)    ,
	\end{align} 
	which, combined with {\small $\nabla \mathcal{L}(w_k)  =\mathbb{E}_{i\sim p(\mathcal{T})} \prod_{j=0}^{N-1}(I-\alpha \nabla^2 l_i(\widetilde w^i_{k,j}))\nabla l_i(\widetilde w^i _{k,N})$}, yields
		\begin{align}\label{eq:ek}
	\|\mathbb{E} \widehat G_i(w_k) & - \nabla \mathcal{L}(w_k)\|  \nonumber
	\\ \overset{(i)}\leq & \mathbb{E}_{\bar S_N,  i\sim p(\mathcal{T})} \Big \|  \prod_{j=0}^{N-1}  \big(I - \alpha \nabla^2 l_i(w_{k,j}^i)\big)  \nabla l_i(w_{k,N}^i)  -   \prod_{j=0}^{N-1}(I-\alpha \nabla^2 l_i(\widetilde w^i_{k,j}))\nabla l_i(\widetilde w^i _{k,N})  \Big  \|  \nonumber
	\\ \leq &  \mathbb{E}_{\bar S_N,  i\sim p(\mathcal{T})} \Big \|  \prod_{j=0}^{N-1}  \big(I - \alpha \nabla^2 l_i(w_{k,j}^i)\big)  \nabla l_i(w_{k,N}^i)  -   \prod_{j=0}^{N-1}(I-\alpha \nabla^2 l_i(w^i_{k,j}))\nabla l_i(\widetilde w^i _{k,N})  \Big  \| \nonumber
	\\ \leq &   \mathbb{E}_{\bar S_N,  i} \Big \|  \prod_{j=0}^{N-1}  \big(I - \alpha \nabla^2 l_i(w_{k,j}^i)\big) -   \prod_{j=0}^{N-1}(I-\alpha \nabla^2 l_i(\widetilde w^i_{k,j})) \Big  \| \big\| \nabla l_i(\widetilde w_{k,N}^i) \big \|     
	 \nonumber
	\\ &+     (1+\alpha L)^N \mathbb{E}_{\bar S_N,  i}\Big \|    \nabla l_i(w_{k,N}^i)  -  \nabla l_i(\widetilde w^i _{k,N})  \Big  \|  \nonumber
	\\ \overset{(ii)}\leq & 
	 (1+\alpha L)^N  \mathbb{E}_{\bar S_N,  i} \big\| \nabla l_i( w_{k}) \big \|   \Big \|  \prod_{j=0}^{N-1}  \big(I - \alpha \nabla^2 l_i(w_{k,j}^i)\big) -   \prod_{j=0}^{N-1}(I-\alpha \nabla^2 l_i(\widetilde w^i_{k,j})) \Big  \|   
\nonumber
	\\ &+   (1+\alpha L)^N L\mathbb{E}_{\bar S_N,  i} \big \| w_{k,N}^i  - \widetilde w^i _{k,N}  \big  \|    \nonumber
	\\\overset{(iii)}\leq & (1+\alpha L)^N  \mathbb{E}_{ i} \big\| \nabla l_i( w_{k}) \big \|   \underbrace{\mathbb{E}_{\bar S_N} \Big( \Big \|  \prod_{j=0}^{N-1}  \big(I - \alpha \nabla^2 l_i(w_{k,j}^i)\big) -   \prod_{j=0}^{N-1}(I-\alpha \nabla^2 l_i(\widetilde w^i_{k,j})) \Big  \|\, \Big |\, i \Big)}_{R(N)}    \nonumber
	\\ &+  (1+\alpha L)^N ((1+\alpha L)^N -1\big)  \frac{\sigma_g}{\sqrt{S}},  
	\end{align}
	where (i) follows from the Jensen's inequality,  (ii) follows from Lemma~\ref{le:jiw} that $\big\| \nabla l_i(\widetilde w_{k,N}^i) \big \| \leq (1+\alpha L)^N \| \nabla l_i(w_{k})\|$, and (iii) follows from item 1 in Proposition~\ref{le:distance}. 
	Our next step is to upper-bound the term $R(N)$. To simplify notations, we define a general quantity $R(m)$ by replacing $N$ in $R(N)$ with $m$,  and 
	we use $\mathbb{E}_{\bar S_m | i}(\cdot)$ to denote $\mathbb{E}_{\bar S_m}(\cdot| i)$. Then, we have 
	\begin{align}\label{eq:arjpp}
	R(m) \leq & \mathbb{E}_{\bar S_m| i}  \Big \|  \prod_{j=0}^{m-1}  \big(I - \alpha \nabla^2 l_i(w_{k,j}^i)\big) -   \prod_{j=0}^{m-2}(I-\alpha \nabla^2 l_i( w^i_{k,j})) (I-\alpha \nabla^2 l_i( \widetilde w^i_{k,m-1}) \Big  \| \nonumber
	\\ & + \mathbb{E}_{\bar S_m|i}  \Big \|  \prod_{j=0}^{m-2}(I-\alpha \nabla^2 l_i( w^i_{k,j})) (I-\alpha \nabla^2 l_i( \widetilde w^i_{k,m-1})  -  \prod_{j=0}^{m-1}(I-\alpha \nabla^2 l_i(\widetilde w^i_{k,j})) \Big  \| \nonumber
	\\\leq & (1+\alpha L)^{m-1} \alpha \rho\mathbb{E}_{\bar S_m|i} \|w_{k,m-1}^i - \widetilde w_{k,m-1}^i\| + (1+\alpha L) R(m-1) \nonumber
	\\\overset{(i)}\leq &  \alpha \rho (1+\alpha L)^{m-1} ( (1+\alpha L)^{m-1} -1 )\frac{\sigma_g}{L\sqrt{S}} + (1+\alpha L) R(m-1) \nonumber
	\\ \leq& \alpha \rho (1+\alpha L)^{N-1} \big( (1+\alpha L)^{N-1} -1  \big)\frac{\sigma_g}{L\sqrt{S}}  +  (1+\alpha L) R(m-1),
	\end{align}
	where (i) follows from Proposition~\ref{le:distance}. 
	Telescoping the above inequality over $m$ from $2$ to $N$ and using $R(1) =0$, we have 
	\begin{align}\label{addionl}
	R(N) \leq ((1+\alpha L)^{N-1}-1)^2 (1+\alpha L)^{N-1}\frac{\rho \sigma_g}{L^2\sqrt{S}}.
	\end{align}
	Thus, conditioning on $w_k$ and combining~\eqref{addionl} and~\eqref{eq:ek}, we have 
	\begin{align*}
	\|\mathbb{E} \widehat G_i(w_k)  - \nabla \mathcal{L}(w_k)\|  
	 \leq & ((1+\alpha L)^{N-1}-1)^2\frac{\rho}{L} (1+\alpha L)^{2N-1}\frac{\sigma_g}{L\sqrt{S}}  \mathbb{E}_{ i\sim p(\mathcal{T})} \big(\big\| \nabla l_i( w_{k}) \big \|   \big) \nonumber
	\\ &+   \frac{(1+\alpha L)^N ((1+\alpha L)^N -1\big)\sigma_g}{\sqrt{S}}\nonumber
	\\\leq &((1+\alpha L)^{N-1}-1)^2\frac{\rho}{L} (1+\alpha L)^{2N-1}\frac{\sigma_g}{L\sqrt{S}}  \Big( \frac{\|\nabla \mathcal{L}(w_k)\| }{1-C_l} + \frac{\sigma }{1-C_l}     \Big) \nonumber
	\\ &+   \frac{(1+\alpha L)^N ((1+\alpha L)^N -1\big)\sigma_g}{\sqrt{S}}, 
	\end{align*} 
	where the last inequality follows from Lemma \ref{le:lL}. 
	Rearranging the above inequality and using $C_{\text{\normalfont err}_1} $ and $C_{\text{\normalfont err}_2}$ defined in Proposition~\ref{th:first-est} finish the proof.  

\subsection*{Proof of Proposition~\ref{th:second}}
Recall {\small$\widehat G_i(w_k)=  \prod_{j=0}^{N-1}(I - \alpha \nabla^2 l_i(w_{k,j}^i;  D_{k,j}^i))\nabla l_i(w_{k,N}^i;  T^i_k)$}.  
Conditioning on $w_k$, we have 
	\begin{align}\label{esni}
	\mathbb{E}\|&\widehat G_i(w_k)\|^2 \nonumber
	\\\leq &\mathbb{E}_{\bar S_N, i } \bigg( \mathbb{E}_{\bar D_N, T_k^i} \Big(  \Big \|\prod_{j=0}^{N-1}(I - \alpha \nabla^2 l_i(w_{k,j}^i; D_{k,j}^i))\Big\|^2 \|\nabla l_i(w_{k,N}^i; T^i_k)\|^2 \Big | \bar S_N, i         \Big)\bigg) \nonumber
	\\\leq & \underbrace{\mathbb{E}_{\bar S_N, i } \bigg( \prod_{j=0}^{N-1} \mathbb{E}_{\bar D_N} \Big(  \Big \|I - \alpha \nabla^2 l_i(w_{k,j}^i; D_{k,j}^i)\Big\|^2 \Big | \bar S_N, i \Big)}_{P} \underbrace{\mathbb{E}_{T_k^i}\Big( \|\nabla l_i(w_{k,N}^i; T^i_k)\|^2 \Big |\bar  S_N, i         \Big)}_{Q}\bigg). 
	\end{align}
	We  next upper-bound $P$ and $Q$ in~\eqref{esni}. Note that $w_{k,j}^i, j=0,...,N-1$ are deterministic when conditioning on $S_N$, $i$, and $w_k$. Thus, conditioning on $S_N$, $i$, and $w_k$, we have 
	\begin{align}\label{pbound}
	\mathbb{E}_{\bar D_N}  \Big \|I - \alpha \nabla^2 l_i(w_{k,j}^i; D_{k,j}^i)\Big\|^2  = & \text{Var} \Big(  I - \alpha \nabla^2 l_i(w_{k,j}^i; D_{k,j}^i)  \Big) +\big\|I - \alpha \nabla^2 l_i(w_{k,j}^i) \big\|^2 \nonumber
	\\\leq & \frac{\alpha^2\sigma_H^2}{D} + (1+\alpha L)^2.
	\end{align}
	We next bound $Q$ term. Conditioning on $\bar S_N, i$ and $w_k$, we have 
	\begin{align}\label{etki}
	\mathbb{E}_{T_k^i} \|\nabla l_i(w_{k,N}^i; T^i_k)\|^2 \overset{(i)}\leq & 3\mathbb{E}_{T_k^i}\|\nabla l_i(w_{k,N}^i;  T^i_k) -\nabla l_i(w_{k,N}^i)\|^2 + 3\mathbb{E}_{T_k^i}\|\nabla l_i(w_{k,N}^i) - \nabla l_i(\widetilde w_{k,N}^i)\|^2 \nonumber
	\\ &+ 3\mathbb{E}_{T_k^i}\|\nabla l_i(\widetilde w_{k,N}^i) \|^2\nonumber
	\\\overset{(ii)}\leq & \frac{3\sigma_g^2}{T} + 3L^2 \|w_{k,N}^i - \widetilde w_{k,N}^i\|^2 + 3(1+\alpha L)^{2N} \|\nabla l_i(w_k)\|^2,
	\end{align}
	where (i) follows from the inequality that $\|\sum_{i=1}^n a\|^2\leq n\sum_{i=1}^n\|a\|^2$, and (ii) follows from Lemma~\ref{le:jiw}. Thus, conditioning on $w_k$ and combining~\eqref{esni},~\eqref{pbound} and~\eqref{etki}, we have  
	\begin{align}
	\mathbb{E}\|\widehat G_i(w_k)\|^2    \leq & 3\Big(\frac{\alpha^2\sigma_H^2}{D} + (1+\alpha L)^2\Big)^N\Big( \frac{\sigma_g^2}{T} + L^2 \mathbb{E}\|w_{k,N}^i - \widetilde w_{k,N}^i\|^2 + (1+\alpha L)^{2N} \mathbb{E}\|\nabla l_i(w_k)\|^2\Big) \nonumber
	\end{align}
	which, in conjunction with Proposition~\ref{le:distance},  
	yields 
	\begin{small}
	\begin{align}\label{ggsmida}
	\mathbb{E}\|\widehat G_i(w_k)\|^2   \leq& 3(1+\alpha L)^{2N} \Big(\frac{\alpha^2\sigma_H^2}{D} + (1+\alpha L)^2\Big)^N (\|\nabla l(w_k)\|^2 + \sigma^2)+\frac{C_{\text{\normalfont squ}_1}}{T} + \frac{C_{\text{\normalfont squ}_2}}{S}.
	\end{align} 	
	\end{small}
	\hspace{-0.12cm}Based on Lemma~\ref{le:lL} and  conditioning on $w_k$, we have  
	\begin{align*}
	\|\nabla l(w_k)\|^2 \leq \frac{2}{(1-C_l)^2} \|\nabla \mathcal{L}(w_k)\| + \frac{2C_l^2}{(1-C_l)^2} \sigma^2,
	\end{align*}
	which, in conjunction with  $\frac{2x^2}{(1-x)^2}+1 \leq \frac{2}{(1-x)^2}$ and \eqref{ggsmida}, finishes the proof. 



\subsection*{Proof of Theorem~\ref{th:mainonline}}
 The proof of Theorem~\ref{th:mainonline} consists of four main steps: step $1$ of bounding an iterative meta update by the meta-gradient smoothness established by Proposition~\ref{th:lipshiz}; step $2$ of characterizing first-moment  error of the meta-gradient estimator { $\widehat G_i(w_k)$} by Proposition~\ref{th:first-est}; step $3$ of characterizing second-moment  error of the meta-gradient estimator { $\widehat G_i(w_k)$} by Proposition~\ref{th:second}; and step $4$ of combining steps 1-3, and telescoping to yield the convergence. 
 
To simplify notations, define the smoothness parameter of the meta-gradient as $$L_{w_k} = (1+\alpha L)^{2N}L + C_\mathcal{L} \mathbb{E}_{i\sim p(\mathcal{T})}\|\nabla l_i(w_k)\|,$$ where $C_\mathcal{L}$ is given in~\eqref{clcl}. 	
Based on the smoothness of the gradient $\nabla \mathcal{L}(w) $ given by Proposition~\ref{th:lipshiz}, we have 
	\begin{align*}
	\mathcal{L}(w_{k+1}) \leq & \mathcal{L}(w_k) + \langle  \nabla \mathcal{L}(w)  , w_{k+1} - w_{k}    \rangle + \frac{L_{w_k}}{2} \|w_{k+1}-w_{k}\|^2 \nonumber
	\end{align*}
	Note that the randomness from $\beta_k$ depends on $B_k^\prime$ and $D_{L_k}^i, i \in B_k^{\prime}$, and thus is independent of $S_{k,j}^i, D_{k,j}^i$ and $T_k^i$ for $i\in B_k, j=0,...,N$. Then,  taking expectation over the above inequality, conditioning on $w_k$, and recalling $e_k := \mathbb{E}\widehat G_i(w_k) - \nabla \mathcal{L}(w_k) $, we have 
	\begin{align*}
	\mathbb{E}( \mathcal{L}(w_{k+1})| w_k) \leq \mathcal{L}(w_{k}) - \mathbb{E} &(\beta_k)\langle \nabla \mathcal{L}(w_{k}), \nabla \mathcal{L}(w_{k}) + e_k\rangle+\frac{L_{w_k}\mathbb{E} (\beta^2_k)\mathbb{E} \big\|  \frac{1}{B} \sum_{i\in B_k} \widehat G_i(w_k) \big \|^2}{2} . 
	\end{align*}
Then, applying Lemma~\ref{le:xiaodege} in the above inequality yields
	\begin{align}\label{qiangxing}
	\mathbb{E}( \mathcal{L}(w_{k+1})| w_k)
	\leq & \mathcal{L}(w_{k}) -\frac{4}{5C_\beta} \frac{1}{ L_{w_k}}  \|\nabla \mathcal{L}(w_{k})\|^2- \frac{4}{5C_\beta} \frac{1}{ L_{w_k}} \langle \nabla \mathcal{L}(w_{k}), e_k\rangle  \nonumber
	\\ &+ \frac{2}{C_\beta^2} \frac{1}{L_{w_k}} \Big( \frac{1}{B}\mathbb{E}  \big\|   \widehat G_i(w_k) \big \|^2  +    \|\mathbb{E}  \widehat G_i(w_k)\|^2\Big).\nonumber
	\\\leq & \mathcal{L}(w_{k}) -\frac{4}{5C_\beta} \frac{1}{ L_{w_k}}  \|\nabla \mathcal{L}(w_{k})\|^2+ \frac{2}{5C_\beta} \frac{1}{ L_{w_k}} \| \nabla \mathcal{L}(w_{k})\|^2 +  \frac{2}{5C_\beta} \frac{1}{ L_{w_k}}\| e_k\|^2  \nonumber
	\\ &+ \frac{2}{C_\beta^2} \frac{1}{L_{w_k}} \Big( \frac{1}{B}\mathbb{E}  \big\|   \widehat G_i(w_k) \big \|^2  +    \|\mathbb{E}  \widehat G_i(w_k)\|^2\Big).
	\end{align}
	Then, 
	applying Propositions~\ref{th:first-est} and~\ref{th:second} to  the above inequality yields
	\begin{align}\label{sikas}
	\mathbb{E}(& \mathcal{L}(w_{k+1})| w_k)  \nonumber
	 \\\leq & \mathcal{L}(w_{k}) -\frac{2}{5C_\beta} \frac{1}{ L_{w_k}}  \|\nabla \mathcal{L}(w_{k})\|^2+ \frac{2}{C_\beta^2} \frac{1}{L_{w_k}}  \frac{1}{B}\mathbb{E}  \big\|   \widehat G_i(w_k) \big \|^2  +    \frac{4}{C_\beta^2} \frac{1}{L_{w_k}} \|\nabla \mathcal{L}(w_{k})\|^2    \nonumber
	\\ &+ \Big(\frac{6}{5C_\beta L_{w_k}}  + \frac{12}{C_\beta^2 L_{w_k}} \Big) \Big ( \frac{C^2_{{\text{\normalfont err}}_2}  }{S}\|\nabla \mathcal{L}(w_k)\|^2 +\frac{C^2_{{\text{\normalfont err}}_1}}{S} + \frac{C^2_{{\text{\normalfont err}}_2} \sigma^2  }{S}\Big ) 
	  \nonumber
	\\ \leq & \mathcal{L}(w_{k}) - \frac{2}{C_\beta L_{w_k}} \left(   \frac{1}{5} - \left(\frac{3}{5} + \frac{6}{C_\beta}\right)\frac{C^2_{{\text{\normalfont err}}_2} }{S} - \frac{C_{\text{\normalfont squ}_3}}{C_\beta B} - \frac{2}{C_\beta}\right) \|\nabla \mathcal{L}(w_k) \|^2 \nonumber
	\\ &+ \frac{6}{C_\beta L_{w_k}S}\Big( \frac{1}{5} +\frac{2}{C_\beta}   \Big)\Big( C^2_{{\text{\normalfont err}}_1} +C^2_{{\text{\normalfont err}}_2}\sigma^2\Big)  +  \frac{2}{C_\beta^2 L_{w_k}B} \Big(  \frac{C_{\text{\normalfont squ}_1}}{T}  +  \frac{C_{\text{\normalfont squ}_2}}{S} + C_{\text{\normalfont squ}_3} \sigma^2 \Big).
	\end{align}
	Recalling {$L_{w_k} = (1+\alpha L)^{2N}L + C_\mathcal{L} \mathbb{E}_{i}\|\nabla l_i(w_k)\|$}, we have  $ L_{w_k} \geq L$ and 
	\begin{align}\label{oips}
	L_{w_k} 
	 \overset{(i)}\leq &  (1+\alpha L)^{2N}L + \frac{C_\mathcal{L}\sigma}{1-C_l}+ \frac{C_\mathcal{L}}{1-C_l} \|\nabla \mathcal{L}(w_k)\|,  
	\end{align}
	where (i) follows from Assumption~\ref{a2} and Lemma~\ref{le:lL}. 
	Combining~\eqref{sikas} and~\eqref{oips} yields 
	\begin{align}\label{miops}
	\mathbb{E}( \mathcal{L}(w_{k+1})| w_k) \leq& \mathcal{L}(w_{k}) + \frac{6}{C_\beta L}\Big( \frac{1}{5} +\frac{2}{C_\beta}   \Big)\Big( C^2_{{\text{\normalfont err}}_1} +C^2_{{\text{\normalfont err}}_2}\sigma^2\Big) \frac{1}{S} \nonumber
	\\&+  \frac{2}{C_\beta^2 L} \Big(  \frac{C_{\text{\normalfont squ}_1}}{T}  +  \frac{C_{\text{\normalfont squ}_2}}{S} + C_{\text{\normalfont squ}_3} \sigma^2 \Big) \frac{1}{B} \nonumber
	\\ &	- \frac{2}{C_\beta } \frac{ \frac{1}{5} - \left(\frac{3}{5} + \frac{6}{C_\beta}\right)\frac{C^2_{{\text{\normalfont err}}_2} }{S} - \frac{C_{\text{\normalfont squ}_3}}{C_\beta B} - \frac{2}{C_\beta}}{(1+\alpha L)^{2N}L + \frac{C_\mathcal{L}\sigma}{1-C_l}+ \frac{C_\mathcal{L}}{1-C_l} \|\nabla \mathcal{L}(w_k)\|} \|\nabla \mathcal{L}(w_k) \|^2.
	\end{align}
	Based on the notations in \eqref{para:com}, we rewrite~\eqref{miops} as  
	\begin{align*}
	\mathbb{E}&( \mathcal{L}(w_{k+1})| w_k) \leq \mathcal{L}(w_{k}) + \frac{\xi}{S} +  \frac{\phi }{B}	-\theta \frac{  \|\nabla \mathcal{L}(w_k) \|^2}{\chi +\|\nabla \mathcal{L}(w_k) \|}.
	\end{align*}
	Unconditioning on $w_k$ in the above inequality and  
	Telescoping the above inequality over $k$ from $0$ to $K-1$, we have 
	\begin{align}\label{ggopos}
	\frac{1}{K}\sum_{k=0}^{K-1} \mathbb{E}\left(\frac{ \theta \|\nabla \mathcal{L}(w_k) \|^2}{\chi +\|\nabla \mathcal{L}(w_k) \|}\right) \leq \frac{\Delta}{K} +    \frac{\xi}{S} +  \frac{\phi }{B},
	\end{align}
	where $\Delta = \mathcal{L}(w_0) - \mathcal{L}^*$. 
	Choosing $\zeta$  from $\{0,...,K-1\}$ uniformly at random, we obtain from \eqref{ggopos} that 
	\begin{align}\label{medistep}
	\mathbb{E}\left(\frac{ \theta \|\nabla \mathcal{L}(w_\zeta) \|^2}{\chi +\|\nabla \mathcal{L}(w_\zeta) \|}\right) \leq \frac{\Delta}{K} +    \frac{\xi}{S} +  \frac{\phi }{B}.
	\end{align}
	Consider a function $f(x) = \frac{x^2}{c+x}, \,x>0$, where $c>0$ is a constant. Simple computation shows that $f^{\prime\prime}(x) =\frac{2c^2}{(x+c)^3}>0$. Thus, using Jensen's inequality in \eqref{medistep}, we have 
	\begin{align}\label{reoolls}
	\frac{ \theta (\mathbb{E}\|\nabla \mathcal{L}(w_\zeta) \|)^2}{\chi +\mathbb{E}\|\nabla \mathcal{L}(w_\zeta) \|} \leq \frac{\Delta}{K} +    \frac{\xi}{S} +  \frac{\phi }{B}.
	\end{align}
	Rearranging the above inequality yields
	\begin{align}\label{havetogo}
	\mathbb{E}\|\nabla \mathcal{L}(w_\zeta) \|  \leq &\frac{\Delta}{2\theta }\frac{1}{K} +    \frac{\xi}{2\theta}\frac{1}{S} +  \frac{\phi }{2\theta}\frac{1}{B} + \sqrt{ \chi \Big(\frac{\Delta}{2\theta }\frac{1}{K} +    \frac{\xi}{2\theta}\frac{1}{S} +  \frac{\phi }{2\theta}\frac{1}{B}\Big)   + \Big(\frac{\Delta}{2\theta }\frac{1}{K} +    \frac{\xi}{2\theta}\frac{1}{S} +  \frac{\phi }{2\theta}\frac{1}{B}\Big)^2 } \nonumber
	\\ \leq &\frac{\Delta}{\theta }\frac{1}{K} +    \frac{\xi}{\theta}\frac{1}{S} +  \frac{\phi }{\theta}\frac{1}{B} + \sqrt{\frac{\chi}{2} }\sqrt{\frac{\Delta}{\theta }\frac{1}{K} +    \frac{\xi}{\theta}\frac{1}{S} +  \frac{\phi }{\theta}\frac{1}{B}},
	\end{align}
	which finishes the proof.
\subsection*{Proof of Corollary~\ref{co:online}}
	Since $\alpha = \frac{1}{8NL}$, we have 
	\begin{align*}
	(1+\alpha L)^ N = &\big(1+ \frac{1}{8N}\big)^N = e^{N\log(1+\frac{1}{8N})} \leq e^{1/8}  < \frac{5}{4},
	(1+\alpha L)^ {2N} < e^{1/4} <  \frac{3}{2},
	\end{align*}
	which, in conjunction with~\eqref{ppolp}, implies that 
	\begin{align}\label{errpp}
	C_{{\text{\normalfont err}}_1}  < \frac{5\sigma_g}{16},\quad C_{{\text{\normalfont err}}_2} < \frac{3\rho \sigma_g}{4L^2}.
	\end{align}
	Furthermore, noting that $D \geq \sigma_H^2/L^2$, we have
	\begin{align}\label{ctextp}
	C_{\text{\normalfont squ}_1} \leq &3(1+2\alpha L + 2\alpha^2 L^2)^N\sigma_g^2 < 3 e^{9/32}\sigma_g^2<4\sigma_g^2, \; C_{\text{\normalfont squ}_2} < \frac{1.3\sigma^{2}_g}{8} < \frac{\sigma_g^2}{5},\; C_{\text{\normalfont squ}_3} \leq 11.
	\end{align}
	Based on~\eqref{clcl}, we have 
	\begin{align}\label{bb:cl}
	C_{\mathcal{L}}<& \frac{75}{128}\frac{\rho}{L}<\frac{3}{5}\frac{\rho}{L} \,\text{ and } \,C_{\mathcal{L}} \overset{(i)}>  \frac{\rho}{L} ((N-1) \alpha L) > \frac{1}{16}\frac{\rho}{L},
	\end{align}
	where (i) follows from the inequality that $(1+a)^n > 1 +an$. 
	Then, using \eqref{errpp},~\eqref{ctextp} and~\eqref{bb:cl}, we obtain from \eqref{para:com} that 
	\begin{align}\label{manypara}
	\xi < &\frac{7}{500L} \Big( \frac{1}{10} + \frac{9\rho\sigma^2}{16L^4}\Big)\sigma_g^2,\quad \phi \leq \frac{1}{5000L} \Big( \frac{3\sigma_g^2}{T} + \frac{\sigma_g^2}{5S} + 11\sigma^2\Big) < \frac{1}{1000L}(\sigma_g^2 + 3\sigma^2)  \nonumber
	\\ \theta \geq & \frac{L}{60\rho} \Big( \frac{1}{5} - \frac{4}{5} \frac{9}{16} \frac{\rho^2\sigma_g^2}{L^4}\frac{1}{S}  - \frac{11}{100B} - \frac{1}{50}\Big) 
	 = \frac{L}{1500\rho},\; \chi \leq   \frac{24L^2}{\rho} +\sigma.
	\end{align}
	Then, treating $\Delta, \rho, L$ as constants and using~\eqref{c:result}, we obtain
	\begin{small}
	\begin{align*}
	\mathbb{E}\|\nabla \mathcal{L}(w_\zeta) \|  \leq \mathcal{O} \Big(  \frac{1}{K} + \frac{\sigma_g^2(\sigma^2+1)}{S} + \frac{\sigma_g^2 +\sigma^2}{B} +\frac{\sigma^2_g}{TB}+ \sqrt{\sigma +1}\sqrt{\frac{1}{K} + \frac{\sigma_g^2(\sigma^2+1)}{S} + \frac{\sigma_g^2 +\sigma^2}{B}+\frac{\sigma^2_g}{TB}}\Big).
	\end{align*}
	\end{small}
	\hspace{-0.15cm}Then, choosing batch sizes $S\geq C_S\sigma_g^2(\sigma^2+1)\max(\sigma,1)\epsilon^{-2}$, $B\geq C_B(\sigma_g^2+\sigma^2)\max(\sigma,1)\epsilon^{-2}$ and $TB >C_{T}\sigma_g^2\max(\sigma,1) \epsilon^{-2}$, we have 
	\begin{align*}
	\mathbb{E}\|\nabla \mathcal{L}(w_\zeta) \|  \leq  \mathcal{O}\bigg(\frac{1}{K} + \frac{1}{\epsilon^2} \Big(\frac{1}{C_S} +\frac{1}{C_B}+\frac{1}{C_{T}} \Big)+ \sqrt{\sigma} \sqrt{\frac{1}{K}+\frac{1}{\sigma\epsilon^2}\Big(\frac{1}{C_S} +\frac{1}{C_B}+\frac{1}{C_{T}} \Big)}\bigg)
	\end{align*}
	After at most $K =  C_K\max(\sigma,1)\epsilon^{-2}$ iterations, the above inequality implies, for constants $C_S, C_B,C_T$ and $C_K$ large enough, 
	$\mathbb{E}\|\nabla \mathcal{L}(w_\zeta)\| \leq \epsilon$.
	Recall that we need $|B_k^\prime| > \frac{4C^2_{\mathcal{L}}\sigma^2}{3(1+\alpha L)^{4N}L^2}$ and $|D_{L_k}^i| > \frac{64\sigma^2_g C_\mathcal{L}^2}{(1+\alpha L)^{4N}L^2}$ for building stepsize $\beta_k$ at each iteration $k$. Based on the selected parameters, we have 
	\begin{align*}
	\frac{4C^2_{\mathcal{L}}\sigma^2}{3(1+\alpha L)^{4N}L^2} \leq \frac{4\sigma^2}{3L^2} \frac{3\rho}{5L}\leq \Theta({\sigma^2}), \quad \frac{64\sigma^2_g C_\mathcal{L}^2}{(1+\alpha L)^{4N}L^2} < \Theta(\sigma_g^2),
	\end{align*}
	which implies  $|B_k^\prime| =\Theta(\sigma^2)$ and $|D_{L_k}^i| =\Theta(\sigma^2_g)$. Then, since  the batch size $D =\Theta(\sigma_H^2/L^2)$, the total number of gradient computations at each meta iteration $k$ is given by 
	$B (NS+T) + |B_k^\prime||D_{L_k}^i|\leq \mathcal{O}(N\epsilon^{-4}+\epsilon^{-2}    )$.
	Furthermore, the total number of Hessian computations at each meta iteration is given by 
	$BND \leq \mathcal{O}(N\epsilon^{-2}). $
	This completes the proof. 

\subsection{Proofs for Section~\ref{theory:offline}: Convergence of Multi-Step MAML in Finite-Sum Case}\label{proof:meta_finite}
In this subsection, we provide proofs for the convergence properties of multi-step MAML in the finite-sum case.
\subsection*{Proof of Proposition~\ref{finite:lip}}
By the definition of $\nabla \mathcal{L}_i(\cdot)$, we have 
	\begin{align}\label{lopasv}
	\|\nabla \mathcal{L}_i(w)  -\nabla \mathcal{L}_i(u) \| 
	\leq &\Big\|\prod_{j=0}^{N-1}(I - \alpha \nabla^2 l_{S_i}(\widetilde w_{j}^i))\nabla l_{T_i}(\widetilde w_{N}^i) -\prod_{j=0}^{N-1}(I - \alpha \nabla^2 l_{S_i}(\widetilde u_{j}^i))\nabla l_{T_i}(\widetilde w_{N}^i)\Big\|  \nonumber
	\\ & + \Big\|\prod_{j=0}^{N-1}(I - \alpha \nabla^2 l_{S_i}(\widetilde u_{j}^i))\nabla l_{T_i}(\widetilde w_{N}^i) -\prod_{j=0}^{N-1}(I - \alpha \nabla^2 l_{S_i}(\widetilde u_{j}^i))\nabla l_{T_i}(\widetilde u_{N}^i)\Big\| \nonumber
	\\ \leq &\underbrace{ \Big\|\prod_{j=0}^{N-1}(I - \alpha \nabla^2 l_{S_i}(\widetilde  w_{j}^i)) -\prod_{j=0}^{N-1}(I - \alpha \nabla^2 l_{S_i}(\widetilde u_{j}^i))\Big\|}_{A}  \|\nabla l_{T_i}(\widetilde  w_{N}^i)\|  \nonumber
	\\& + (1+\alpha L)^N \|\nabla l_{T_i}(\widetilde w_{N}^i)- \nabla l_{T_i}(\widetilde  u_{N}^i)\|.
	\end{align}
	We next upper-bound $A$ in the above inequality. Specifically,  we have
	\begin{align}\label{alegeq}
	A \leq &  \Big\|\prod_{j=0}^{N-1}(I - \alpha \nabla^2 l_{S_i}(\widetilde w_{j}^i)) -\prod_{j=0}^{N-2}(I - \alpha \nabla^2 l_{S_i}(\widetilde w_{j}^i))(I - \alpha \nabla^2 l_{S_i}(\widetilde u_{N-1}^i))\Big\| \nonumber
	\\ &+\Big\| \prod_{j=0}^{N-2}(I - \alpha \nabla^2 l_{S_i}(\widetilde w_{j}^i))(I - \alpha \nabla^2 l_{S_i}(\widetilde u_{N-1}^i))-\prod_{j=0}^{N-1}(I - \alpha \nabla^2 l_{S_i}(\widetilde u_{j}^i))\Big\|\nonumber
	\\ \leq &\Big(   (1+\alpha  L)^{N-1}\alpha \rho  + \frac{\rho}{L} (1+\alpha L)^N \big( (1+\alpha L)^{N-1} -1 \big)\Big)\|w-u\|,
	\end{align}
	where the last inequality uses an approach similar to \eqref{vbbn}. 
	Combining~\eqref{lopasv} and \eqref{alegeq} yields
	\begin{align}\label{inops}
	\|\nabla \mathcal{L}_i(w) & -\nabla \mathcal{L}_i(u) \| \nonumber
\\	 \leq& \big(   (1+\alpha  L)^{N-1}\alpha \rho  + \frac{\rho}{L} (1+\alpha L)^N \big( (1+\alpha L)^{N-1} -1 \big)\big)\|w-u\|  \|\nabla l_{T_i}(\widetilde w_{N}^i)\| \nonumber
	\\ &+ (1+\alpha L)^NL \|\widetilde w_{N}^i- \widetilde u_{N}^i\|.
	\end{align}
	To upper-bound $ \|\nabla l_{T_i}(\widetilde w_{N}^i)\| $ in~\eqref{inops},  using the mean value theorem, we have
	\begin{align}\label{lowni}
	\|\nabla l_{T_i}(\widetilde w_{N}^i)\|  = &  \Big\|\nabla l_{T_i} (w-\sum_{j=0}^{N-1}\alpha \nabla l_{S_i}(\widetilde w_j^i))\Big\|\nonumber
	\\ \overset{(i)}\leq & \|\nabla l_{T_i} (w)\| + \alpha L \sum_{j=0}^{N-1} (1+\alpha L)^j\big\| \nabla l_{S_i}(w)   \big\| \nonumber
	\\ \overset{(ii)}\leq & (1+\alpha L)^N  \|\nabla l_{T_i} (w)\|  + \big( (1+\alpha L)^N-1 \big)b_i,
	\end{align}
	where (i) follows from Lemma~\ref{finite:gbd}, and (ii) follows from Assumption~\ref{assum:vaoff}. In addition, using an approach similar to Lemma~\ref{d_u_w}, we have
	\begin{align}\label{wnos}
	\|\widetilde w_{N}^i- \widetilde u_{N}^i\| \leq (1+\alpha L)^N \|w-u\|.
	\end{align}
	Combining~\eqref{inops}, \eqref{lowni} and \eqref{wnos} yields
	\begin{align*}
	\|\nabla \mathcal{L}_i(w) & -\nabla \mathcal{L}_i(u) \| 
	\\ \leq& \Big(   (1+\alpha  L)^{N-1}\alpha \rho  + \frac{\rho}{L} (1+\alpha L)^N \big( (1+\alpha L)^{N-1} -1 \big)\Big)(1+\alpha L)^N \|\nabla l_{T_i} (w)\|\|w-u\| \nonumber
	\\&+ \Big(   (1+\alpha  L)^{N-1}\alpha \rho  + \frac{\rho}{L} (1+\alpha L)^N \big( (1+\alpha L)^{N-1} -1 \big)\Big)\big( (1+\alpha L)^N-1 \big)b_i\|w-u\| \nonumber
	\\ &+ (1+\alpha L)^{2N}L \|w- u\|,
	\end{align*}
	which, in conjunction with $C_b$ and $C_\mathcal{L}$ given in \eqref{cl1ss}, yields 
	\begin{align*}
	\|\nabla \mathcal{L}_i(w)  -\nabla \mathcal{L}_i(u) \| \leq \big((1+\alpha L)^{2N}L + C_bb_i + C_{\mathcal{L}} \|\nabla l_{T_i}(w)\|  \big)\|w-u\|.
	\end{align*}
	Based on the above inequality and  Jensen's inequality, we 
	finish the proof.

\subsection*{Proof of Proposition~\ref{finite:seconderr}}
	Conditioning on $w_k$, we have 
	\begin{align*}
	\mathbb{E}\|\widehat G_i(w_k)\|^2 = &\mathbb{E} \Big\| \prod_{j=0}^{N-1}(I - \alpha \nabla^2 l_{S_i}(w_{k,j}^i))\nabla l_{T_i}(w_{k,N}^i)  \Big\|^2 \leq  (1+\alpha L)^{2N} \mathbb{E} \|\nabla l_{T_i}(w_{k,N}^i)\|^2,
	\end{align*}
	which, using an approach similar to \eqref{lowni}, yields
	\begin{align}\label{gwkopo}
	\mathbb{E}\|\widehat G_i(w_k)\|^2 \leq&  (1+\alpha L)^{2N} 2(1+\alpha L)^{2N} \mathbb{E} \|\nabla l_{T_i}(w_k)\|^2 + 2(1+\alpha L)^{2N} \big( (1+\alpha L)^N -1\big)^2 \mathbb{E}_i b_i^2 \nonumber
	\\ \leq  & 2(1+\alpha L)^{4N} (\|\nabla l_{T}(w_k)\|^2 + \sigma^2)+ 2(1+\alpha L)^{2N} \big( (1+\alpha L)^N -1\big)^2 \widetilde b \nonumber
	\\ \overset{(i)}\leq & 2(1+\alpha L)^{4N} \Big(  \frac{2}{C_1^2} \|\nabla l_{T}(w_k)\|^2 + \frac{2C_2^2}{C_1^2} + \sigma^2 \Big) + 2(1+\alpha L)^{2N} \big( (1+\alpha L)^N -1\big)^2 \widetilde b \nonumber
	\\ \leq & \frac{4(1+\alpha L)^{4N}}{C_1^2}\|\nabla l_{T}(w_k)\|^2 + \frac{4(1+\alpha L)^{4N}C_2^2}{C_1^2} + 2(1+\alpha L)^{4N}(\sigma^2 + \widetilde b),
	\end{align}
	where (i) follows from Lemma~\ref{twc1c2}, and constants $C_1$ and $C_2$ are given by \eqref{c1c2}. Noting that $C_2=\big( (1+\alpha L)^{2N}-1  \big)\sigma + (1+\alpha L)^N \big((1+\alpha L)^N -1 \big) b < \big( (1+\alpha L)^{2N}-1  \big)(\sigma +b)$ and using the definitions of $A_{\text{\normalfont squ}_1}, A_{\text{\normalfont squ}_2}$ in \eqref{wocaopp}, we finish the proof. 
\subsection*{Proof of Theorem~\ref{mainth:offline}}
	Based on the smoothness of $\nabla \mathcal{L}(\cdot)$ established in Proposition~\ref{finite:lip}, we have
	\begin{align*}
	\mathcal{L}(w_{k+1}) 
	 \leq &\mathcal{L}(w_k) -\beta_k\Big \langle \nabla \mathcal{L}(w_k), \frac{1}{B}\sum_{i\in B_k} \widehat G_i(w_k)\Big\rangle + \frac{L_{w_k}\beta_k^2}{2}\Big\|\frac{1}{B}\sum_{i\in B_k} \widehat G_i(w_k)\Big\|^2 \nonumber
	\end{align*}
	Taking the conditional expectation given $w_k$ over the above inequality and noting that the randomness over $\beta_k$ is independent of the randomness over $ \widehat G_i(w_k)$,  we have 
	\begin{small}
	\begin{align}\label{wk1k1}
	\mathbb{E}	(\mathcal{L}&(w_{k+1})  | w_k) \nonumber
	\\ \leq &\mathcal{L}(w_{k}) - \frac{1}{C_\beta}\mathbb{E}\Big(\frac{1}{\hat L_{w_k}} \,\Big |\, w_k\Big) \|\nabla \mathcal{L}(w_{k}) \|^2+  \frac{L_{w_k}}{2C_\beta^2}\mathbb{E}\Big(\frac{1}{\hat L^2_{w_k}} \,\Big |\, w_k\Big) \mathbb{E} \Big(  \Big\|\frac{1}{B}\sum_{i\in B_k} \widehat G_i(w_k)\Big\|^2  \Big| w_k   \Big).
	\end{align}
	\end{small}
\hspace{-0.14cm}	Note that, conditioning on $w_k$,
	\begin{align}\label{bbe1b}
	\mathbb{E}  \Big\|\frac{1}{B}\sum_{i\in B_k} \widehat G_i(w_k)\Big\|^2   
	\leq & \frac{1}{B}\big(  A_{\text{\normalfont squ}_1} \|\nabla \mathcal{L}(w_k)\|^2 + A_{\text{\normalfont squ}_2}   \big)  + \|\nabla \mathcal{L}(w_k)\|^2
	\end{align}
	where the inequality follows from Proposition~\ref{finite:seconderr}. Then, combining~\eqref{bbe1b},~\eqref{wk1k1} and applying Lemma~\ref{le:betak}, we have 
	\begin{align}\label{lwkpkps}
	\mathbb{E}	(\mathcal{L}(w_{k+1})  | w_k)
	 \leq & \mathcal{L}(w_{k})  - \Big(  \frac{1}{L_{w_k}C_\beta}  - \frac{1}{L_{w_k}C_\beta^2}\Big( \frac{A_{\text{\normalfont squ}_1}}{B}+1\Big)\Big)\|\nabla \mathcal{L}(w_k)\|^2 + \frac{A_{\text{\normalfont squ}_2}}{L_{w_k}C_\beta^2b}.
	\end{align}
	Recalling that $L_{w_k} = (1+\alpha L)^{2N}L + C_b b +  C_\mathcal{L} \mathbb{E}_{i\sim p(\mathcal{T})}\|\nabla l_{T_i}(w_k)\|$ and conditioning on $w_k$, we have  $L_{w_k}\geq L$ and 
	\begin{align}\label{lkulpi}
	L_{w_k} \leq & (1+\alpha L)^{2N}L + C_b b +  C_\mathcal{L} (\|\nabla l_{T}(w_k)\| + \sigma) \nonumber
	\\\overset{(i)}\leq&(1+\alpha L)^{2N}L + C_b b + C_\mathcal{L}\Big( \frac{C_2}{C_1} +\sigma\Big)  + \frac{C_\mathcal{L}}{C_1} \|\nabla \mathcal{L}(w_k)\|,
	\end{align}
	where $(i)$ follows from Lemma~\ref{twc1c2}. Combining~\eqref{lkulpi} and~\eqref{lwkpkps} yields
	\begin{small}
	\begin{align}\label{b2n8}
	\mathbb{E}	(&\mathcal{L}(w_{k+1})  | w_k) \nonumber
	\\\leq&  \mathcal{L}(w_{k})  -\frac{\Big(  \frac{1}{C_\beta}  - \frac{1}{C_\beta^2}\Big( \frac{A_{\text{\normalfont squ}_1}}{B}+1\Big)\Big)\|\nabla \mathcal{L}(w_k)\|^2 }{(1+\alpha L)^{2N}L + C_b b + C_\mathcal{L}\Big( \frac{C_2}{C_1} +\sigma\Big)  + \frac{C_\mathcal{L}}{C_1} \|\nabla \mathcal{L}(w_k)\|} + \frac{1}{LC_\beta^2} \frac{A_{\text{\normalfont squ}_2}}{B} \nonumber
	\\ = &\mathcal{L}(w_{k})  -\frac{\frac{C_1}{C_\mathcal{L}} \Big(  \frac{1}{C_\beta}  - \frac{1}{C_\beta^2}\Big( \frac{A_{\text{\normalfont squ}_1}}{B}+1\Big)\Big)\|\nabla \mathcal{L}(w_k)\|^2 }{\frac{C_1}{C_\mathcal{L}} (1+\alpha L)^{2N}L + \frac{bC_1C_b  }{C_\mathcal{L}} +C_2 +C_1\sigma  + \|\nabla \mathcal{L}(w_k)\|} + \frac{1}{LC_\beta^2} \frac{A_{\text{\normalfont squ}_2}}{B} \nonumber
	\\= &\mathcal{L}(w_{k})  -\frac{\frac{C_1}{C_\mathcal{L}} \Big(  \frac{1}{C_\beta}  - \frac{1}{C_\beta^2}\Big( \frac{A_{\text{\normalfont squ}_1}}{B}+1\Big)\Big)\|\nabla \mathcal{L}(w_k)\|^2 }{\frac{C_1}{C_\mathcal{L}} (1+\alpha L)^{2N}L + \frac{bC_1C_b  }{C_\mathcal{L}} +(1+\alpha L)^N((1+\alpha L)^{2N}-1)b  + \|\nabla \mathcal{L}(w_k)\|} +  \frac{A_{\text{\normalfont squ}_2}}{ LC_\beta^2 B},
	\end{align}
	\end{small}
	\hspace{-0.12cm}where the last equality follows from the definitions of $C_1,C_2$ in \eqref{c1c2}. 
Combining the definitions in~\eqref{offline:constants}  with \eqref{b2n8} and taking the expectation over  $w_k$,  we have
	\begin{align*}
	\mathbb{E}\frac{\theta \|\nabla \mathcal{L}(w_k)\|^2}{\xi + \|\nabla \mathcal{L}(w_k)\|} \leq \mathbb{E}( \mathcal{L}(w_{k})  - \mathcal{L}(w_{k+1})     ) + \frac{\phi}{B}.
	\end{align*}
	Telescoping the above bound over $k$ from $0$ to $K-1$ and choosing $\zeta$  from $\{0,...,K-1\}$ uniformly at random, we have
	\begin{align}\label{oppps}
	\mathbb{E}\frac{\theta \|\nabla \mathcal{L}(w_\zeta)\|^2}{\xi + \|\nabla \mathcal{L}(w_\zeta)\|}  \leq \frac{\Delta}{K} +\frac{\phi}{B}. 
	\end{align}
	Using an approach similar to \eqref{reoolls}, we obtain from~\eqref{oppps} that 
	\begin{align*}
	\frac{	(\mathbb{E}\|\nabla \mathcal{L}(w_\zeta)\|)^2}{\xi + 	\mathbb{E}\|\nabla \mathcal{L}(w_\zeta)\|}  \leq \frac{\Delta}{\theta K} +\frac{\phi}{\theta B},
	\end{align*}
	which further implies that 
	\begin{align}\label{iolscasa}
	\mathbb{E}\|\nabla \mathcal{L}(w_\zeta)\| \leq \frac{\Delta}{2\theta K} +\frac{\phi}{2\theta B} + \sqrt{ \xi \Big(\frac{\Delta}{\theta K} +\frac{\phi}{\theta B}\Big) + \Big(\frac{\Delta}{2\theta K} +\frac{\phi}{2\theta B}\Big)^2 },
	\end{align}
	which finishes the proof. 
\subsection*{Proof of Corollary~\ref{co:mainoffline}}
	Since $\alpha = \frac{1}{8NL}$, we have $(1+\alpha L)^{4N}< e^{0.5}<2$, and thus
	\begin{align}\label{afterpuck}
	A_{\text{\normalfont squ}_1}  &<  32, \; A_{\text{\normalfont squ}_2} < 8(\sigma +b)^2 + 4(\sigma^2+\widetilde b),\nonumber
	\\C_{\mathcal{L}} &< \Big(\frac{5\rho}{32NL} + \frac{\rho}{L}\frac{5}{16}\Big) \frac{5}{4} < \frac{5\rho}{8L}, \; C_{\mathcal{L}} > \frac{\rho}{L} \big( (1+\alpha L)^{N-1}-1\big) > \frac{\rho}{L} \alpha L (N-1)>\frac{\rho}{16L},\nonumber
	\\C_b &< \frac{15}{32}\frac{\rho}{L}\frac{1}{4}< \frac{\rho}{8L},
	\end{align}
	which, in conjunction with \eqref{offline:constants}, yields
	\begin{align}\label{fini:offpara}
	\theta \geq &  \frac{1}{80} \frac{4L}{5\rho} \Big( 1- \frac{33}{80}\Big) \geq \frac{L}{200\rho}, \; \phi \leq \frac{2(\sigma +b)^2 + (\sigma^2+\widetilde b)}{1600L},\; \xi \leq  \frac{24L^2}{\rho} + \frac{37b}{16}. 
	\end{align}
	Combining~\eqref{fini:offpara} and \eqref{iopnn} yields
	\begin{align*}
	\mathbb{E}\|\nabla \mathcal{L}(w_\zeta)\| \leq &\frac{\Delta}{2\theta K} +\frac{\phi}{2\theta B} + \sqrt{ \xi \Big(\frac{\Delta}{\theta K} +\frac{\phi}{\theta B}\Big) + \Big(\frac{\Delta}{2\theta K} +\frac{\phi}{2\theta B}\Big)^2 } \nonumber
	\\ \leq & \mathcal{O}\Big(  \frac{1}{K} +\frac{\sigma^2}{B} +\sqrt{\frac{1}{K} +\frac{\sigma^2}{B} }  \Big).
	\end{align*}
	Then, based on the parameter selection that $B\geq C_B\sigma^2\epsilon^{-2}$ and after at most $K=C_k\epsilon^{-2}$ iterations, we have 
	\begin{align*}
	\mathbb{E}\|\nabla \mathcal{L}(w_\zeta)\| \leq \mathcal{O}\Big(\big(\frac{1}{C_B}+\frac{1}{C_k}\big)\frac{1}{\epsilon^2} + \frac{1}{\epsilon}\sqrt{\big(\frac{1}{C_B}+\frac{1}{C_k}\big)}\Big).
	\end{align*}	
	Then, for $C_B,C_K$ large enough, we obtain from the above inequality that 
$	\mathbb{E}\|\nabla \mathcal{L}(w_\zeta)\| \leq \epsilon.$
	Thus, the total number of gradient computations is given by $B(T+NS)=\mathcal{O}(\epsilon^{-2}(T+NS)).$ Furthermore, the  total number of Hessian computations is given by $BNS =\mathcal{O}(NS\epsilon^{-2}) $
at each iteration.  
	Then, the proof is complete.

\section{Conclusion and Future Work}
In this paper, we provide a new theoretical framework for analyzing the convergence of multi-step MAML algorithm for both the resampling case and the finite-sum case. Our analysis covers most applications including reinforcement learning and supervised learning of interest. 
Our analysis reveals that 
a properly chosen inner stepsize is crucial for guaranteeing MAML to converge with the complexity increasing only linearly with $N$ (the number of the inner-stage gradient updates). 
Moreover, for problems with small  Hessians,  the inner stepsize can be set larger while maintaining the convergence. Our results also provide justifications for the empirical findings in training MAML. 

We expect that our analysis framework can be applied to understand the convergence of MAML in other scenarios such as various RL problems and Hessian-free MAML algorithms. 

\acks{The work was supported in part by the U.S. National Science Foundation under Grants CCF-1761506, 
ECCS-1818904, and CCF-1900145. }


\newpage
{\noindent \Large \bf Appendices} 
\appendix
\section{Examples for Two Types of Objective Functions}\label{ggpopssasdasdax}
\subsection{RL Example for Resampling Case}\label{apen:rlcase}
RL problems are often captured by objective functions in the expectation form. Consider a RL meta learning problem,  where each task  corresponds to  a Markov decision process (MDP) with horizon $H$. Each RL task $\mathcal{T}_i$ corresponds to an initial state distribution $\rho_i$, a policy $\pi_w$ parameterized by $w$ that denotes a distribution over the action set given each state, and a transition distribution kernel $q_i(x_{t+1}|x_t,a_t)$ at time steps $t=0,...,H-1$. Then, the loss $l_i(w)$ is defined as negative total reward, i.e., 
\begin{align*}
(\text{RL example}):\quad l_i(w):= -\mathbb{E}_{\tau\sim p_i(\cdot| w)}[ \mathcal{R}(\tau)],
\end{align*}  
where $\tau= (s_0,a_0,s_1,a_1,...,s_{H-1},a_{H-1})$ is a  trajectory following the distribution $p_i(\cdot | w)$, and the reward $$\mathcal{R}(\tau) := \sum_{t=0}^{H-1} \gamma^t\mathcal{R}(s_t,a_t)$$ with $\mathcal{R}(\cdot)$ given as a reward function. The estimated gradient here is  $$\nabla l_i(w; \Omega):= \frac{1}{|\Omega|} \sum_{\tau \in \Omega} g_i(w; \tau),$$ where $g_i(w; \tau)$ is an unbiased policy gradient estimator s.t. $\mathbb{E}_{\tau \sim p_i(\cdot|w)} g_i(w;\tau)= \nabla l_i(w)$, e.g, REINFORCE~\citep{williams1992simple} or G(PO)MDP~\citep{baxter2001infinite}. In addition, the estimated Hessian  is $$\nabla^2 l_i(w; \Omega):= \frac{1}{|\Omega|}\sum_{\tau\in \Omega}H_i(w; \tau)$$, where $H_i(w;\tau)$ is an unbiased policy Hessian estimator, e.g., DiCE~\citep{foerster2018dice} or LVC~\citep{rothfuss2019promp}.

\subsection{Classification Example for Finite-Sum Case}
The risk minimization problem in classification often has a finite-sum objective function. For example, the 
mean-squared error (MSE) loss takes the form of  
\begin{align*}
(\text{Classification example}):\quad  l_{S_i}(w):= \frac{1}{|S_i|}\sum_{(x_j, y_j)\in S_i}\|y_j - \phi (w; x_i) \|^2 \quad (\text{similarly for } \, l_{T_i}(w)), 
\end{align*}
where $x_j, y_j$ are a feature-label pair and $\phi(w;\cdot)$ can be  a deep neural network parameterized by $w$.

\section{Derivation of Simplified Form of  Gradient $\nabla \mathcal{L}_i(w)$ in~\eqref{nablaF}}\label{simplifeid}
First note that $\mathcal{L}_i(w_k) = l_i(\widetilde w_{k,N}^i)$ and $\widetilde w_{k,N}^i$ is obtained by the following gradient descent updates
\begin{align}\label{gd_pr}
\widetilde w^i_{k, j+1} =\widetilde w^i_{k,j} - \alpha \nabla l_i(\widetilde w^i_{k,j}), \,\,j = 0, 1,..., N-1 \, \text{ with }\, \widetilde w^i_{k,0} := w_k.
\end{align}
Then, by the chain rule, we have 
\begin{align*}
\nabla \mathcal{L}_i(w_k) = \nabla_{w_k} l_i(\widetilde w_{k,N}^i) = \prod_{j=0}^{N-1}\nabla_{\widetilde w_{k,j}^i} \left(\widetilde w_{k,j+1}^i\right) \nabla l_i(\widetilde w_{k,N}^i), 
\end{align*}
which, in conjunction with \eqref{gd_pr}, implies that 
\begin{align*}
\nabla \mathcal{L}_i(w_k) =\prod_{j=0}^{N-1}\nabla_{\widetilde w_{k,j}^i} \left(\widetilde w^i_{k,j} - \alpha \nabla l_i(\widetilde w^i_{k,j})\right) \nabla l_i(\widetilde w_{k,N}^i) = 
\prod_{j=0}^{N-1} \left(I - \alpha \nabla^2 l_i(\widetilde w^i_{k,j})\right) \nabla l_i(\widetilde w_{k,N}^i),
\end{align*}
which finishes the proof. 

\section{Auxiliary Lemmas for MAML in Resampling Case}
\label{aux:lemma}

In this section, we derive some useful lemmas  to prove the propositions given  in Section~\ref{se:opro} on the properties of the meta gradient and the main results Theorem~\ref{th:mainonline} and Corollary~\ref{co:online}. 

The first lemma provides a bound on the difference between $\|\widetilde w_j^i - \widetilde u_j^i\|$ for $j=0,...,N,  i\in\mathcal{I}$, where $\widetilde w_j^i,\, j=0,...,N, i\in\mathcal{I}$ are given through the {\em gradient descent} updates in~\eqref{gd_w} and  $\widetilde u_j^i,\, j=0,...,N$ are defined in the same way. 
\begin{lemma}\label{d_u_w}
	For any $i\in\mathcal{I}$, $j=0,...,N$ and $w,u \in \mathbb{R}^d$, we have 
	\begin{align*}
	\left\|\widetilde w_j^i -\widetilde  u_j^i\right\| \leq (1+\alpha L)^j \|w-u\|. 
	\end{align*}
\end{lemma}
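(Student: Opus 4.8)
The plan is to prove this by a straightforward induction on the inner-loop index $j$, exploiting the $L$-Lipschitz continuity of $\nabla l_i$ (item~\ref{item2} of Assumption~\ref{assum:smooth}, together with $L=\max_i L_i$). The two sequences $\widetilde w_j^i$ and $\widetilde u_j^i$ obey the \emph{same} update map in~\eqref{gd_w}, namely a single gradient-descent step on $l_i$ with stepsize $\alpha$, and differ only in their initializations $w$ and $u$. So the whole argument reduces to showing that one such step expands distances by at most the factor $(1+\alpha L)$, and then iterating this bound $j$ times.

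For the base case $j=0$, I would note that $\widetilde w_0^i = w$ and $\widetilde u_0^i = u$, so $\|\widetilde w_0^i - \widetilde u_0^i\| = \|w-u\| = (1+\alpha L)^0\|w-u\|$, giving the claim trivially. For the inductive step, assuming the bound holds at index $j$, I would subtract the two update equations to write
\begin{align*}
\widetilde w_{j+1}^i - \widetilde u_{j+1}^i = \big(\widetilde w_j^i - \widetilde u_j^i\big) - \alpha\big(\nabla l_i(\widetilde w_j^i) - \nabla l_i(\widetilde u_j^i)\big),
\end{align*}
and then apply the triangle inequality followed by the Lipschitz bound $\|\nabla l_i(\widetilde w_j^i) - \nabla l_i(\widetilde u_j^i)\| \leq L\|\widetilde w_j^i - \widetilde u_j^i\|$ to obtain
\begin{align*}
\|\widetilde w_{j+1}^i - \widetilde u_{j+1}^i\| \leq (1+\alpha L)\,\|\widetilde w_j^i - \widetilde u_j^i\|.
\end{align*}
Combining this one-step contraction-type estimate with the induction hypothesis $\|\widetilde w_j^i - \widetilde u_j^i\| \leq (1+\alpha L)^j\|w-u\|$ immediately yields the bound $(1+\alpha L)^{j+1}\|w-u\|$ at index $j+1$, closing the induction.

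Honestly, there is no serious obstacle here: the result is a textbook stability estimate for gradient descent under a smooth objective, and the only ingredients are the shared update rule, the triangle inequality, and Lipschitzness of the gradient. The one point worth stating carefully is that the estimate holds uniformly in the task index $i$ and for arbitrary initializations $w,u$, since the later proofs (e.g.\ of Proposition~\ref{th:lipshiz}) invoke it to relate the divergence of two entire inner optimization paths back to the distance $\|w-u\|$ between the meta parameters. I would therefore keep the statement fully general in $i$ and $(w,u)$ rather than specializing to any particular pair.
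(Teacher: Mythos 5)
Your proof is correct and is essentially identical to the paper's own argument: the paper derives the same one-step bound $\|\widetilde w_m^i - \widetilde u_m^i\| \leq (1+\alpha L)\|\widetilde w_{m-1}^i - \widetilde u_{m-1}^i\|$ via the triangle inequality and Lipschitz continuity of $\nabla l_i$, and then telescopes from $m=1$ to $j$, which is just your induction phrased differently. No gaps.
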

\begin{proof}
	Based on the updates that $\widetilde w_m^i = \widetilde w_{m-1}^i - \alpha\nabla l_i(\widetilde w_{m-1}^i)$ and $\widetilde u_m^i = \widetilde u_{m-1}^i - \alpha\nabla l_i(\widetilde u_{m-1}^i)$, we obtain, for any $i \in\mathcal{I}$,
	\begin{align*}
	\|\widetilde w_m^i - \widetilde u_m^i\|  =& \|\widetilde w_{m-1}^i - \alpha\nabla l_i(\widetilde w_{m-1}^i) -\widetilde u_{m-1}^i + \alpha\nabla l_i(\widetilde u_{m-1}^i) \| \nonumber
	\\ \overset{(i)}\leq & \|\widetilde w_{m-1}^i -\widetilde u_{m-1}^i\| + \alpha L\|\widetilde w_{m-1}^i - \widetilde u_{m-1}^i\|  \nonumber
	\\ \leq & (1+\alpha L)  \|\widetilde w_{m-1}^i -\widetilde u_{m-1}^i\|,
	\end{align*} 
	where (i) follows from the triangle inequality. Telescoping the above inequality over $m$ from $1$ to $j$, we obtain 
	\begin{align*}
	\left\|\widetilde w_j^i - \widetilde u_j^i\right\| \leq (1+\alpha L)^j \|\widetilde w^i_0-\widetilde u^i_0\|, 
	\end{align*}
	which, in conjunction with the fact that $\widetilde w_0^i = w$ and $\widetilde u_0^i = u$, finishes the proof.
\end{proof}
The following lemma provides an upper bound on $\|\nabla l_i(\widetilde w_j^i)\|$ for all $i\in\mathcal{I}$ and $j=0,..., N$, where $\widetilde w_j^i$ is defined in the same way as in Lemma~\ref{d_u_w}. 
\begin{lemma}\label{le:jiw}
	For any $i\in\mathcal{I}$,  $j=0,...,N$ and $w \in \mathbb{R}^d$, we have 
	\begin{align*}
	\|\nabla l_i(\widetilde w_j^i)\| \leq (1+\alpha L)^j \|\nabla l_i(w)\|.
	\end{align*}
\end{lemma}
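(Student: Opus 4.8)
The plan is to prove the bound by a one-step recursion followed by telescoping, exactly mirroring the structure of Lemma~\ref{d_u_w}. The key observation is that along the gradient-descent path~\eqref{gd_w}, consecutive iterates differ by exactly $\alpha \nabla l_i(\widetilde w_{m-1}^i)$, so the difference of gradients at consecutive iterates can be controlled by the Lipschitz continuity of $\nabla l_i$ from item~\ref{item2} of Assumption~\ref{assum:smooth}.

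First I would establish the single-step contraction-type bound. Writing $\nabla l_i(\widetilde w_m^i) = \bigl(\nabla l_i(\widetilde w_m^i) - \nabla l_i(\widetilde w_{m-1}^i)\bigr) + \nabla l_i(\widetilde w_{m-1}^i)$ and applying the triangle inequality gives
\begin{align*}
\|\nabla l_i(\widetilde w_m^i)\| \leq \|\nabla l_i(\widetilde w_m^i) - \nabla l_i(\widetilde w_{m-1}^i)\| + \|\nabla l_i(\widetilde w_{m-1}^i)\|.
\end{align*}
The $L$-Lipschitz property then bounds the first term by $L\|\widetilde w_m^i - \widetilde w_{m-1}^i\|$, and since the update rule~\eqref{gd_w} yields $\widetilde w_m^i - \widetilde w_{m-1}^i = -\alpha \nabla l_i(\widetilde w_{m-1}^i)$, this equals $\alpha L \|\nabla l_i(\widetilde w_{m-1}^i)\|$. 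Combining the two, we obtain
\begin{align*}
\|\nabla l_i(\widetilde w_m^i)\| \leq (1+\alpha L)\,\|\nabla l_i(\widetilde w_{m-1}^i)\|.
\end{align*}

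Next I would telescope this recursion over $m$ from $1$ to $j$, which multiplies the factor $(1+\alpha L)$ exactly $j$ times and reduces the right-hand side to $(1+\alpha L)^j \|\nabla l_i(\widetilde w_0^i)\|$. Using the initialization $\widetilde w_0^i = w$ from~\eqref{gd_w} finishes the argument. I do not anticipate any serious obstacle here: the proof is a direct consequence of Lipschitz smoothness and the explicit form of the inner-loop update, and the only point requiring a moment of care is correctly identifying $\widetilde w_m^i - \widetilde w_{m-1}^i = -\alpha \nabla l_i(\widetilde w_{m-1}^i)$ so that the Lipschitz bound produces the clean factor $\alpha L$ rather than a distance between unrelated points.
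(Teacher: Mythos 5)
Your proposal is correct and follows exactly the same route as the paper's proof: the add-and-subtract decomposition at consecutive iterates, the triangle inequality, the $L$-Lipschitz bound combined with the update identity $\widetilde w_m^i - \widetilde w_{m-1}^i = -\alpha \nabla l_i(\widetilde w_{m-1}^i)$ to get the one-step factor $(1+\alpha L)$, and telescoping from the initialization $\widetilde w_0^i = w$. No gaps.
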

\begin{proof}
	For $m\geq1$, we have 
	\begin{align*}
	\|\nabla l_i(\widetilde w_m^i)\| = & \|\nabla l_i(\widetilde w_m^i) - \nabla l_i(\widetilde w_{m-1}^i) + \nabla l_i(\widetilde w_{m-1}^i)\|
	\\\leq & \|\nabla l_i(\widetilde w_m^i) - \nabla l_i(\widetilde w_{m-1}^i) \| +  \|\nabla l_i(\widetilde w_{m-1}^i)\|
	\\\leq & L\|\widetilde w_m^i - \widetilde w_{m-1}^i\| +\|\nabla l_i(\widetilde w_{m-1}^i)\|\leq (1+\alpha L)\|\nabla l_i(\widetilde w_{m-1}^i)\|,
	\end{align*}
	where the last inequality follows from the update $\widetilde w_m^i =  \widetilde w_{m-1}^i - \alpha \nabla l_i(\widetilde w_{m-1}^i)$. Then, telescoping the above inequality over $m$ from $1$ to $j$ yields
	\begin{align*}
	\|\nabla l_i(\widetilde w_j^i)\| \leq (1+\alpha L)^j \|\nabla l_i(\widetilde w_0^i)\|,
	\end{align*}
	which, combined with the fact that $\widetilde w_0^i = w$, finishes the proof.  
\end{proof}
The following lemma gives an upper bound on  the quantity $\big\|I - \prod_{j=0}^m(I - \alpha V_j)\big\|$ for all matrices  $V_j \in \mathbb{R}^{d\times d},j=0,...,m$ that satisfy $\|V_j\|\leq L$.
\begin{lemma}\label{le:prd}
	For all matrices $V_j \in \mathbb{R}^{d\times d}, j =0,..., m$ that satisfy $\|V_j\|\leq L$, we have 
	\begin{align*}
	\Big\|I - \prod_{j=0}^m(I - \alpha V_j)\Big\| \leq (1+\alpha L)^{m+1} - 1.
	\end{align*}
\end{lemma}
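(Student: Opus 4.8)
The plan is to prove the bound by induction on $m$, relying only on submultiplicativity of the operator norm together with the triangle inequality. The two elementary facts I will use throughout are that each factor satisfies $\|I - \alpha V_j\| \le 1 + \alpha\|V_j\| \le 1+\alpha L$, and consequently, by submultiplicativity, every partial product obeys $\big\|\prod_{j=0}^{k}(I - \alpha V_j)\big\| \le (1+\alpha L)^{k+1}$. These are the only structural inputs the argument needs.

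For the base case $m=0$ I would simply note that $\|I - (I - \alpha V_0)\| = \alpha\|V_0\| \le \alpha L = (1+\alpha L)^1 - 1$. For the inductive step, writing $P_{m} := \prod_{j=0}^{m}(I - \alpha V_j)$ and peeling off the last factor as $P_m = P_{m-1}(I - \alpha V_m)$, I would use the decomposition
$$I - P_m = (I - P_{m-1}) + \alpha P_{m-1} V_m,$$
so that the triangle inequality gives $\|I - P_m\| \le \|I - P_{m-1}\| + \alpha \|P_{m-1}\|\,\|V_m\|$. Invoking the induction hypothesis $\|I - P_{m-1}\| \le (1+\alpha L)^m - 1$, the partial-product bound $\|P_{m-1}\| \le (1+\alpha L)^m$, and $\|V_m\| \le L$, the right-hand side becomes $(1+\alpha L)^m - 1 + \alpha L(1+\alpha L)^m = (1+\alpha L)^{m+1} - 1$, which closes the induction.

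Alternatively, one can avoid induction entirely through the telescoping identity $I - \prod_{j=0}^m(I-\alpha V_j) = \sum_{k=0}^m \big(\prod_{j=0}^{k-1}(I-\alpha V_j)\big)\alpha V_k$, bounding the $k$-th summand by $(1+\alpha L)^k \alpha L$ and summing the resulting geometric series, which again yields exactly $(1+\alpha L)^{m+1} - 1$. There is no genuine obstacle in either route: the only points requiring a little care are the factoring order (the bound is insensitive to it, since the operator norm is submultiplicative either way) and checking that the geometric sum collapses cleanly, with the $\alpha L$ in the numerator cancelling the $\alpha L$ from the denominator $(1+\alpha L)-1$. I would present the induction as the primary argument for its transparency.
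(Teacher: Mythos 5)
Your proof is correct, but it takes a different route from the paper. The paper expands the full product $\prod_{j=0}^m(I-\alpha V_j)$ into its multinomial sum ($I$ minus the sum of the $\alpha V_j$, plus the sum of pairwise products $\alpha^2 V_pV_q$, and so on), bounds the group of degree-$k$ terms by ${\rm C}_{m+1}^{k}(\alpha L)^k$, and recognizes the resulting sum as $(1+\alpha L)^{m+1}-1$ via the binomial theorem. You instead peel off one factor at a time --- either as an induction with the identity $I-P_m=(I-P_{m-1})+\alpha P_{m-1}V_m$, or equivalently via the telescoping sum $I-P_m=\sum_{k=0}^m \alpha P_{k-1}V_k$ --- and only ever invoke the triangle inequality and submultiplicativity. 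Both arguments are sound and yield the identical bound; yours avoids the combinatorial bookkeeping of enumerating $k$-element subsets and is somewhat more transparent about where each factor of $(1+\alpha L)$ comes from, while the paper's expansion makes the match with the binomial theorem explicit in a single display. Your remark that the noncommutativity of the matrices is harmless is correct: whichever side you peel from, every partial product still has norm at most $(1+\alpha L)^{k}$, which is all the argument uses.
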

\begin{proof}
	First note that the product $ \prod_{j=0}^m(I - \alpha V_j) $ can be expanded as  
	\begin{align}
	\prod_{j=0}^m(I - \alpha V_j) = I - \sum_{j=0}^m \alpha V_j + \sum_{0\leq p < q \leq m} \alpha^2 V_p V_q+\cdots+(-1)^{m+1} \alpha^{m+1}\prod_{j=0}^m V_j.\nonumber
	\end{align}
	Then, by using $\|V_j\|\leq L$ for $j=0,...,m$, we have 
	\begin{align}
	\Big\|I - \prod_{j=0}^m(I - \alpha V_j)\Big\|  \leq & \Big\|\sum_{j=0}^m \alpha V_j \Big\| + \Big\|\sum_{0\leq p < q \leq m} \alpha^2 V_pV_q \Big\| + \cdots + \Big\| \alpha^{m+1}\prod_{j=0}^m V_j\Big\| \nonumber
	\\\leq & {\rm C}^1_{m+1} \alpha L + {\rm C}_{m+1}^2 (\alpha L)^2 + \cdots + {\rm C}_{m+1}^{m+1} (\alpha L)^{m+1}\nonumber
	\\ = & (1+\alpha L)^{m+1} - 1, \nonumber
	\end{align}
	where the notion $C_n^k$ denotes the number of $k$-element subsets of a set of size $n$. Then, the proof is complete. 
\end{proof}
Recall the gradient { $\nabla \mathcal{L}_i(w) = \prod_{j=0}^{N-1}(I-\alpha \nabla^2 l_i(\widetilde w^i_{j}))\nabla l_i( \widetilde w^i _{N})$}, where {  $ \widetilde w_{j}^i, i\in \mathcal{I}, j=0,..., N$} are given by the gradient descent steps in \eqref{gd_w} and $\widetilde w_{0}^i = w$ for all tasks $i \in \mathcal{I}$.  
Next, we provide an upper bound on the difference { $\|\nabla l_i(w) - \nabla \mathcal{L}_i(w)\|$}.
\begin{lemma}\label{le:fF}
	For any $i \in\mathcal{I}$ and $w \in \mathbb{R}^d$, we have 
	\begin{align*}
	\|\nabla l_i(w) - \nabla \mathcal{L}_i(w)\| \leq C_l \|\nabla l_i(w)\|,
	\end{align*}
	where $C_l$ is a positive constant given by 
	\begin{align}\label{eq:cfn}
	C_l = (1+\alpha L)^{2N} - 1 > 0.
	\end{align}
\end{lemma}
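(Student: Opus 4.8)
The plan is to decompose the difference $\nabla l_i(w) - \nabla \mathcal{L}_i(w)$ by the triangle inequality and then control each piece with the auxiliary lemmas already established. Writing $P := \prod_{j=0}^{N-1}(I - \alpha \nabla^2 l_i(\widetilde w^i_j))$ so that $\nabla \mathcal{L}_i(w) = P \nabla l_i(\widetilde w^i_N)$, I would add and subtract $\nabla l_i(\widetilde w^i_N)$ to obtain
\begin{align*}
\nabla l_i(w) - \nabla \mathcal{L}_i(w) = \big(\nabla l_i(w) - \nabla l_i(\widetilde w^i_N)\big) + (I - P)\nabla l_i(\widetilde w^i_N),
\end{align*}
and bound the two resulting terms separately.

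For the first term, I would control the inner-loop displacement $\|w - \widetilde w^i_N\|$. Since $\widetilde w^i_N = w - \alpha \sum_{j=0}^{N-1} \nabla l_i(\widetilde w^i_j)$ by the gradient-descent updates in \eqref{gd_w}, the triangle inequality together with Lemma~\ref{le:jiw} gives $\|w - \widetilde w^i_N\| \leq \alpha \sum_{j=0}^{N-1}(1+\alpha L)^j \|\nabla l_i(w)\| = \frac{(1+\alpha L)^N - 1}{L}\|\nabla l_i(w)\|$, where the last step sums the finite geometric series. Applying the $L$-Lipschitz property of $\nabla l_i$ (item~\ref{item2} of Assumption~\ref{assum:smooth}) then yields $\|\nabla l_i(w) - \nabla l_i(\widetilde w^i_N)\| \leq L\|w - \widetilde w^i_N\| \leq \big((1+\alpha L)^N - 1\big)\|\nabla l_i(w)\|$.

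For the second term, I would invoke Lemma~\ref{le:prd} with $V_j = \nabla^2 l_i(\widetilde w^i_j)$, each satisfying $\|V_j\| \leq L$ (item~\ref{item2} of Assumption~\ref{assum:smooth}), to get $\|I - P\| \leq (1+\alpha L)^N - 1$, and Lemma~\ref{le:jiw} again to get $\|\nabla l_i(\widetilde w^i_N)\| \leq (1+\alpha L)^N \|\nabla l_i(w)\|$, so that $\|(I-P)\nabla l_i(\widetilde w^i_N)\| \leq \big((1+\alpha L)^N - 1\big)(1+\alpha L)^N \|\nabla l_i(w)\|$.

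Finally, I would add the two bounds and factor, using the identity $(a-1) + (a-1)a = a^2 - 1$ with $a = (1+\alpha L)^N$, which produces exactly the constant $C_l = (1+\alpha L)^{2N} - 1$ and completes the proof. The argument is essentially routine once the decomposition is in place; the only point requiring care is the geometric-series bound on $\|w - \widetilde w^i_N\|$, which is where the factor $\frac{(1+\alpha L)^N - 1}{L}$, and hence the clean collapse into $C_l$, originates.
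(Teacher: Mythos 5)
Your proof is correct, and it reaches the exact constant $C_l=(1+\alpha L)^{2N}-1$ by a genuinely different and more elementary route than the paper. The paper first expands $\nabla l_i(\widetilde w^i_N)$ via the mean value theorem for vector-valued functions, writing it as $\big(I-\alpha\sum_t r_t\nabla^2 l_i(w_t')\big)\nabla l_i(w)$ minus a Hessian-weighted sum of the inner-loop gradients; it then groups the MVT Hessian factor together with the product $K(N)=\prod_{j=0}^{N-1}(I-\alpha\nabla^2 l_i(\widetilde w^i_j))$ so that Lemma~\ref{le:prd} is applied to a product of $N+1$ matrices, yielding the intermediate bound $\big((1+\alpha L)^{N+1}-1\big)+(1+\alpha L)^{N+1}\big((1+\alpha L)^{N-1}-1\big)$, which collapses to the same $C_l$. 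Your decomposition $\nabla l_i(w)-\nabla\mathcal{L}_i(w)=\big(\nabla l_i(w)-\nabla l_i(\widetilde w^i_N)\big)+(I-P)\nabla l_i(\widetilde w^i_N)$ dispenses with the MVT entirely: the first piece is handled by the Lipschitz property of $\nabla l_i$ together with the geometric-series bound $\|w-\widetilde w^i_N\|\le\frac{(1+\alpha L)^N-1}{L}\|\nabla l_i(w)\|$ (via Lemma~\ref{le:jiw}), and the second by Lemma~\ref{le:prd} applied to the $N$-fold product plus Lemma~\ref{le:jiw} again, after which $(a-1)+(a-1)a=a^2-1$ with $a=(1+\alpha L)^N$ gives the claim. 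Each individual step checks out (in particular Lemma~\ref{le:prd} with $m=N-1$ indeed gives $\|I-P\|\le(1+\alpha L)^N-1$), so your argument is a clean, citation-free alternative; the paper's MVT route buys nothing extra here beyond stylistic parallelism with the analogous finite-sum Lemma~\ref{tiis}.
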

\begin{proof}
	First note that $\widetilde w_N^i$ can be rewritten as $\widetilde w_N^i = w - \alpha \sum_{j=0}^{N-1} \nabla l_i\big(\widetilde w_j^i\big)$. Then, based on  the mean value theorem (MVT) for vector-valued functions~\citep{mcleod1965mean}, we have, there exist constants $r_t, t=1,...,d$ satisfying $\sum_{t=1}^d r_t =1$ and vectors $w_t^\prime\in\mathbb{R}^d, t=1,...,d$ such that     
	\begin{align}\label{mvts}
	\nabla l_i( \widetilde w^i _{N}) =& \nabla l_i\Big( w - \alpha \sum_{j=0}^{N-1} \nabla l_i\big(\widetilde w_j^i\big)\Big)= \nabla l_i(w) + \Big(\sum_{t=1}^dr_t\nabla^2 l_i (w_t^\prime)\Big) \Big(-\alpha \sum_{j=0}^{N-1} \nabla l_i\big(\widetilde w_j^i\big)\Big) \nonumber
	\\ = &  \Big(I- \alpha\sum_{t=1}^dr_t\nabla^2 l_i (w_t^\prime)\Big)\nabla l_i(w) - \alpha \sum_{t=1}^dr_t\nabla^2 l_i (w_t^\prime) \sum_{j=1}^{N-1} \nabla l_i\big(\widetilde w_j^i\big).
	\end{align}
	For simplicity, we define  $K(N):=  \prod_{j=0}^{N-1}(I-\alpha \nabla^2 l_i(\widetilde w^i_{j}))$. Then, using~\eqref{mvts},  we write $	\|\nabla l_i(w) - \nabla \mathcal{L}_i(w)\|$ as 
	\begin{align}
	\|\nabla l_i(w) -& \nabla \mathcal{L}_i(w)\| = \|\nabla l_i(w) - K(N)\nabla l_i(\widetilde w_N^i)\| \nonumber
	\\ =& \Big\|\nabla l_i(w) - K(N)\Big(I- \alpha\sum_{t=1}^dr_t\nabla^2 l_i (w_t^\prime)\Big)\nabla l_i(w)  + \alpha K(N)\sum_{t=1}^dr_t\nabla^2 l_i (w_t^\prime)\sum_{j=1}^{N-1} \nabla l_i\big(\widetilde w_j^i\big)\Big\| \nonumber
	\\ \leq & \Big\|\Big(I - K(N)\Big(I- \alpha\sum_{t=1}^dr_t\nabla^2 l_i (w_t^\prime)\Big)\Big)\nabla l_i(w) \Big\| + \Big\| \alpha K(N)\sum_{t=1}^dr_t\nabla^2 l_i (w_t^\prime)\sum_{j=1}^{N-1} \nabla l_i\big(\widetilde w_j^i\big)\Big\| \nonumber
	\\ \overset{(i)}\leq & \Big\|\Big(I - K(N)\Big(I- \alpha\sum_{t=1}^dr_t\nabla^2 l_i (w_t^\prime)\Big)\Big)\nabla l_i(w) \Big\| + \alpha L (1+\alpha L)^N \sum_{j=1}^{N-1}  \Big\| \nabla l_i\big(\widetilde w_j^i\big)\Big\| \nonumber
	\\\overset{(ii)}\leq & \Big\|I - K(N)\Big(I- \alpha\sum_{t=1}^dr_t\nabla^2 l_i (w_t^\prime)\Big)\Big\|\|\nabla l_i(w)\| + \alpha L (1+\alpha L)^N \sum_{j=1}^{N-1}  (1+\alpha L)^j \|\nabla l_i(w)\| \nonumber
	\\\leq & \Big\|I - K(N)\Big(I- \alpha\sum_{t=1}^dr_t\nabla^2 l_i (w_t^\prime)\Big)\Big\|\|\nabla l_i(w)\| +(1+\alpha L)^{N+1} ((1+\alpha L)^{N-1}-1)\|\nabla l_i(w)\| \nonumber
	\\\overset{(iii)}\leq & ((1+\alpha L)^{N+1}-1)\|\nabla l_i(w)\| + (1+\alpha L)^{N+1} ((1+\alpha L)^{N-1}-1)\|\nabla l_i(w)\|  \nonumber
	\\ = & ((1+\alpha L)^{2N} - 1)\|\nabla l_i(w)\|,\nonumber
	\end{align}
	where (i) follows from the fact that $\|\nabla^2 l_i(u)\| \leq L$ for any $u\in \mathbb{R}^d$ and $\sum_{t=1}^d r_t =1$,  and the inequality that $\|\sum_{j=1} ^n a_j\|\leq \sum_{j=1} ^n\|a_j\|$,  (ii) follows from  Lemma~\ref{le:jiw}, and (iii) follows from Lemma~\ref{le:prd}.
\end{proof}

Recall that the expected value of the gradient of the loss $\nabla l(w):=\mathbb{E}_{i\sim p(\mathcal{T})} \nabla l_i(w)$ and the objective function $\nabla \mathcal{L}(w): = \nabla \mathcal{L}_i(w)$. Based on the above lemmas, we next provide an upper bound on $\|\nabla l(w)\|$ using $\|\nabla \mathcal{L}(w)\|$. 
\begin{lemma}\label{le:lL}
	For any $w\in\mathbb{R}^d$, we have 
	\begin{align*}
	\|\nabla l(w)\| \leq \frac{1}{1-C_l} \|\nabla \mathcal{L}(w)\| + \frac{C_l}{1-C_l} \sigma,
	\end{align*}
	where the constant $C_l$ is given by 
     \begin{align*}
	C_l = (1+\alpha L)^{2N} - 1.
	\end{align*}
\end{lemma}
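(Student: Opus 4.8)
The plan is to relate $\nabla l(w)$ and $\nabla \mathcal{L}(w)$ through the per-task bound in Lemma~\ref{le:fF} and then close a self-referential inequality using the variance condition in Assumption~\ref{a2}. First I would write the trivial decomposition $\nabla l(w) = \nabla \mathcal{L}(w) + (\nabla l(w) - \nabla \mathcal{L}(w))$ and apply the triangle inequality, so that $\|\nabla l(w)\| \le \|\nabla \mathcal{L}(w)\| + \|\nabla l(w) - \nabla \mathcal{L}(w)\|$. Since $\nabla l(w) = \mathbb{E}_i \nabla l_i(w)$ and $\nabla \mathcal{L}(w) = \mathbb{E}_i \nabla \mathcal{L}_i(w)$, Jensen's inequality followed by Lemma~\ref{le:fF} gives $\|\nabla l(w) - \nabla \mathcal{L}(w)\| \le \mathbb{E}_i\|\nabla l_i(w) - \nabla \mathcal{L}_i(w)\| \le C_l\, \mathbb{E}_i\|\nabla l_i(w)\|$, where $C_l = (1+\alpha L)^{2N}-1$.

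The key step is to bound $\mathbb{E}_i\|\nabla l_i(w)\|$ back in terms of $\|\nabla l(w)\|$. I would add and subtract $\nabla l(w)$ inside the norm and use the triangle inequality to obtain $\mathbb{E}_i\|\nabla l_i(w)\| \le \|\nabla l(w)\| + \mathbb{E}_i\|\nabla l_i(w) - \nabla l(w)\|$; then, by the Cauchy--Schwarz (or Jensen) inequality together with Assumption~\ref{a2}, $\mathbb{E}_i\|\nabla l_i(w) - \nabla l(w)\| \le \sqrt{\mathbb{E}_i\|\nabla l_i(w) - \nabla l(w)\|^2} \le \sigma$. Combining these yields $\mathbb{E}_i\|\nabla l_i(w)\| \le \|\nabla l(w)\| + \sigma$.

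Substituting this into the first chain produces the self-referential bound $\|\nabla l(w)\| \le \|\nabla \mathcal{L}(w)\| + C_l(\|\nabla l(w)\| + \sigma)$. Collecting the $\|\nabla l(w)\|$ terms gives $(1 - C_l)\|\nabla l(w)\| \le \|\nabla \mathcal{L}(w)\| + C_l\sigma$, and dividing through by $1 - C_l$ delivers the claimed inequality. The only subtlety, and the point I expect to be the main obstacle, is that this final division is valid only when $1 - C_l > 0$, i.e. $(1+\alpha L)^{2N} < 2$; this is exactly guaranteed by the standing stepsize restriction $\alpha < (2^{1/(2N)}-1)/L$ assumed throughout, which ensures $C_l \in (0,1)$ and keeps the coefficients $\tfrac{1}{1-C_l}$ and $\tfrac{C_l}{1-C_l}$ positive and finite.
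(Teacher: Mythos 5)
Your proof is correct and follows essentially the same route as the paper's: the same decomposition $\nabla l(w) = \nabla\mathcal{L}(w) + (\nabla l(w) - \nabla\mathcal{L}(w))$, the same invocation of Lemma~\ref{le:fF} to bound the difference by $C_l\,\mathbb{E}_i\|\nabla l_i(w)\|$, the same use of Assumption~\ref{a2} to get $\mathbb{E}_i\|\nabla l_i(w)\|\le \|\nabla l(w)\|+\sigma$, and the same rearrangement. Your explicit remark that the final division requires $1-C_l>0$, i.e.\ $(1+\alpha L)^{2N}<2$, is a point the paper leaves implicit but which is indeed guaranteed by the standing stepsize condition $\alpha<(2^{1/(2N)}-1)/L$.
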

\begin{proof}
	Based on the definition of $\nabla l(w)$, we have 
	\begin{align}
	\|\nabla l(w)\| =& \|\mathbb{E}_{i\sim p(\mathcal{T})} (\nabla l_i(w) -\nabla \mathcal{L}_i(w) +\nabla \mathcal{L}_i(w) )\| \nonumber
	\\\leq& \|\mathbb{E}_{i\sim p(\mathcal{T})} \nabla \mathcal{L}_i(w) \| + \|\mathbb{E}_{i\sim p(\mathcal{T})} (\nabla l_i(w) -\nabla \mathcal{L}_i(w)  ) \|   \nonumber
	\\\leq & \|\nabla \mathcal{L}(w) \| + \mathbb{E}_{i\sim p(\mathcal{T})} \|\nabla l_i(w) -\nabla \mathcal{L}_i(w)   \|   \nonumber
	\\\overset{(i)}\leq & \|\nabla \mathcal{L}(w) \| + C_l\mathbb{E}_{i\sim p(\mathcal{T})}\|\nabla l_i(w)   \|  \nonumber
	\\ \overset{(ii)} \leq & \|\nabla \mathcal{L}(w) \| +  C_l(\|\nabla l(w)   \| + \sigma), \nonumber
	\end{align}
	where (i) follows from Lemma~\ref{le:fF}, and (ii) follows from Assumption~\ref{a2}. Then, rearranging the above inequality completes the proof.  
\end{proof}
Recall from~\eqref{hatlw} that we choose the meta stepsize $\beta_k = \frac{1}{C_\beta \widehat L_{w_k}} $, where $C_\beta$ is a positive constant and { $ \widehat L_{w_k} = (1+\alpha L)^{2N}L + C_\mathcal{L} \frac{1}{|B_k^\prime|}\sum_{i\in B_k^\prime}\|\nabla l_i(w_k; D_{L_k}^i)\|$}. Using an approach similar to Lemma 4.11 in~\cite{fallah2020convergence}, we establish the following lemma to provide the first- and second-moment bounds for $\beta_k$.
%
\begin{lemma}\label{le:xiaodege} 
Suppose that  Assumptions~\ref{assum:smooth},~\ref{a2} and~\ref{a3} hold. 
	Set the meta stepsize $\beta_k = \frac{1}{C_\beta \widehat L_{w_k}} $ with  $\widehat L_{w_k}$  given by~\eqref{hatlw}, where  $|B_k^\prime| > \frac{4C^2_{\mathcal{L}}\sigma^2}{3(1+\alpha L)^{4N}L^2}$ and $|D_{L_k}^i| > \frac{64\sigma^2_g C_\mathcal{L}^2}{(1+\alpha L)^{4N}L^2}$ for all $i \in B_k^\prime$. Then, conditioning on $w_k$, we have 
\begin{align*}
\mathbb{E} \beta_k  \geq  \frac{4}{C_\beta} \frac{1}{5 L_{w_k}},\quad \mathbb{E}\beta^2_k  \leq  \frac{4}{C_\beta^2} \frac{1}{L^2_{w_k}}, 
\end{align*}
where $L_{w_k} = (1+\alpha L)^{2N}L + C_\mathcal{L} \mathbb{E}_{i\sim p(\mathcal{T})}\|\nabla l_i(w_k)\|$ with  $C_\mathcal{L}$ given in~\eqref{clcl}. 
\end{lemma}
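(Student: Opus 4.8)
The plan is to treat $\widehat L_{w_k}$ as a noisy estimate of the quantity $L_{w_k}$, which is \emph{deterministic} once we condition on $w_k$, and then to control the first two moments of its reciprocal. Write $a := (1+\alpha L)^{2N}L$, so that $\widehat L_{w_k} = a + \tfrac{C_\mathcal{L}}{|B_k^\prime|}\sum_{i\in B_k^\prime}\|\nabla l_i(w_k; D_{L_k}^i)\|$ and $L_{w_k} = a + C_\mathcal{L}\,\mathbb{E}_i\|\nabla l_i(w_k)\|$, and note the deterministic floor $\widehat L_{w_k} \ge a$ together with $a \le L_{w_k}$. First I would record a bias and a variance estimate for $\widehat L_{w_k}$. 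Because $\mathbb{E}_\tau\|\nabla l_i(w;\tau)\| \ge \|\nabla l_i(w)\|$ by Jensen's inequality, the estimator $\widehat L_{w_k}$ overestimates $L_{w_k}$ in mean; Assumption~\ref{a3} gives $\mathbb{E}\|\nabla l_i(w_k; D_{L_k}^i)\| \le \|\nabla l_i(w_k)\| + \sigma_g/\sqrt{|D_{L_k}^i|}$, so $0 \le \mathbb{E}\widehat L_{w_k} - L_{w_k} \le C_\mathcal{L}\sigma_g/\sqrt{|D_{L_k}^i|}$, and the hypothesis on $|D_{L_k}^i|$ forces this below $a/8 \le L_{w_k}/8$. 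For the variance, using independence across the sampled tasks and data sets together with the reverse triangle inequality, Assumptions~\ref{a2} and~\ref{a3} yield $\mathrm{Var}(\widehat L_{w_k}) \le \tfrac{C_\mathcal{L}^2}{|B_k^\prime|}\big(\sigma^2 + \sigma_g^2/|D_{L_k}^i|\big)$, which the hypotheses on $|B_k^\prime|$ and $|D_{L_k}^i|$ make a controlled fraction of $a^2 \le L_{w_k}^2$.

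For the first-moment (lower) bound I would invoke convexity of $x\mapsto 1/x$: its tangent line at $L_{w_k}$ gives $\tfrac{1}{\widehat L_{w_k}} \ge \tfrac{2}{L_{w_k}} - \tfrac{\widehat L_{w_k}}{L_{w_k}^2}$. Taking the conditional expectation and inserting the bias bound $\mathbb{E}\widehat L_{w_k} \le L_{w_k}(1+\tfrac18)$ yields $\mathbb{E}\big(1/\widehat L_{w_k}\,\big|\,w_k\big) \ge \tfrac{7}{8L_{w_k}} \ge \tfrac{4}{5L_{w_k}}$, and multiplying by $1/C_\beta$ gives the claimed lower bound on $\mathbb{E}\beta_k$. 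Note that this direction uses only the bias estimate and not the variance.

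The second-moment (upper) bound on $\mathbb{E}\big(1/\widehat L_{w_k}^2\,\big|\,w_k\big)$ is the main obstacle, since $1/x^2$ is convex and Jensen's inequality points the wrong way; here I would argue by concentration and split on the size of the deterministic number $L_{w_k}$ relative to the floor $a$. When $L_{w_k} \le 2a$ the floor already suffices: $1/\widehat L_{w_k}^2 \le 1/a^2 \le 4/L_{w_k}^2$. When $L_{w_k} > 2a$ the expected gradient norm is large, so the variance computed above is a controlled fraction of $L_{w_k}^2$; I would then introduce the event $\{\widehat L_{w_k} \ge \theta L_{w_k}\}$, bound $1/\widehat L_{w_k}^2$ by $1/(\theta^2 L_{w_k}^2)$ on it and by the floor $1/a^2$ on its complement, and control the complementary probability through Chebyshev's inequality using $\mathbb{E}\widehat L_{w_k} \ge L_{w_k}$. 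The quantitative hypotheses on $|B_k^\prime|$ and $|D_{L_k}^i|$ are exactly the ones needed so that, after choosing $\theta$, these two estimates combine to the constant $4$, as in the template Lemma~4.11 of~\cite{fallah2020convergence}; dividing by $C_\beta^2$ then gives $\mathbb{E}\beta_k^2 \le \tfrac{4}{C_\beta^2 L_{w_k}^2}$. Throughout, the delicate points are tracking the upward Jensen bias of the norm estimator and ensuring that the crude floor bound on the rare ``underestimation'' event is dominated by its variance-controlled probability.
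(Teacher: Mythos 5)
Your overall architecture (deterministic floor $a=(1+\alpha L)^{2N}L$, upward Jensen bias of the norm estimator controlled by $|D_{L_k}^i|$, variance controlled by $|B_k^\prime|$ and $|D_{L_k}^i|$) matches the paper's, and your first-moment argument is correct: the tangent-line bound $1/\widehat L_{w_k}\ge 2/L_{w_k}-\widehat L_{w_k}/L_{w_k}^2$ together with $\mathbb{E}\widehat L_{w_k}\le L_{w_k}+a/8$ gives $\mathbb{E}\beta_k\ge \tfrac{7}{8C_\beta L_{w_k}}\ge\tfrac{4}{5C_\beta L_{w_k}}$. (The paper instead applies Jensen directly, $\mathbb{E}[1/\widehat L_{w_k}]\ge 1/\mathbb{E}\widehat L_{w_k}$, and absorbs the floor via $L_{w_k}\ge a$; both routes work.)

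The second-moment step, however, has a genuine quantitative gap. The hypotheses permit $\mathrm{Var}(\widehat L_{w_k})\le \tfrac{C_\mathcal{L}^2\sigma^2}{|B_k^\prime|}+\tfrac{C_\mathcal{L}^2\sigma_g^2}{|B_k^\prime||D_{L_k}^i|}< \tfrac{3}{4}a^2+\tfrac{1}{64}a^2=\tfrac{49}{64}a^2$. With your single-threshold split at $\{\widehat L_{w_k}\ge\theta L_{w_k}\}$ and Chebyshev (or even one-sided Cantelli) on the complement, the bad-event contribution is $\tfrac{1}{a^2}\cdot\tfrac{\mathrm{Var}(\widehat L_{w_k})}{(1-\theta)^2L_{w_k}^2}\le\tfrac{49/64}{(1-\theta)^2L_{w_k}^2}$ --- note the $a^2$ cancels, so your case split on $L_{w_k}\lessgtr 2a$ does not help in the critical regime $L_{w_k}\gg a$. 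You are left with $L_{w_k}^2\,\mathbb{E}[1/\widehat L_{w_k}^2]\le \min_{\theta}\big(\theta^{-2}+\tfrac{49}{64}(1-\theta)^{-2}\big)=(1+(49/64)^{1/3})^3\approx 7.0$, which does not reach the stated constant $4$. So your assertion that the sample-size hypotheses are ``exactly the ones needed'' for this route is false; they are calibrated for a sharper tool. The paper instead imports the second-moment inequality from Lemma~4.11 / eq.~(61) of \cite{fallah2020convergence}: for $Z\ge 0$ with mean $\mu$ and variance $\sigma_\beta^2$ and a deterministic floor $c$,
\begin{align*}
\mathbb{E}\Big[\frac{1}{(c+Z)^2}\Big]\;\le\;\frac{\sigma_\beta^2/c^2+\mu^2/(c+\mu)^2}{\sigma_\beta^2+\mu^2},
\end{align*}
a variance-weighted interpolation between the floor value and the value at the mean, which is strictly tighter than the threshold-plus-Chebyshev bound and under the same hypotheses yields $L_{w_k}^2\,\mathbb{E}[1/\widehat L_{w_k}^2]\le \tfrac{25}{8}<4$. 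To close your proof you would either need to invoke this (or an equivalently sharp) moment inequality, or settle for a larger constant and propagate it through Theorem~\ref{th:mainonline}.
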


\begin{proof}
	Let $\widetilde L_{w_k} = 4L +\frac{4C_\mathcal{L}}{(1+\alpha L)^{2N}} \frac{1}{|B_k^\prime|}\sum_{i\in B_k^\prime}\|\nabla l_i(w_k;  D_{L_k}^i)\|$. Note that 
	  $|B_k^\prime| > \frac{4C^2_{\mathcal{L}}\sigma^2}{3(1+\alpha L)^{4N}L^2}$ and $|D_{L_k}^i| > \frac{64\sigma^2_g C_\mathcal{L}^2}{(1+\alpha L)^{4N}L^2}, \,i \in B_k^\prime$. Then, using an approach similar to (61) in \cite{fallah2020convergence} and conditioning on $w_k$, we have 
	\begin{align}\label{1toL}
	\mathbb{E} \Big(  \frac{1}{\widetilde L^2_{w_k} }\Big) \leq \frac{\sigma_\beta^2/(4L)^2 + \mu_\beta^2/(\mu_\beta)^2}{\sigma_\beta^2 + \mu_\beta^2},
	\end{align}
	where $\sigma^2_\beta$ and $\mu_\beta$ are the variance and mean of variable $\frac{4C_\mathcal{L}}{(1+\alpha L)^{2N}} \frac{1}{|B_k^\prime|}\sum_{i\in B_k^\prime}\|\nabla l_i(w_k; D_{L_k}^i)\|$. Using an approach similar to (62) in \cite{fallah2020convergence}, conditioning on $w_k$ and using $|D_{L_k}^i| > \frac{64\sigma^2_g C_\mathcal{L}^2}{(1+\alpha L)^{4N}L^2}$, we have 
	\begin{align}\label{lluu}
	\frac{C_\mathcal{L}}{(1+\alpha L)^{2N}} \mathbb{E}_i\|\nabla l_i(w_k)\| - L \leq \mu_\beta  \leq \frac{C_\mathcal{L}}{(1+\alpha L)^{2N}} \mathbb{E}_i\|\nabla l_i(w_k)\| + L,
	\end{align}
	which implies that $\mu_\beta +5L \geq \frac{4}{(1+\alpha L)^{2N}}L_{w_k}$, and thus using \eqref{1toL} yields
	\begin{align}\label{lopops}
	\frac{16}{(1+\alpha L)^{4N}}L^2_{w_k}	\mathbb{E} \Big(  \frac{1}{\widetilde L^2_{w_k} }\Big) \leq \frac{\mu_\beta^2(25/16+ \sigma_\beta^2/(8L^2))+25\sigma_\beta^2/8}{\sigma_\beta^2 + \mu_\beta^2 }.
	\end{align}
	Furthermore, conditioning on $w_k$,  $\sigma_\beta$ is bounded by 
	\begin{align}\label{signbeta}
	\sigma_\beta^2 =& \frac{16 C^2_\mathcal{L}}{(1+\alpha L)^{4N}|B^\prime_k|} \text{Var} (\|\nabla l_i(w_k; D_{L_k}^i)\|) \nonumber
	\\ \leq & \frac{16 C^2_\mathcal{L}}{(1+\alpha L)^{4N}|B^\prime_k|} \Big(\sigma^2 + \frac{\sigma_g^2}{|D_{L_k}^i|}\Big) \nonumber
	\\\overset{(i)}\leq & \frac{16 C^2_\mathcal{L}\sigma^2}{(1+\alpha L)^{4N}|B^\prime_k|}  + \frac{L^2}{4|B_k^\prime|} \overset{(ii)}\leq  12L^2 + \frac{1}{4}L^2 < \frac{25}{2} L^2,
	\end{align} 
	where (i) follows from $|D_{L_k}^i| > \frac{64\sigma^2_g C_\mathcal{L}^2}{(1+\alpha L)^{4N}L^2}, \,i \in B_k^\prime$ and (ii) follows from $|B_k^\prime| > \frac{4C^2_{\mathcal{L}}\sigma^2}{3(1+\alpha L)^{4N}L^2}$ and $|B_k^\prime| \geq 1$. Then, plugging \eqref{signbeta} in~\eqref{lopops} yields
$	\frac{16}{(1+\alpha L)^{4N}}L^2_{w_k}	\mathbb{E} \Big(  \frac{1}{\widetilde L^2_{w_k} }\Big) \leq \frac{25}{8}.$
	Then, noting that $\beta_k  = \frac{4}{C_\beta (1+\alpha L)^{2N} \widetilde L_{w_k}}$, using the above inequality and conditioning on $w_k$,  we have 
	\begin{align}\label{secondmm}
	\mathbb{E}\beta^2_k =  \frac{16}{C^2_\beta (1+\alpha L)^{4N}}\mathbb{E}  \left(\frac{1}{ \widetilde L^2_{w_k}} \right) \leq \frac{25}{8C_\beta^2} \frac{1}{L^2_{w_k}} < \frac{4}{C_\beta^2} \frac{1}{L^2_{w_k}}.
	\end{align}
	In addition, by Jensen's inequality and conditioning on $w_k$, we have 
	\begin{align}\label{onemo}
	\mathbb{E} \beta_k =&\frac{4}{C_\beta (1+\alpha L)^{2N} } \mathbb{E}\Big(\frac{1}{ \widetilde L_{w_k}}\Big) \geq  \frac{4}{C_\beta (1+\alpha L)^{2N} } \frac{1}{ \mathbb{E}\widetilde L_{w_k}} =  \frac{4}{C_\beta (1+\alpha L)^{2N} } \frac{1}{4L + \mu_\beta} \nonumber
	\\\overset{(i)}\geq &   \frac{4}{C_\beta  } \frac{1}{4L(1+\alpha L)^{2N} +L_{w_k}}  \overset{(ii)}\geq \frac{4}{C_\beta} \frac{1}{5 L_{w_k}}, 
	\end{align}
	where (i) follows from \eqref{lluu} and (ii) follows from the fact $L_{w_k} >  (1+\alpha L)^{2N}L$.
\end{proof}

\section{Auxiliary Lemmas for MAML in Finite-Sum Case}\label{aux:lemma_finite}
In this section, we provide some useful lemmas to prove the propositions in Section~\ref{mainsec:off} on properties of the meta gradient and the main results Theorem~\ref{mainth:offline} and Corollary~\ref{co:mainoffline}.

The following lemma provides an upper bound on $\|l_{S_i}(\widetilde w^i_j)\|$  for all $i\in\mathcal{I}$ and $j=0,..., N$, where $\widetilde w_j^i$ is defined by~\eqref{innerfinite} with $\widetilde w_0^i=w$. 
\begin{lemma}\label{finite:gbd}
	For any $i\in\mathcal{I}$,  $j=0,...,N$ and $w \in \mathbb{R}^d$, we have 
	\begin{align*}
	\|\nabla l_{S_i}(\widetilde w_j^i)\| \leq (1+\alpha L)^j \|\nabla l_{S_i}(w)\|.
	\end{align*}
\end{lemma}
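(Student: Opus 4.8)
The plan is to establish a single-step multiplicative recursion on $\|\nabla l_{S_i}(\widetilde w_m^i)\|$ and then telescope, mirroring exactly the argument used for Lemma~\ref{le:jiw} in the resampling case but with $l_i$ replaced by the support loss $l_{S_i}$. The only properties I need are the $L$-Lipschitz continuity of $\nabla l_{S_i}(\cdot)$ from item~2 of Assumption~\ref{assum:smoothoff} and the inner-stage update rule~\eqref{innerfinite}.

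First I would fix $i\in\mathcal{I}$ and any $m\geq 1$, insert and subtract $\nabla l_{S_i}(\widetilde w_{m-1}^i)$, and apply the triangle inequality to obtain
\begin{align*}
\|\nabla l_{S_i}(\widetilde w_m^i)\| \leq \|\nabla l_{S_i}(\widetilde w_m^i) - \nabla l_{S_i}(\widetilde w_{m-1}^i)\| + \|\nabla l_{S_i}(\widetilde w_{m-1}^i)\|.
\end{align*}
Next I would bound the first term by $L\|\widetilde w_m^i - \widetilde w_{m-1}^i\|$ using the Lipschitz property, and then observe from~\eqref{innerfinite} that $\widetilde w_m^i - \widetilde w_{m-1}^i = -\alpha \nabla l_{S_i}(\widetilde w_{m-1}^i)$, so that $\|\widetilde w_m^i - \widetilde w_{m-1}^i\| = \alpha\|\nabla l_{S_i}(\widetilde w_{m-1}^i)\|$. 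Substituting this yields the clean one-step contraction-type bound
\begin{align*}
\|\nabla l_{S_i}(\widetilde w_m^i)\| \leq (1+\alpha L)\|\nabla l_{S_i}(\widetilde w_{m-1}^i)\|.
\end{align*}

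Finally I would telescope this inequality over $m$ from $1$ to $j$ to get $\|\nabla l_{S_i}(\widetilde w_j^i)\| \leq (1+\alpha L)^j \|\nabla l_{S_i}(\widetilde w_0^i)\|$, and invoke the initialization $\widetilde w_0^i = w$ to conclude. There is no genuine obstacle here: the statement is the finite-sum analog of Lemma~\ref{le:jiw}, and the argument is a direct recursion. The only point meriting a moment's care is ensuring that the Lipschitz constant used is indeed $L$ as defined in Assumption~\ref{assum:smoothoff} (rather than a task-dependent $L_i$ as in the resampling case), but since that assumption already states $\nabla l_{S_i}(\cdot)$ is $L$-Lipschitz uniformly in $i$, the bound holds verbatim for every task.
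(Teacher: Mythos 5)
Your proof is correct and is exactly the argument the paper intends: the paper simply states that the proof is analogous to that of Lemma~\ref{le:jiw}, and your one-step recursion via the triangle inequality, the $L$-Lipschitz property of $\nabla l_{S_i}$, and the update rule~\eqref{innerfinite}, followed by telescoping, is precisely that analogue.
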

\begin{proof}
	The proof is similar to that of Lemma~\ref{le:jiw}, and thus omitted.  
\end{proof}
We next provide a bound on  $\|\nabla l_{T_i}(w) - \nabla \mathcal{L}_i(w) \|$, where $$\nabla \mathcal{L}_i(w) = \prod_{j=0}^{N-1}(I - \alpha \nabla^2 l_{S_i}(w_{j}^i))\nabla l_{T_i}(w_{N}^i).$$
\begin{lemma}\label{tiis}
	For any $i \in\mathcal{I}$ and $w \in \mathbb{R}^d$, we have 
	\begin{align*}
	\|\nabla l_{T_i}(w) - \nabla \mathcal{L}_i(w)\| \leq \big( (1+\alpha L)^N -1  \big)\|\nabla l_{T_i}(w)\| + (1+\alpha L)^N  \big( (1+\alpha L)^N -1  \big) \|\nabla l_{S_i}(w)\|.
	\end{align*}
\end{lemma}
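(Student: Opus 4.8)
The plan is to mirror the argument for Lemma~\ref{le:fF} in the resampling case, but to keep the two distinct losses $l_{S_i}$ (driving the inner gradient descent) and $l_{T_i}$ (defining the outer objective) separate throughout. Write $K(N) := \prod_{j=0}^{N-1}(I - \alpha \nabla^2 l_{S_i}(w_j^i))$ so that $\nabla \mathcal{L}_i(w) = K(N)\nabla l_{T_i}(w_N^i)$. Inserting $\pm \nabla l_{T_i}(w_N^i)$ and applying the triangle inequality gives the clean split
\begin{align*}
\|\nabla l_{T_i}(w) - \nabla \mathcal{L}_i(w)\| \leq \underbrace{\|\nabla l_{T_i}(w) - \nabla l_{T_i}(w_N^i)\|}_{(\mathrm{I})} + \underbrace{\|I - K(N)\|\,\|\nabla l_{T_i}(w_N^i)\|}_{(\mathrm{II})}.
\end{align*}

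Then I would bound the two pieces separately. For term $(\mathrm{I})$, I would use the $L$-Lipschitzness of $\nabla l_{T_i}$ (Assumption~\ref{assum:smoothoff}) to get $(\mathrm{I}) \leq L\|w - w_N^i\|$, and then control $\|w - w_N^i\|$ via the inner update~\eqref{innerfinite}: since $w_N^i = w - \alpha\sum_{j=0}^{N-1}\nabla l_{S_i}(w_j^i)$, the triangle inequality together with Lemma~\ref{finite:gbd} yields $\|w - w_N^i\| \leq \alpha\sum_{j=0}^{N-1}(1+\alpha L)^j\|\nabla l_{S_i}(w)\| = \frac{(1+\alpha L)^N - 1}{L}\|\nabla l_{S_i}(w)\|$ after evaluating the geometric sum. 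Hence $(\mathrm{I}) \leq ((1+\alpha L)^N - 1)\|\nabla l_{S_i}(w)\|$, which supplies part of the $\|\nabla l_{S_i}(w)\|$ coefficient.

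For term $(\mathrm{II})$, I would bound $\|I - K(N)\| \leq (1+\alpha L)^N - 1$ directly by Lemma~\ref{le:prd} (with $m = N-1$ and each $\|\nabla^2 l_{S_i}(\cdot)\| \leq L$), and bound $\|\nabla l_{T_i}(w_N^i)\|$ exactly as in the derivation of~\eqref{lowni} --- again via Lipschitzness of $\nabla l_{T_i}$ and Lemma~\ref{finite:gbd}, but stopping before Assumption~\ref{assum:vaoff} is invoked --- to obtain $\|\nabla l_{T_i}(w_N^i)\| \leq \|\nabla l_{T_i}(w)\| + ((1+\alpha L)^N - 1)\|\nabla l_{S_i}(w)\|$. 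Multiplying these gives $(\mathrm{II}) \leq ((1+\alpha L)^N - 1)\|\nabla l_{T_i}(w)\| + ((1+\alpha L)^N - 1)^2\|\nabla l_{S_i}(w)\|$.

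Adding $(\mathrm{I})$ and $(\mathrm{II})$, the coefficient of $\|\nabla l_{T_i}(w)\|$ is $(1+\alpha L)^N - 1$, and the coefficient of $\|\nabla l_{S_i}(w)\|$ is $((1+\alpha L)^N - 1) + ((1+\alpha L)^N - 1)^2 = (1+\alpha L)^N((1+\alpha L)^N - 1)$, which is exactly the claimed bound. I do not expect a genuine obstacle here; the only care needed is bookkeeping --- keeping $l_{S_i}$ and $l_{T_i}$ apart and routing the geometric factor $\frac{(1+\alpha L)^N - 1}{L}$ correctly --- and resisting the temptation to collapse $\|\nabla l_{S_i}(w)\|$ into $\|\nabla l_{T_i}(w)\| + b_i$ as in Proposition~\ref{finite:lip}, since this lemma is stated in terms of both gradient norms.
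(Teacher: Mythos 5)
Your proof is correct and arrives at exactly the stated constants, but it takes a slightly different route from the paper. The paper first applies the mean value theorem for vector-valued functions to write $\nabla l_{T_i}(\widetilde w_N^i) = \nabla l_{T_i}(w) - \alpha\sum_{t} r_t\nabla^2 l_{T_i}(w_t^\prime)\sum_{j}\nabla l_{S_i}(\widetilde w_j^i)$, and then decomposes $\nabla l_{T_i}(w)-K(N)\nabla l_{T_i}(\widetilde w_N^i)$ so that the factor $I-K(N)$ multiplies $\nabla l_{T_i}(w)$ while the displacement appears as an explicit Hessian-times-sum-of-gradients term; you instead pair $I-K(N)$ with $\nabla l_{T_i}(\widetilde w_N^i)$ and control the displacement term $\|\nabla l_{T_i}(w)-\nabla l_{T_i}(\widetilde w_N^i)\|$ directly by Lipschitz continuity of $\nabla l_{T_i}$. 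Both arguments lean on the same two auxiliary facts (Lemma~\ref{le:prd} for $\|I-K(N)\|\leq (1+\alpha L)^N-1$ and Lemma~\ref{finite:gbd} for $\|\nabla l_{S_i}(\widetilde w_j^i)\|\leq(1+\alpha L)^j\|\nabla l_{S_i}(w)\|$), and the coefficients recombine to the identical bound since $x+x^2=x(1+x)$ with $x=(1+\alpha L)^N-1$. Your version is marginally more elementary in that it dispenses with the MVT entirely; the paper's version keeps the structure parallel to its resampling-case Lemma~\ref{le:fF}, where the MVT expansion is reused. Your closing remark about not collapsing $\|\nabla l_{S_i}(w)\|$ into $\|\nabla l_{T_i}(w)\|+b_i$ is also the right call: that substitution is deferred to Lemma~\ref{twc1c2} and Proposition~\ref{finite:lip}, exactly as in the paper.
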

\begin{proof}
	Using the mean value theorem (MVT), we have, there exist constants $r_t, t=1,...,d$ satisfying $\sum_{t=1}^d r_t =1$ and vectors $w_t^\prime\in\mathbb{R}^d, t=1,...,d$ such that     

	\begin{align*}
	\nabla l_{T_i}(\widetilde w_N^i) = & \nabla  l_{T_i} \Big(w - \alpha \sum_{j=0}^{N-1} \nabla l_{S_i}(\widetilde w_j^i) \Big) = \nabla l_{T_i}(w) + \sum_{t=1}^dr_t \nabla^2  l_{T_i} (w_t^\prime)\Big(  - \alpha \sum_{j=0}^{N-1} \nabla l_{S_i}(\widetilde w_j^i)      \Big) \nonumber
	\\ = &\nabla l_{T_i}(w) - \alpha\sum_{t=1}^dr_t \nabla^2  l_{T_i} (w_t^\prime)\sum_{j=0}^{N-1} \nabla l_{S_i}(\widetilde w_j^i).    
	\end{align*}
	Based on the above equality, we have
	\begin{small}
	\begin{align}
	\|\nabla &l_{T_i}(w) - \nabla \mathcal{L}_i(w)\| \nonumber
	\\=& \Big\|\nabla l_{T_i}(w) - \prod_{j=0}^{N-1}(I - \alpha \nabla^2 l_{S_i}(\widetilde w_{j}^i))\nabla l_{T_i}(\widetilde w_{N}^i)\Big\| \nonumber
	\\ =& \Big\|\nabla l_{T_i}(w) - \prod_{j=0}^{N-1}(I - \alpha \nabla^2 l_{S_i}(\widetilde w_{j}^i))\nabla l_{T_i}(w) + \prod_{j=0}^{N-1}(I - \alpha \nabla^2 l_{S_i}(\widetilde w_{j}^i)) \alpha\sum_{t=1}^dr_t \nabla^2  l_{T_i} (w_t^\prime) \sum_{j=0}^{N-1} \nabla l_{S_i}(\widetilde w_j^i)\Big\| \nonumber 
	\\ =& \Big\|I - \prod_{j=0}^{N-1}(I - \alpha \nabla^2 l_{S_i}(\widetilde w_{j}^i)) \Big\|\|\nabla l_{T_i}(w)\|+ \Big\| \prod_{j=0}^{N-1}(I - \alpha \nabla^2 l_{S_i}(\widetilde w_{j}^i)) \alpha\sum_{t=1}^dr_t \nabla^2  l_{T_i} (w_t^\prime) \sum_{j=0}^{N-1} \nabla l_{S_i}(\widetilde w_j^i)\Big\| \nonumber 
	\\ \overset{(i)}\leq & \big((1+\alpha L)^N -1\big) \|\nabla l_{T_i}(w)\| + \alpha L(1+\alpha L)^N\sum_{j=0}^{N-1} \|\nabla l_{S_i}(\widetilde w_j^i)\| \nonumber
	\\\overset{(ii)} \leq & \big((1+\alpha L)^N -1\big) \|\nabla l_{T_i}(w)\| + \alpha L(1+\alpha L)^N\sum_{j=0}^{N-1} (1+\alpha L)^j\|\nabla l_{S_i}(w)\| \nonumber
	\\ = & \big((1+\alpha L)^N -1\big) \|\nabla l_{T_i}(w)\| + (1+\alpha L)^N \big((1+\alpha L)^N-1\big)\|\nabla l_{S_i}(w)\|, \nonumber
	\end{align}
	\end{small}
	\hspace{-0.12cm}where (i) follows from Lemma~\ref{le:prd} and $\|\sum_{t=1}^dr_t \nabla^2  l_{T_i} (w_t^\prime)\|\leq \sum_{t=1}^dr_t\| \nabla^2  l_{T_i} (w_t^\prime)\|\leq L$, and (ii) follows from Lemma~\ref{finite:gbd}. Then, the proof is complete. 
\end{proof}	
Recall that $\nabla l_T(w) = \mathbb{E}_{i\sim p(\mathcal{T})} \nabla l_{T_i}(w)$, $\nabla \mathcal{L}(w) =  \mathbb{E}_{i\sim p(\mathcal{T})} \nabla \mathcal{L}_i(w)$ and $b = \mathbb{E}_{i\sim p(\mathcal{T})} [b_i]$. 
The following lemma provides an upper bound on $\|\nabla l_T(w)\|$.	
\begin{lemma}\label{twc1c2}
	For any $i \in\mathcal{I}$ and $w \in \mathbb{R}^d$, we have 
	\begin{align}
	\|\nabla l_T(w)\| \leq \frac{1}{C_1} \|\nabla \mathcal{L}(w)\| + \frac{C_2}{C_1},
	\end{align}
	where constants $C_1, C_2>0$ are give by 
	\begin{align}\label{c1c2}
	C_1 =& 2-(1+\alpha L)^{2N}, \nonumber
	\\C_2 =& \big( (1+\alpha L)^{2N}-1  \big)\sigma + (1+\alpha L)^N \big((1+\alpha L)^N -1 \big) b.
	\end{align}
\end{lemma}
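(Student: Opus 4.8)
The plan is to mirror the proof of Lemma~\ref{le:lL} (its resampling-case analog), combining the triangle inequality and Jensen's inequality with the per-task bound of Lemma~\ref{tiis} and the two conditions in Assumption~\ref{assum:vaoff}. First I would start from $\nabla l_T(w) = \mathbb{E}_i \nabla l_{T_i}(w)$, insert $\pm\,\nabla \mathcal{L}_i(w)$ inside the expectation, and apply the triangle inequality together with Jensen's inequality to obtain
\begin{align*}
\|\nabla l_T(w)\| \le \|\nabla \mathcal{L}(w)\| + \mathbb{E}_i\|\nabla l_{T_i}(w) - \nabla \mathcal{L}_i(w)\|.
\end{align*}
Applying Lemma~\ref{tiis} to the last term bounds it by $\big((1+\alpha L)^N-1\big)\mathbb{E}_i\|\nabla l_{T_i}(w)\| + (1+\alpha L)^N\big((1+\alpha L)^N-1\big)\mathbb{E}_i\|\nabla l_{S_i}(w)\|$, so the task is reduced to controlling the two expected per-task gradient norms.

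Next I would convert those expectations into $\|\nabla l_T(w)\|$ using Assumption~\ref{assum:vaoff}. The variance bound gives, via Jensen, $\mathbb{E}_i\|\nabla l_{T_i}(w)-\nabla l_T(w)\| \le \sigma$, hence $\mathbb{E}_i\|\nabla l_{T_i}(w)\| \le \|\nabla l_T(w)\| + \sigma$; and the second condition $\|\nabla l_{S_i}(w)\| \le \|\nabla l_{T_i}(w)\| + b_i$ combined with the previous bound and $b = \mathbb{E}_i[b_i]$ yields $\mathbb{E}_i\|\nabla l_{S_i}(w)\| \le \|\nabla l_T(w)\| + \sigma + b$. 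Substituting both and writing $a := (1+\alpha L)^N$ for brevity, the coefficient multiplying $\|\nabla l_T(w)\|$ on the right-hand side factors as $(a-1) + a(a-1) = (a-1)(1+a) = a^2 - 1 = (1+\alpha L)^{2N}-1$, while the additive constant collects into $(a^2-1)\sigma + a(a-1)b$, which is exactly $C_2$.

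Thus I would arrive at
\begin{align*}
\|\nabla l_T(w)\| \le \|\nabla \mathcal{L}(w)\| + \big((1+\alpha L)^{2N}-1\big)\|\nabla l_T(w)\| + C_2,
\end{align*}
and moving the $\|\nabla l_T(w)\|$ term to the left gives the factor $1-\big((1+\alpha L)^{2N}-1\big) = 2-(1+\alpha L)^{2N} = C_1$, from which dividing by $C_1$ completes the proof. The only genuine subtlety is the rearrangement step: it is valid precisely because $C_1 = 2-(1+\alpha L)^{2N} > 0$, which holds under the standing inner-stepsize condition $\alpha < (2^{1/(2N)}-1)/L$; this is the step to flag, since positivity of $C_1$ is exactly what makes the final division meaningful. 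Everything else is a routine coefficient bookkeeping, with the neat telescoping identity $(a-1)(1+a)=a^2-1$ being the one computation worth verifying carefully.
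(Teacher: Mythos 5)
Your proposal is correct and follows essentially the same route as the paper's proof: the same $\pm\,\nabla \mathcal{L}_i(w)$ decomposition with triangle plus Jensen, the same invocation of Lemma~\ref{tiis} and Assumption~\ref{assum:vaoff}, the identical coefficient bookkeeping $(a-1)(1+a)=a^2-1$, and the final rearrangement by $C_1$. Your explicit remark that the division requires $C_1=2-(1+\alpha L)^{2N}>0$ (i.e., $\alpha<(2^{1/(2N)}-1)/L$) is a valid and worthwhile clarification that the paper leaves implicit in the claim $C_1>0$.
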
	
\begin{proof}
	First note that 
	\begin{align*}
	\|\nabla l_T(w)\| =& \|\mathbb{E}_i (\nabla l_{T_i}(w) -\nabla \mathcal{L}_i(w) ) +\nabla \mathcal{L}(w)\|  \nonumber
	\\ \leq & \|\nabla \mathcal{L}(w)\| + \mathbb{E}_i \| \nabla l_{T_i}(w) -\nabla \mathcal{L}_i(w) \| \nonumber
	\\ \overset{(i)}\leq &  \|\nabla \mathcal{L}(w)\| + \mathbb{E}_i  \Big(  \big( (1+\alpha L)^N -1  \big)\|\nabla l_{T_i}(w)\| + (1+\alpha L)^N  \big( (1+\alpha L)^N -1  \big) \|\nabla l_{S_i}(w)\|  \Big) \nonumber
	\\ \overset{(ii)}\leq &  \|\nabla \mathcal{L}(w)\| +   \big( (1+\alpha L)^N -1  \big)\big( \|\nabla l_{T}(w)\| +\sigma \big) 
	\\&+ (1+\alpha L)^N  \big( (1+\alpha L)^N -1  \big) (\mathbb{E}_i \|\nabla l_{T_i}(w)\| + \mathbb{E}_ib_i) \nonumber
	\\ \leq & \|\nabla \mathcal{L}(w)\| +   \big( (1+\alpha L)^N -1 + (1+\alpha L)^N((1+\alpha L)^N-1) \big) \|\nabla l_{T}(w)\|  \nonumber
	\\ &+ ((1+\alpha L)^N -1)\sigma + (1+\alpha L)^N((1+\alpha L)^N -1)(\sigma + b) \nonumber
	\\ \leq & \|\nabla \mathcal{L}(w)\| +   \big( (1+\alpha L)^{2N} -1 \big) \|\nabla l_{T}(w)\|  
	\\&+ ((1+\alpha L)^{2N} -1)\sigma + (1+\alpha L)^N((1+\alpha L)^N -1)b
	\end{align*}
	where (i) follows from Lemma~\ref{tiis}, (ii) follows from Assumption~\ref{assum:vaoff}. Based on the definitions of $C_1$ and $C_2$ in~\eqref{c1c2}, the proof is complete. 
\end{proof}

The following lemma provides the first- and second-moment bounds on $1/\hat L_{w_k}$, where 
\begin{align*}
\hat L_{w_k} =(1+\alpha L)^{2N}L + C_b b +  C_\mathcal{L}\frac{\sum_{i \in B_k^\prime}\|\nabla l_{T_i}(w_k)\|}{|B_k^\prime|}.
\end{align*}
\begin{lemma}\label{le:betak}
	If the batch size $|B_k^\prime| \geq \frac{2C^2_\mathcal{L}\sigma^2}{( C_b b + (1+\alpha L)^{2N} L)^2}$, then conditioning on $w_k$, we have
	\begin{align*}
	\mathbb{E} \Big( \frac{1}{\hat L_{w_k}}  \Big) \geq \frac{1}{L_{w_k}}, \quad \mathbb{E} \Big( \frac{1}{\hat L^2_{w_k}}  \Big) \leq \frac{2}{L^2_{w_k}} 
	\end{align*}
	where $L_{w_k}$ is given by $$L_{w_k}= (1+\alpha L)^{2N}L + C_b b +  C_\mathcal{L} \mathbb{E}_{i\sim p(\mathcal{T})}\|\nabla l_{T_i}(w_k)\|.$$
\end{lemma}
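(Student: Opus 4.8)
The plan is to treat $\widehat L_{w_k}$ as a nonnegative sample-average estimator with a strictly positive deterministic floor and then control the first two reciprocal moments from its mean and variance. Write $\widehat L_{w_k}=a+Y$, where $a:=(1+\alpha L)^{2N}L+C_b b>0$ is deterministic and $Y:=\frac{C_\mathcal{L}}{|B_k'|}\sum_{i\in B_k'}\|\nabla l_{T_i}(w_k)\|\ge 0$ is the random part. Since $B_k'$ consists of i.i.d. tasks drawn from $p(\mathcal{T})$, we have $\mathbb{E}(Y\mid w_k)=C_\mathcal{L}\,\mathbb{E}_{i}\|\nabla l_{T_i}(w_k)\|$, so that $\mathbb{E}(\widehat L_{w_k}\mid w_k)=L_{w_k}$ and, crucially, $\widehat L_{w_k}\ge a$ almost surely. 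The first inequality is then immediate: since $x\mapsto 1/x$ is convex on $(0,\infty)$, Jensen's inequality gives $\mathbb{E}(1/\widehat L_{w_k}\mid w_k)\ge 1/\mathbb{E}(\widehat L_{w_k}\mid w_k)=1/L_{w_k}$.

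For the second inequality I would first pin down the variance. Because the summands are i.i.d., $\mathrm{Var}(\widehat L_{w_k}\mid w_k)=\mathrm{Var}(Y\mid w_k)=\frac{C_\mathcal{L}^2}{|B_k'|}\,\mathrm{Var}_i\big(\|\nabla l_{T_i}(w_k)\|\big)$. Using $\mathrm{Var}_i(\|\nabla l_{T_i}(w_k)\|)=\mathbb{E}_i\|\nabla l_{T_i}(w_k)\|^2-(\mathbb{E}_i\|\nabla l_{T_i}(w_k)\|)^2\le \mathbb{E}_i\|\nabla l_{T_i}(w_k)\|^2-\|\nabla l_{T}(w_k)\|^2=\mathbb{E}_i\|\nabla l_{T_i}(w_k)-\nabla l_{T}(w_k)\|^2$, which is at most $\sigma^2$ by item~1 of Assumption~\ref{assum:vaoff}, we obtain $\mathrm{Var}(\widehat L_{w_k}\mid w_k)\le C_\mathcal{L}^2\sigma^2/|B_k'|$. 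The batch-size hypothesis $|B_k'|\ge 2C_\mathcal{L}^2\sigma^2/a^2$ then forces $\mathrm{Var}(\widehat L_{w_k}\mid w_k)\le a^2/2$.

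The remaining task reduces to a purely scalar reciprocal-moment estimate: for a random variable $W:=\widehat L_{w_k}$ with floor $W\ge a$, mean $m:=L_{w_k}\ge a$, and variance $\mathrm{Var}(W)\le a^2/2$, show $\mathbb{E}(1/W^2)\le 2/m^2$. This is the analogue of the device invoked via ``(61) in~\cite{fallah2020convergence}'' in Lemma~\ref{le:xiaodege}. My plan is a two-regime argument: when $m\le\sqrt2\,a$ the crude floor bound already suffices, since $1/W^2\le 1/a^2\le 2/m^2$; when $m$ is large, I use a tangent quadratic majorant of $1/W^2$ anchored at $W=m$, namely $1/W^2\le 1/m^2-\tfrac{2}{m^3}(W-m)+c_2(W-m)^2$ valid on $[a,\infty)$, whose curvature coefficient $c_2$ is controlled by the floor $a$; taking expectations kills the linear term and leaves $\mathbb{E}(1/W^2)\le 1/m^2+c_2\,\mathrm{Var}(W)\le 1/m^2+c_2\,a^2/2$, which is $\le 2/m^2$ once $c_2\le 2/(a^2m^2)$.

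The main obstacle is exactly this scalar estimate, and specifically attaining the stated constant $2$ \emph{uniformly} over the full range $m/a\in[1,\infty)$: the floor bound is tight only near $m\approx a$, while the quadratic-majorant bound is tight only for $m$ bounded well away from $a$, leaving an intermediate window $m/a\in(\sqrt2,2)$ that neither crude bound covers with constant $2$. The plan to close this window is to select $c_2$ as the sharp majorant coefficient---the smallest value for which the quadratic touches $1/W^2$ at both $W=m$ and the floor $W=a$, which one computes in closed form---or, equivalently, to identify the worst-case law of $W$ as a two-point distribution supported on $\{a,\cdot\}$ subject to the mean and variance constraints and bound $\mathbb{E}(1/W^2)$ directly; a short verification shows the resulting maximum of $m^2\,\mathbb{E}(1/W^2)$ stays below $2$. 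Assembling the two regimes yields $\mathbb{E}(1/\widehat L_{w_k}^2\mid w_k)\le 2/L_{w_k}^2$, completing the proof.
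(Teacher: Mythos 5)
Your handling of the easy parts coincides with the paper's: the first-moment bound is the same Jensen step, and your variance bound $\text{Var}(\widehat L_{w_k}\mid w_k)\le C_\mathcal{L}^2\sigma^2/|B_k'|$ is exactly what the paper asserts (you in fact justify it more carefully, via $\text{Var}_i(\|\nabla l_{T_i}(w_k)\|)\le\mathbb{E}_i\|\nabla l_{T_i}(w_k)-\nabla l_T(w_k)\|^2\le\sigma^2$). Where you genuinely diverge is the reciprocal second moment: the paper does not prove a scalar lemma at all but imports inequality (61) of \cite{fallah2020convergence}, namely $\mathbb{E}(1/\widehat L_{w_k}^2)\le\big(\sigma_\beta^2/a^2+\mu_\beta^2/(a+\mu_\beta)^2\big)/(\sigma_\beta^2+\mu_\beta^2)$ with $a=(1+\alpha L)^{2N}L+C_bb$, and then finishes with $(x+y)^2\le2x^2+2y^2$ and $2\sigma_\beta^2/a^2\le1$. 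Replacing that citation by a self-contained majorization argument is a legitimate alternative route, but as written it has a real hole.

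The hole is precisely the window $m/a\in(\sqrt2,2)$ that you flag, and neither of your proposed fixes closes it. With $h(W)=1/W^2$ one has $h(W)-h(m)-h'(m)(W-m)=\frac{(W-m)^2(2W+m)}{W^2m^3}$, and $(2W+m)/W^2$ is decreasing, so the sharp majorant coefficient on $[a,\infty)$ is $c_2=\frac{2a+m}{a^2m^3}$; with $\text{Var}(W)\le a^2/2$ this gives $\mathbb{E}(1/W^2)\le\frac{3}{2m^2}+\frac{a}{m^3}$, which is $\le 2/m^2$ if and only if $m\ge2a$. So the ``sharp $c_2$'' route provably fails on the whole window. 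It is also not ``equivalent'' to the two-point worst-case computation: the sharp majorant touches $1/W^2$ at $W=a$ and $W=m$, but no feasible law with mean $m>a$ and positive variance is supported on $\{a,m\}$, so complementary slackness fails and the dual bound is strictly loose there. The two-point computation itself does check out numerically (the ratio $m^2\,\mathbb{E}(1/W^2)$ stays below $2$ on the window), but your argument then rests on the unproven claim that the extremal law is a two-point measure with an atom at the floor; that requires a genuine moment-problem argument (the variance constraint must be active once $m>\sqrt2\,a$, extremal measures under three linear constraints have at most three atoms, and one must then reduce to the two-point case), not a ``short verification.'' Either supply that extremality argument or, as the paper does, invoke the reciprocal-moment inequality of \cite{fallah2020convergence} directly.
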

\begin{proof}
	Conditioning on $w_k$ and using an approach similar to \eqref{1toL}, we have 	
	\begin{align}
	\mathbb{E} \Big( \frac{1}{\hat L^2_{w_k}}  \Big) \leq \frac{\sigma_\beta^2 / \big(  C_b b + (1+\alpha L)^{2N} L  \big)^2 + \mu^2_\beta / (\mu_\beta + C_b b + (1+\alpha L)^{2N} L)^2}{\sigma_\beta^2 + \mu_\beta^2},
	\end{align}
	where $\mu_\beta$ and $\sigma^2_\beta$ are the  mean and variance of variable $\frac{C_\mathcal{L}}{|B_k^\prime|}\sum_{i \in B_k^\prime}\|\nabla l_{T_i}(w_k)\|$. Noting that $\mu_\beta = C_\mathcal{L} \mathbb{E}_{i\sim p(\mathcal{T})}\|\nabla l_{T_i}(w_k)\|$, we have $L_{w_k} = (1+\alpha L)^{2N}L + C_b b +  \mu_\beta$, and thus 
	\begin{align}\label{halfgo}
	L^2_{w_k}\mathbb{E} \Big( \frac{1}{\hat L^2_{w_k}}  \Big) \leq \frac{\sigma_\beta^2\frac{((1+\alpha L)^{2N}L + C_b b +  \mu_\beta)^2}{ \big(  C_b b + (1+\alpha L)^{2N} L  \big)^2 }+ \mu^2_\beta }{\sigma_\beta^2 + \mu_\beta^2} \leq \frac{2\sigma_\beta^2+\mu_\beta^2 + \frac{2\sigma_\beta^2\mu_\beta^2}{ \big(  C_b b + (1+\alpha L)^{2N} L  \big)^2}}{\sigma_\beta^2 + \mu_\beta^2},
	\end{align}
	where the last inequality follows from $(a+b)^2\leq 2a^2+2b^2$.
	Note that, conditioning on $w_k$, 
	\begin{align*}
	\sigma_\beta^2 = \frac{C^2_\mathcal{L} }{|B_k^\prime|}    \text{Var} \|\nabla l_{T_i}(w_k)\| \leq  \frac{C^2_\mathcal{L} }{|B_k^\prime|}   \sigma^2,
	\end{align*}
	which, in conjunction with $|B_k^\prime| \geq \frac{2C^2_\mathcal{L}\sigma^2}{( C_b b + (1+\alpha L)^{2N} L)^2}$, yields
	\begin{align}\label{sigbbs}
	\frac{2\sigma_\beta^2}{ \big(  C_b b + (1+\alpha L)^{2N} L  \big)^2} \leq 1.
	\end{align}
	Combining~\eqref{sigbbs} and~\eqref{halfgo} yields $$\mathbb{E} \Big( \frac{1}{\hat L^2_{w_k}}  \Big) \leq \frac{2}{L^2_{w_k}}. $$ 
	In addition, conditioning on $w_k$, we have
	\begin{align}
	\mathbb{E} \Big( \frac{1}{\hat L_{w_k}}  \Big)  \overset{(i)}\geq\frac{1}{	\mathbb{E}  \hat L_{w_k}}  = \frac{1}{L_{w_k}},
	\end{align}
	where (i) follows from  Jensen's inequality. Then, the proof is complete. 
\end{proof}


\vskip 0.2in

\bibliography{ref}

\end{document}